\definecolor{darkblue}{rgb}{0.0, 0.0, 0.45}
\definecolor{lightblue}{RGB}{240,248,255}
\definecolor{lightblue2}{rgb}{0.68, 0.85, 0.9}
\definecolor{lightcyan}{rgb}{0.88, 1.0, 1.0}
\definecolor{palepink}{rgb}{0.98, 0.85, 0.87}
\date{\today}
\def\@settitle{\begin{center}%
		\baselineskip14\p@\relax
		\normalfont\LARGE\scshape\bfseries
		\@title
	\end{center}%
}
\def\@setauthors{%
  \begingroup
  \def\thanks{\protect\thanks@warning}%
  \trivlist
  \centering\footnotesize \@topsep30\p@\relax
  \advance\@topsep by -\baselineskip
  \item\relax
  \author@andify\authors
  \def\\{\protect\linebreak}%
  \authors%
  \ifx\@empty\contribs
  \else
    ,\penalty-3 \space \@setcontribs
    \@closetoccontribs
  \fi
  \endtrivlist
  \endgroup
}
\def\subsection{\@startsection{subsection}{2}%
	\z@{.5\linespacing\@plus.7\linespacing}{.5\linespacing}%
	{\normalfont\large\bfseries}}
\def\subsubsection{\@startsection{subsubsection}{3}%
	\z@{.5\linespacing\@plus.7\linespacing}{.5\linespacing}%
	{\normalfont\itshape}}
\newcommand{\mmode}[1]{\( #1 \)}
\newcommand{\setx}{\mathcal{X}}
\newcommand{\diameter}{d}
\newtheorem{theorem}{Theorem}[section]
\newtheorem{definition}[theorem]{Definition}
\newtheorem{lemma}[theorem]{Lemma}
\newtheorem{remark}[theorem]{Remark}
\newtheorem{corollary}[theorem]{Corollary}
\newtheorem{proposition}[theorem]{Proposition}
\newtheorem{assumption}[theorem]{Assumption}
\newtheorem{example}[theorem]{Example}
\newcommand{\R}[1]{\mathbb{R}^{#1}}
\renewcommand{\P}{\mathds{P}}
\newcommand{\Q}{\mathds{Q}}
\newcommand{\Pg}{\P_{\stepsize}}
\newcommand{\EE}[1]{\mathds{E}_{#1}}
\newcommand{\Pref}{\widehat{\P}}
\newcommand{\sigmaref}{\widehat{\Sigma}}
\newcommand{\muref}{\widehat{\mu}}
\newcommand{\vref}{V}
\newcommand{\simplex}[1]{\Delta^{#1}}
\newcommand{\amb}{\mathcal{P}}
\newcommand{\ambradius}{\rho}
\newcommand{\Wdist}[1]{\mathsf{WD}_{#1}}
\newcommand{\wamb}[1]{\mathcal{W}_{#1} (\Pref , \rho)}
\newcommand{\erwamb}[1]{\mathcal{W}_c ((\Pref_{#1} , \rho)) }
\newcommand{\alphamin}{\Bar{\alpha}}
\newcommand{\setprtfl}{\setx (\alphamin)}
\newcommand{\eps}{\varepsilon}
\newcommand{\fworacle}[2]{\mathcal{F} (#1, #2)}
\newcommand{\fwgap}{g}
\newcommand{\inner}[2]{\big \langle #1, #2 \big \rangle}
\newcommand{\ra}{\rightarrow}
\newcommand{\pnorm}[2]{\left\lVert#1\right\rVert_{#2}}
\newcommand{\norm}[1]{\pnorm{#1}{}}
\newcommand{\dualnorm}[1]{\pnorm{#1}{*}}
\newcommand{\abs}[1]{\left\lvert #1 \right\rvert}
\newcommand{\inprod}[2]{\left\langle#1 , \, #2\right\rangle}
\newcommand{\symmetric}[1]{\mathbb{S}^{#1 \times #1}}
\newcommand{\lagrangian}{L}
\newcommand{\identity}[1]{\mathbb{I}_{#1}}
\newcommand{\summ}[2]{\sum_{#1}^{#2}}
\newcommand{\define}{\coloneqq}
\newcommand{\opt}{^\ast}
\newcommand{\transp}{^\top}
\renewcommand{\geq}{\geqslant}
\renewcommand{\ge}{\geqslant}
\renewcommand{\le}{\leqslant}
\renewcommand{\leq}{\leqslant}
\newcommand{\symbolspace}[2]{#2 #1 #2}
\renewcommand{\mapsto}{\longmapsto}
\newcommand{\RP}[1]{R \big( #1 \big)}
\newcommand{\RPgrad}[2]{ {\rm d}\RP{#1;#2} }
\newcommand{\VP}[1]{V(#1)}
\newcommand{\Vgrad}[2]{{\rm d}\VP{#1;#2}}
\newcommand{\ERP}[1]{\mathcal{E}(#1)}
\newcommand{\ER}{\mathcal{E}}
\newcommand{\ERgrad}[2]{{\rm d}\ERP{#1;#2}}
\newcommand{\FS}[1]{\RP{#1}}
\newcommand{\FSgrad}[2]{{\rm d}\FS{#1;#2}}
\newcommand{\FP}[2]{F \big( #1, #2 \big)}
\newcommand{\fxp}[2]{F_{#1} (#2)}
\newcommand{\FPgrad}[3]{ {\rm d} F_{#1} (#2;#3)}
\newcommand{\vrisk}[2]{V(#1 , #2)}
\newcommand{\vxpgrad}[3]{{\rm d} V_{#1} (#2;#3)}
\newcommand{\vxpshrt}[2]{V_{#1} (#2)}
\newcommand{\sgmap}[1]{\Sigma_{#1}}
\newcommand{\mup}[1]{\mu_{#1}}
\newcommand{\errisk}[2]{\mathcal{E}(#1 , #2)}
\newcommand{\erxpgrad}[3]{{\rm d} \mathcal{E}_{#1} (#2;#3)}
\newcommand{\erxpshrt}[2]{ \mathcal{E}_{#1} (#2)}
\newcommand{\ximin}[1]{\underline{\xi}_{#1}}
\newcommand{\ximax}[1]{\overline{\xi}_{#1}}
\newcommand{\fxgrad}[3]{\left\langle \nabla_1 \FP{#1}{#2} , \ #3 - #1 \right\rangle}
\newcommand{\risk}{r}
\newcommand{\smooth}{C}
\newcommand{\stepsize}{\gamma}
\newcommand{\khat}{{\widehat{k}}}
\newcommand{\keps}{K(\eps)}
\newcommand{\fwval}{g}
\newcommand{\primaloptval}{F\opt}
\newcommand{\dualoptval}{F_*}
\newcommand{\xk}{x_k}
\newcommand{\pk}{\P_k}
\newcommand{\xeps}{x_{\eps}}
\newcommand{\peps}{\P_{\eps}}
\newcommand{\xopt}{x\opt}
\newcommand{\popt}{\P\opt}
\newcommand{\pg}{\P_{\stepsize}}
\newcommand{\xp}[1]{x(#1)}
\newcommand{\epssaddle}{\big( \xeps , \peps \big)}
\newcommand{\etax}[1]{\eta_{#1}}
\newcommand{\qeta}[1]{q_{#1}}
\newcommand{\boundx}{B_x}
\newcommand{\fapprox}[2]{F_{\eps}(#1, #2)}
\newcommand{\ellipsoid}{\mathcal{E}_{M}}
\DeclareSymbolFont{symbolsC}{U}{pxsyc}{m}{n}
\DeclareMathOperator{\trace}{tr}
\DeclareMathOperator{\sgn}{sgn}
\DeclareMathOperator*{\sbjto}{subject \; to}
\DeclareMathOperator*{\argmin}{argmin}
\DeclareMathOperator*{\argmax}{argmax}
\DeclarePairedDelimiter\ceil{\lceil}{\rceil}
\title[Nonlinear Distributionally Robust Optimization]{Nonlinear Distributionally Robust Optimization}
\author{Mohammed Rayyan Sheriff and Peyman Mohajerin Esfahani}
\thanks{The authors are with the Delft Center for Systems and Control, Delft University of Technology, Delft, The Netherlands. Emails: \href{mailto:mohammed.rayyan.sheriff@gmail.com}{\texttt{mohammed.rayyan.sheriff@gmail.com}},
\href{mailto:P.MohajerinEsfahani@tudelft.nl}{\texttt{P.MohajerinEsfahani@tudelft.nl}}.
The authors acknowledge the fruitful discussions with Armin Eftekhari. This research is supported by the European Research Council (ERC) under the grant TRUST-949796.}
\begin{document}
\maketitle

\begin{abstract}
This article focuses on a class of distributionally robust optimization (DRO) problems where, unlike the growing body of the literature, the objective function is potentially nonlinear in the distribution. Existing methods to optimize nonlinear functions in probability space use the Frechet derivatives, which present both theoretical and computational challenges. Motivated by this, we propose an alternative notion for the derivative and corresponding smoothness based on Gateaux (G)-derivative for generic risk measures. These concepts are explained via three running risk measure examples of variance, entropic risk, and risk on finite support sets. We then propose a G-derivative based Frank-Wolfe~(FW) algorithm for generic nonlinear optimization problems in probability spaces and establish its convergence under the proposed notion of smoothness in a completely norm-independent manner. We use the set-up of the FW algorithm to devise a methodology to compute a saddle point of the nonlinear DRO problem. Finally, we validate our theoretical results on two cases of the {\em entropic} and {\em variance} risk measures in the context of portfolio selection problems. In particular, we analyze their regularity conditions and ``sufficient statistic", compute the respective FW-oracle in various settings, and confirm the theoretical outcomes through numerical validation.

\end{abstract}
\textbf{Keywords.} Gateaux derivative, norm-free-smoothness, Frank-Wolfe algorithm, saddle point
\section{Introduction}
Modern-day decision problems involve uncertainty in the form of a random variable \mmode{\xi} whose behavior is modeled via a probability distribution~\mmode{\P_{o}}. A central object to formalize such decision-making problems under uncertainty is risk measures. The most popular risk measure is arguably the expected loss, yielding the classical decision-making problem of
\begin{equation}
\label{eq:stochastic-program}
\text{Stochastic Program:} \quad \quad
\min_{x \in \setx} \EE{\P_o} \big[ \ell(x , \xi) \big] ,
\end{equation}
where \mmode{\ell} is the loss function of interest, and \mmode{\setx} being the set of feasible decisions. The paradigm of stochastic programming (SP) relies on the assumption that the distribution \mmode{\P_o} is available (or at least up to its sufficient statistics), thereby the expectation can be computed for every decision \mmode{x \in \setx}. A common practical challenge is, however, that the complete information of \mmode{\P_o} may not be available. Moreover, it might also be the case that the distribution is varying over a period of time which could be difficult to characterize. These limitations call for a more conservative risk measure to ameliorate the decision performance in such situations. 

An alternative framework is Robust Optimization (RO) where the decision-maker has only access to the support of uncertainty and takes the most conservative approach:
\begin{equation}
\label{eq:RO-intro}
\text{Robust Optimization:} \quad \quad
    \min_{x \in \setx} \max_{\xi \in \Xi} \ell (x , \xi) .
\end{equation}
For many interesting examples, the RO min-max problem admits tractable reformulations that can be solved efficiently~\cite{bertsimas2011theory}. However, a generic RO problem is known to be computationally formidable as the inner maximization over \mmode{\xi} can be NP-hard. Moreover, if the support \mmode{\Xi} of the distribution is ``large'', the results of RO tend to be extremely conservative. 

\paragraph{{\bf Distributionally Robust Optimization (DRO)}}
The SP and RO decision models represent two extreme cases of having full or bare minimum distributional information, respectively. In practice, however, we often have more information about the ground truth distribution than just its support. A typical example is when we have sample realizations \mmode{\{ \widehat{\xi}_i: i = 1,2,\ldots,N \} }. Looking at such settings through the lens of SP, one may construct a \emph{nominal distribution}~\mmode{\Pref} and use it as a proxy for \mmode{\P_o} in the SP~\eqref{eq:stochastic-program}. A standard data-driven nominal distribution is the discrete distribution~\mmode{\Pref = \sum_{i = 1}^{N} \delta_{\widehat{\xi}_i}}. The SP decision when \mmode{\P_o = \Pref} in~\eqref{eq:stochastic-program} often admits a poor out-of-sample performance on a different dataset, a phenomenon that is also known as the ``optimizer’s curse" or ``overfitting"~\cite{smith2006optimizer}.  On the other hand, the RO viewpoint in~\eqref{eq:RO-intro} completely disregards the statistical information of \mmode{\P_o} available through the dataset \mmode{(\widehat{\xi}_i)_i}, or any other form of prior information.

An attempt to bridge the SP and RO modeling frameworks gives rise to the paradigm of \emph{Distributionally Robust Optimization}~(DRO), which dates back to the Scarf’s seminal work on the ambiguity-averse newsvendor problem in 1958~\cite{ref:scarf1958min}. The ``{ambiguity}'' set \mmode{\amb} is a family of distributions that are close in some sense to the nominal distribution~\mmode{\Pref}, potentially including the true unknown distribution~\mmode{\P_o}. With this in mind, the DRO problem is formulated as
\begin{equation}
\label{LDRO}
\min_{x \in \setx} \sup_{\P \in \amb} \EE{\P} [\ell (x , \xi)] .
\end{equation}
On the one hand, if the ambiguity set is \emph{very big}, possibly including all possible distributions, the DRO problem~\eqref{LDRO} reduces to the RO problem \eqref{eq:RO-intro}. On the other hand, if the ambiguity set~\mmode{\amb} is \emph{very small}, potentially a singleton containing the nominal distribution, then the DRO problem reduces to the SP problem~\eqref{eq:stochastic-program}. In this light, the DRO framework~\eqref{LDRO} provides flexibility for the decision-maker to fill the gap between SP~\eqref{eq:stochastic-program} and RO~\eqref{eq:RO-intro}. The ambiguity set~\mmode{\amb} in \eqref{LDRO} is typically constructed either based on the moments information~\cite{delage2010distributionally,ref:goh2010distributionally,ref:wiesemann2014distributionally}, or a neighborhood of \mmode{\Pref} with respect to a notion of distance over probability distributions, e.g., Prohorov~~\cite{ref:erdougan2006ambiguous}, Kullback-Leiber~\cite{ref:jiang2016data,ref:duchi2021learning}, Wasserstein \cite{ref:jiang2016data,mohajerin2018data,blanchet2022optimal, gao2022finite,gao2022distributionally}, Sinkhorn~\cite{wang2021sinkhorn}, to name but a few; see also the surveys~\cite{rahimian2019distributionally,kuhn2019wasserstein} and the references therein. 

\textit{Linear DRO problems:} An important feature of~\eqref{LDRO} is the linearity of the objective function in the distribution~\mmode{\P}, which is also shared among all the literature mentioned above. The simplicity of this linearity in the inner maximization of~\eqref{LDRO} is the underlying driving force to develop tractable convex reformulations and computational solutions for various combinations of ambiguity sets and cost functions~\cite{postek2016computationally, zymler2013worst, rujeerapaiboon2016robust, shafieezadeh2015distributionally,cai2020distributionally}. This line of research effectively translates the original infinite-dimensional DRO problem~\eqref{LDRO} to a tractable finite-dimensional one; see also the general optimal transport framework of~\cite{ref:zhen2021mathematical}, and the case of mean-covariance risk measure~\cite{ref:nguyen2021mean} in a financial context.

\textit{Nonlinear DRO (NDRO):} Our goal here is to generalize the linear setting~\eqref{LDRO} to 
\begin{equation}
\label{NDRO}
\min_{x \in \setx} \sup_{\P \in \amb} \FP{x}{\P} , 
\end{equation}
where \( F : \setx \times \amb \rightarrow \R{} \) is a generic, possibly nonlinear, function signifying a risk measure. We refer to the decision-making problem~\eqref{NDRO} as \emph{Nonlinear Distributionally Robust Optimization (NDRO)}. There are several interesting NDRO examples including the variance \cite{blanchet2022distributionally}, entropy \cite{song2019multivariate}, \cite[Example 1, p.\,14]{chen2020multivariate}. In this study, instead of focusing only on the outer decision~\mmode{x} in \eqref{NDRO}, we aim to compute an approximate saddle point between the decision-maker and the nature presented by the distribution~$\P$. A motivation supporting this effort is the fact that, unlike the worst-case distribution computed for a given decision, the saddle point distributions (also called Nash equilibria) naturally retain more realistic features~\cite{ref:shafieezadeh2023new}. From a computational perspective, the existing techniques deployed in linear DRO problems cannot be directly extended for the NDRO~\eqref{NDRO}. The objective of this work is to precisely tackle this challenge, where we seek to devise a methodology along with mild regularity conditions under which a saddle point solution exists and can be computed.

\paragraph{{\bf Frank-Wolfe (FW) algorithm}} It is a first-order method that only uses the information of the gradient to solve a constrained optimization problem \cite{jaggi2013revisiting, frank1956algorithm, clarkson2010coresets}. In a nutshell, each iteration of the FW algorithm optimizes the linear function given by the gradient of the objective function and then takes a step towards the optimizer of the linear problem. On the contrary, other first-order methods like projected gradient descent require a quadratic function to be minimized at each iteration. Since solving a linear problem at each iteration is easier than a quadratic one, the iteration complexity of the FW algorithm is much simpler than other first-order methods. This is crucial for optimization problems over probability distributions since the complexity of projecting onto the ambiguity set can be as challenging as solving the original NDRO problem. In comparison, optimizing a cost function that is linear in distributions admits strong duality and tractable finite-dimensional reformulations for many interesting examples as seen in linear DRO. Therefore, we seek to use the principles of the Frank-Wolfe algorithm in the context of NDRO problems and devise an iterative procedure to compute a saddle point of \eqref{NDRO}. 

The FW algorithm for optimization problems over probability distributions has already been introduced in \cite{kent2021frank}, which uses Frechet derivatives and the associated notion of smoothness to establish convergence. We would also like to highlight the work of \cite{liu2021infinite}, where the canonical gradient ascent-descent algorithm \cite{nemirovski2004prox, hamedani2018iteration} for min-max problems is extended to infinite-dimensional spaces involving probability distributions using the Frechet derivatives and its smoothness. In a similar spirit, the recent work of \cite{nguyen2022particle} proposes a mirror-descent algorithm \cite{nemirovskij1983problem} for constrained nonlinear optimization problems over probability distributions using the Frechet derivative. However, Frechet derivatives are difficult to deal with in practice due to several prominent challenges including their (i) existence, (ii) finite representability, and (iii) norm consistency; see Section~\ref{subsection:frechet-derivatives} for more details on this. Our focus in this study is to remedy this by proposing a FW algorithm based on an alternative G-derivative along with a completely norm-independent convergence analysis. 

\paragraph{{\bf Contributions}} The main contributions of this study are summarized as follows: 
\begin{enumerate}[label = \rm{(\roman*)}, itemsep = 0mm, topsep = 0mm, leftmargin = *]

\item {\bf Norm free smoothness in probability spaces.} We propose a novel G-derivative based notion of derivative for nonlinear risk measures~(Definition~\ref{def:directional-derivative}), and the associated notion of smoothness that is independent of the norm structure on the ambiguity set~(Definition~\ref{def:smoothness}). Moreover, we also derive conditions on the function \mmode{\FP{x}{\P}} such that the risk measure \mmode{\P \mapsto \min_{x \in \setx} \ \FP{x}{\P}} is smooth in the sense of the proposed notion~(Lemma~\ref{lemma:NDRO-smoothness}).

\item {\bf G-derivative based Frank-Wolfe algorithm.} We provide a Frank-Wolfe algorithm based on the proposed notion of derivative for optimizing nonlinear risk measures. Moreover, the classical proof techniques for the FW algorithm carry forward under the proposed notion of derivative and smoothness, resulting in apriori (Propositions~\ref{proposition:FW-convergence}) and aposteriori (Proposition~\ref{proposition:FW-gap}) convergence guarantees that exist for finite-dimensional problems.

\item {\bf Saddle point seeking algorithm for NDRO problems.} For the potentially infinite-dimensional min-max problem of NDRO, we propose a FW-based algorithm to compute a saddle point (Algorithm~\ref{algo:NDRO-min-max}), and also quantify its convergence properties~(Theorem \ref{theorem:NDRO-FW}). 

\item {\bf The entropic and variance risk measures.} 
We study the NDRO problem and our proposed algorithm for two cases of the entropic and variance risk measures in Sections~\ref{section:er-risk-portfolio-selection} and \ref{section:min-variance-portfolio-selection}, respectively. An interesting difference between these two examples is that in the context of portfolio selection, the variance preserves its ``sufficient statistic" throughout the FW-algorithm while the entropic risk requires the knowledge of the entire distribution. We establish the required convex regularity conditions (Lemmas~\ref{lemma:er-risk-smoothness-assumptions} and \ref{lemma:variance-smoothness-assumptions}, respectively), and then provide a complete description of the corresponding FW-oracle (Lemmas~\ref{lemma:er-risk-FW-oracle} and \ref{lemma:min-var-ldro-solution-for-m2}, \ref{lemma:FW-oracle-min-variance-ellipsoidal-support}, respectively). Furthermore, for the minimum variance portfolio selection problem, in the special case of \mmode{\Xi = \R{n}} and the type-$2$ Wasserstein ambiguity set, we slightly extend the results of \cite{blanchet2022distributionally} by explicitly describing the saddle point of the minimum variance problem when the feasible portfolio set~\mmode{\setx} is {\em any} arbitrary compact set (Proposition \ref{proposition:min-variance-blanchet}). To facilitate the reproducibility of the numerical results, we also provide the open source repository~\cite{DROpackage} including the respective {\fontfamily{pcr}\selectfont MATLAB} code.
\end{enumerate}

The rest of the paper is organized as follows: In Section \ref{section:worst-case-risk-problem}, we discuss a generic nonlinear optimization problem over distributions with relevant examples. In Section \ref{section:derivatives-and-smoothness}, we introduce the notion of a directional derivative and the associated notion of smoothness. In Section \ref{section:FW-algorithm}, the FW-algorithm and its convergence guarantees are discussed. In Section \ref{section:NDRO}, we introduce the NDRO problem, discuss a solution concept, and provide an FW-based algorithm to compute the solution. To illustrate the methodology (the required assumptions and algorithms), we discuss in detail the NDRO problems for two nonlinear risk measures of entropic and variance, respectively, in Sections \ref{section:er-risk-portfolio-selection} and \ref{section:min-variance-portfolio-selection}. The numerical results are provided for both risk measures in the context of optimal portfolio selection. All of the technical proofs have been relegated towards the end of the article in Section \ref{section:technical-proofs}.

\textbf{Notations.} The set of real-valued \mmode{n\times n} matrices is denoted by \mmode{\R{n \times n}}. The trace of a matrix \mmode{M} is denoted by \mmode{\trace (M)} and being positive semi-definite is denoted by \mmode{M \succeq 0}. For a function \mmode{\risk} defined over a finite-dimensional space, its gradient at \mmode{x} is denoted by \mmode{\nabla \risk (x)}. For a multivariate distribution~$\P$, the first and second moments are denoted by the shorthand notation~$\mu_{\P} := \EE{\P}[\xi]$ and $\Sigma_{\P} := \EE{\P}[\xi\xi{\transp}]$, respectively. Given two probability distributions~\mmode{\P, \Q}, we also use \mmode{\EE{\P - \Q}[\cdot] \define \EE{\P}[\cdot] - \EE{\Q}[\cdot]}. Other notations shall be defined whenever necessary.

\section{Worst-case Nonlinear Risk Measures}
\label{section:worst-case-risk-problem}

In this section, we focus our attention to the potentially infinite-dimensional optimization problem: 
\begin{align}
\label{eq:main-opt}
R\opt \define \sup_{\P \in \amb} \RP{\P} ,
\end{align}
where $R: \amb \ra \R{} $ is a desired, possibly {\em nonlinear}, concave risk measure, and $\amb$ is the ambiguity set containing a family of probability distributions over \mmode{\Xi \subset \R{n}}. Our main goal is to develop a framework with appropriate mathematical notions for a \emph{Franke-Wolfe} (FW) like algorithm to solve problem~\eqref{eq:main-opt} and investigate its convergence properties. Furthermore, since the FW algorithm for \eqref{eq:main-opt} operates in an infinite-dimensional setting, we also seek to derive its tractable finite-dimensional simplification to solve specific instances of \eqref{eq:main-opt}. 

\begin{definition}[Regular risk (RR) measures \& sufficient statistic]
\label{def:RR-measure}
A risk measure \mmode{R} in \eqref{eq:main-opt} is regular if it can be described as \( \RP{\P} = \risk \big( \EE{\P} [L(\xi)] \big) \), for some functions \mmode{L:\Xi \ra \R{m}} that is integrable for all \mmode{\P \in \amb} and \( \risk : \R{m} \ra \R{} \) that is concave and differentiable. We refer to $\EE{\P} [L(\xi)]$ the ``sufficient statistic" as the risk value depends on the distribution~$\P$ only through this vector.
\end{definition}

A particular feature of the regular risks in Definition~\ref{def:RR-measure} is that its value is determined based on a finite-dimensional vector~$\EE{\P} [L(\xi)]$. This is a concept close to ``sufficient statistic", which is particularly appealing when it comes to the computation of the worst-case risk in~\eqref{eq:main-opt}. We will return to this in Section~\ref{section:FW-algorithm} when discussing the FW algorithm in the space of probability distributions.

\begin{example}[RR-examples]
\label{example:RR-measures}\rm{Throughout this study, we discuss three particular examples of the regular risk measures to showcase the concepts and our theoretical statements:
\begin{enumerate}[label = {\rm (\alph*)}, itemsep = 0mm, topsep = 0mm, leftmargin = *]
\item \emph{Variance}: A popular risk measure is the variance associated with the distribution. Formally, considering \mmode{\xi} to be a \mmode{\P}-distributed random variable for \mmode{\P \in \amb}, the associated variance is 
\begin{subequations}
 \label{eq:variance-L-r}   
\begin{equation}
V(\P) \define \EE{\P}\big[\|\xi - \EE{\P}[\xi]\|_2^2\big] .
\end{equation}
Considering the functions \mmode{L : \Xi \rightarrow \R{n\times n} \times \R{n}} and \mmode{\risk : \R{n\times n} \times \R{n} \rightarrow \R{}}
\begin{equation}
\begin{cases}
\begin{aligned}
L (\xi) & = (\xi \xi\transp , \, \xi) \\
\risk (\Sigma, \mu) & = \trace (\Sigma) - \pnorm{\mu}{2}^2 ,
\end{aligned}
\end{cases}
\end{equation}
\end{subequations}
one can observe that the variance is indeed an RR measure, since
\[
V(\P) \ = \ \EE{\P}\big[\|\xi\|_2^2 \big] - \|\EE{\P}[\xi]\|_2^2 \ = \ \trace \left( \EE{\P}[\xi\xi\transp] \right) - \pnorm{\EE{\P} [\xi]}{2}^2 \ = \ \risk \Big( \EE{\P} [L(\xi)] \Big) .
\]

\item \emph{Entropic risk}: Another interesting example of a nonlinear RR measure is the entropic risk of a multivariate distribution \cite[Section 5]{song2019multivariate}. If \mmode{\P \in \amb} is a distribution with marginals \mmode{\P_j} for \mmode{j = 1,2,\ldots,n}, (i.e., the \mmode{j}-th component \mmode{\xi_j} is \mmode{\P_j} distributed). The entropic risk \mmode{\ERP{\P}} associated with the distribution \mmode{\P}  is defined as
\begin{subequations}
\label{eq:er-risk-L-r}
\begin{equation}
\ERP{\P} \define \summ{j = 1}{n}{\frac{1}{\theta_j} \log \left( \EE{\P_j} [e^{-\theta_j \xi_j}] \right) } ,
\end{equation}
where \mmode{(\theta_j)_{j = 1}^n} is a collection of positive real numbers referred to as the \emph{risk-aversion parameters}. This is indeed an RR measure, which can be seen by introducing the functions
\begin{equation}
\begin{cases}
\begin{aligned}
L(\xi) & = \big( e^{-\theta_1 \xi_1} , e^{-\theta_2 \xi_2} , \ldots , e^{-\theta_n \xi_n} \big) \\
\risk (z) & = \summ{j = 1}{n} \frac{1}{\theta_j} \log(z_j) .
\end{aligned}
\end{cases}
\end{equation}
\end{subequations}

\item \emph{Finite-support}: The final example is the case of \mmode{\Xi} being a finite set: \mmode{\Xi = \{\xi_i : i = 1,2,\ldots,N \}}. In this case, the simplex \mmode{\simplex{N}} is the set of all probability distributions on \mmode{\Xi}, and the ambiguity set of distributions is a subset, \mmode{\amb \subset \simplex{N} }. It turns out that \emph{any} arbitrary risk measure \mmode{R} can be characterized as an RR-measure in the sense of Definition \ref{def:RR-measure} by introducing appropriate functions \mmode{r : \R{N} \rightarrow \R{} }, and \mmode{L : \Xi \rightarrow \R{N} }. To see this, we first observe that \mmode{\Xi} being a finite set gives rise to the enumerating bijection \mmode{b : \Xi \rightarrow \{1,2,\ldots,N \} } defined by \mmode{ b(\xi) \define \{ i : \xi = \xi_i \} }; secondly, the matrix \mmode{M \in \R{N \times N}} given by \mmode{[M]_{ij} \define (\nicefrac{j}{N})^i } for \mmode{i,j = 1,2,\ldots,N}, is invertible. Then, considering the functions
\begin{equation}
\label{eq:finite-support-L-r}
\begin{cases}
\begin{aligned}
    L(\xi) & = \big( \left(\nicefrac{b(\xi)}{N} \right), \left(\nicefrac{b(\xi)}{N} \right)^2 , \ldots, \left(\nicefrac{b(\xi)}{N} \right)^N \big)\transp \\
    r(z) &= \RP{M^{-1} z} \ \text{ for } \ z \in M(\amb),
\end{aligned}
\end{cases}
\end{equation}
we have \mmode{\EE{\P}[L(\xi)] = M\cdot\P} (viewing \mmode{\P \in \simplex{N}} as an element of \mmode{\R{N}}), and as such, \mmode{\RP{\P} = r(\EE{\P}[L(\xi)])} for all \mmode{\P \in \amb}.
\end{enumerate}}
\end{example}

\section{Derivatives of Risk Measures}
\label{section:derivatives-and-smoothness}
\begin{wrapfigure}{r}{0.475\linewidth}
    \centering
    \includegraphics[width=0.4\textwidth]{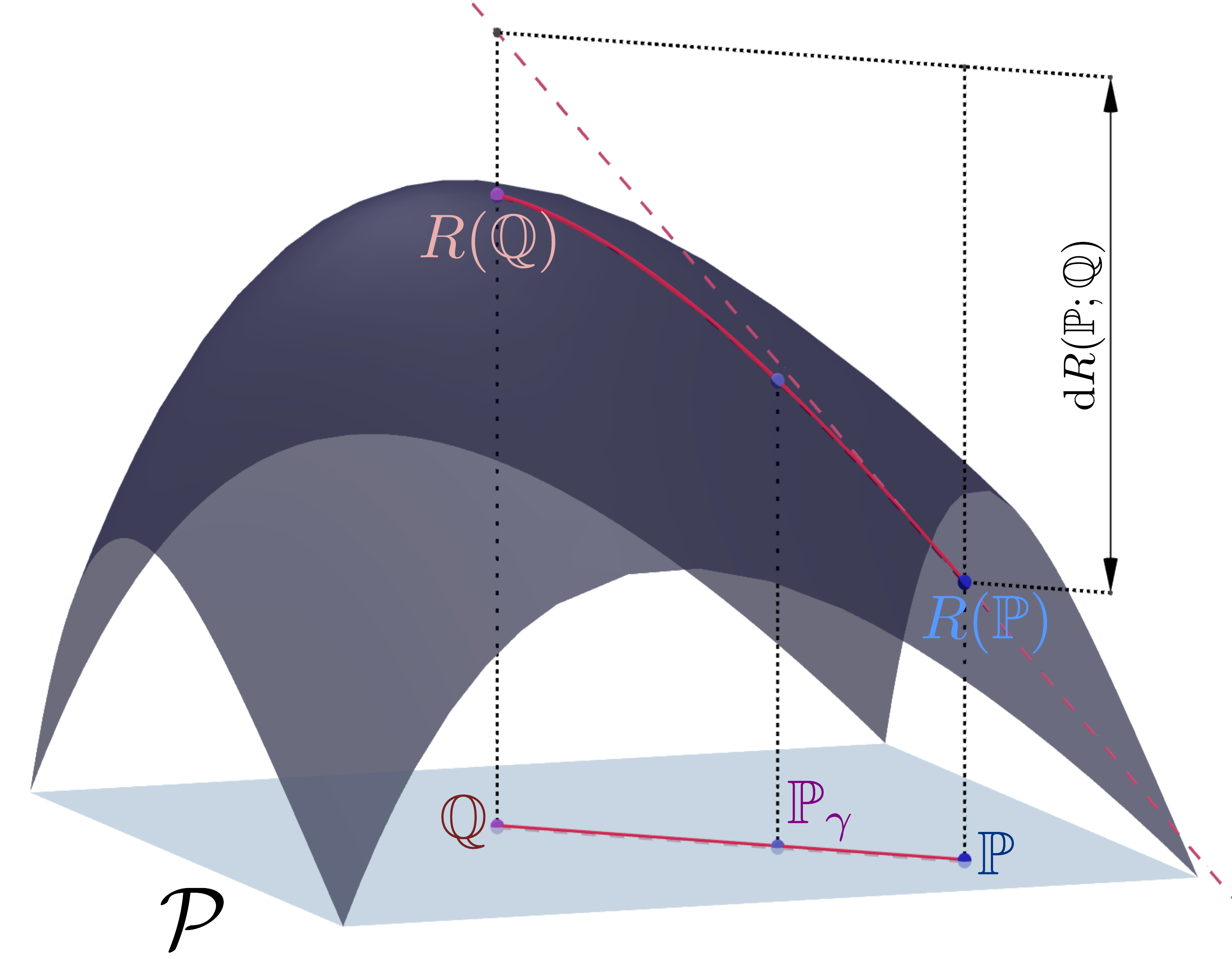}
    \caption{Pictorial representation of the risk surface and its directional derivative.} 
    \label{fig:myfig}
\end{wrapfigure}
A well-defined notion of the gradient is a fundamental quantity in developing iterative algorithms like that of FW to solve any optimization problem. For finite-dimensional convex problems, the FW algorithm optimizes the linear functional given by the gradient of the objective function over the feasible set at each iteration. Naturally, devising a similar algorithm for \eqref{eq:main-opt} requires at least a well-defined notion of \emph{directional derivative} or G-derivative \cite[(A.3), p. 152]{bertsekas1971control}.

\subsection{Gateaux directional derivatives}
\begin{definition}[G-derivative]
\label{def:directional-derivative}
Given \( \P , \Q \in \amb \), the Gateaux (G)-derivative \( \RPgrad{\P}{\Q} \) of the risk measure  \( R \) at \( \P \) in the direction~$\Q$ is defined as
\begin{equation}
\label{eq:derivative-definition}
\RPgrad{\P}{\Q} \define \lim_{\stepsize \downarrow 0} \frac{1}{\stepsize} \Big( \RP{\Pg} - \RP{ \P } \Big), \quad \text{where} \quad \Pg = \P + \stepsize (\Q - \P)\,.
\end{equation}
Whenever the above limit exists, we say that the function \( R \) is \( \Q \)-directionally differentiable at \( \P \). Moreover, we say that the risk measure \( R \) is directionally differentiable on \( \amb \) if it is \( \Q \)-directionally differentiable at every \( \P \in \amb \) and for all \( \Q \in \amb \).
\end{definition}

We note that the G-directional derivative in Definition~\ref{def:directional-derivative} does not rely on any metric underlying the space of probability distributions. This is in fact the main feature with respect to the alternative F-derivative~(Frechet derivative) that will be discussed in Section~\ref{subsection:frechet-derivatives}. Figure~\ref{fig:myfig} visualizes this directional derivative. Next lemma provides an explicit description of Definition~\ref{def:directional-derivative} for RR measures.

\begin{lemma}[Regular G-derivatives]
\label{lemma:RR-directional derivatives}
Suppose that the risk measure is regular, i.e., \mmode{\RP{\P} = \risk (\EE{\P}[L(\xi)] ) } with \mmode{\nabla \risk} denoting the gradient of function \mmode{\risk}. Then, for any \mmode{ \P, \Q \in \amb}, we have
\begin{equation}
\label{eq:RR-directional-derivative}
\RPgrad{\P}{\Q} \symbolspace{=}{\;} \EE{\Q-\P} \left[ \inprod{ \nabla \risk (\EE{\P} [L(\xi)]) }{  L(\xi)} \right] \symbolspace{=}{\;} \inprod{\nabla \risk (\EE{\P} [L(\xi)])}{ \EE{\Q - \P} [L(\xi)] } .
\end{equation}
\end{lemma}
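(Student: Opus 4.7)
The plan is to proceed directly from the definition of the G-derivative in \eqref{eq:derivative-definition} and exploit the fact that the map \mmode{\P \mapsto \EE{\P}[L(\xi)]} is \emph{linear} in \mmode{\P} in the sense of expectation. This linearity pushes the infinite-dimensional perturbation along the line \mmode{\stepsize \mapsto \Pg} into the finite-dimensional range of \mmode{L}, whereupon the claim reduces to the ordinary chain rule for a differentiable function \mmode{\risk : \R{m} \to \R{}}.

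The first step is to rewrite the difference quotient in finite-dimensional terms. For \mmode{\stepsize \in (0,1]} and \mmode{\Pg = \P + \stepsize(\Q - \P)}, linearity of expectation applied to the signed measure \mmode{\Q - \P} gives
$$\EE{\Pg}[L(\xi)] = \EE{\P}[L(\xi)] + \stepsize \, \EE{\Q - \P}[L(\xi)].$$
Denote \mmode{z_0 \define \EE{\P}[L(\xi)]} and \mmode{v \define \EE{\Q - \P}[L(\xi)]}, both well-defined elements of \mmode{\R{m}} by the integrability assumption in Definition~\ref{def:RR-measure}. Then the difference quotient defining \mmode{\RPgrad{\P}{\Q}} collapses to
$$\frac{1}{\stepsize}\big( \RP{\Pg} - \RP{\P} \big) = \frac{1}{\stepsize}\big( \risk(z_0 + \stepsize v) - \risk(z_0) \big).$$

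The second step is to pass to the limit as \mmode{\stepsize \downarrow 0}. Since \mmode{\risk} is differentiable at \mmode{z_0 \in \R{m}}, the standard finite-dimensional chain rule yields
$$\lim_{\stepsize \downarrow 0} \frac{\risk(z_0 + \stepsize v) - \risk(z_0)}{\stepsize} = \inprod{\nabla \risk(z_0)}{v},$$
which upon unpacking \mmode{z_0} and \mmode{v} is precisely the second equality in \eqref{eq:RR-directional-derivative}. The first equality then follows immediately by bilinearity of the inner product: since \mmode{\nabla \risk(z_0)} does not depend on \mmode{\xi}, it may be pulled inside the \mmode{\EE{\Q - \P}[\cdot]} operator.

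There is no substantive technical obstacle; the argument is essentially a one-line application of the finite-dimensional chain rule, wrapped around the linearity of expectation. The only mild point deserving attention is the justification of the linearity step \mmode{\EE{\Pg}[L(\xi)] = \EE{\P}[L(\xi)] + \stepsize \EE{\Q - \P}[L(\xi)]}, which is immediate from \mmode{L}-integrability of both \mmode{\P} and \mmode{\Q}. Notably, no metric or norm on \mmode{\amb} is invoked anywhere in the derivation — this norm-freeness is precisely the feature of the G-derivative that the authors contrast with the Frechet-derivative framework discussed in Section~\ref{subsection:frechet-derivatives}.
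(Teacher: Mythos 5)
Your proposal is correct and follows essentially the same route as the paper's proof: both reduce the difference quotient to a finite-dimensional one via the identity \mmode{\EE{\Pg}[L(\xi)] = \EE{\P}[L(\xi)] + \stepsize\,\EE{\Q-\P}[L(\xi)]} and then invoke differentiability of \mmode{\risk}. If anything, your phrasing of the limit step as the directional derivative of \mmode{\risk} is slightly cleaner than the paper's first-order expansion with its (mislabelled) \mmode{o(\eps^2)} remainder, but the argument is the same.
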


Lemma~\ref{lemma:RR-directional derivatives} indicates that the G-derivative of an RR-measure is essentially characterized by the ``sufficient statistic" vectors \mmode{\EE{\P}[L(\xi)]} and \mmode{\EE{\Q}[L(\xi)]} (cf.\,\eqref{eq:RR-directional-derivative}). Thus, if an algorithm optimizes an RR measure using only their directional derivatives, it then requires tracking the evolution of this finite-dimensional sufficient statistic, allowing them to be implemented efficiently.

\begin{example}[Regular G-derivatives]\rm{
\label{example:RR-directional-derivatives}
The directional derivative of an RR measure is completely characterized in terms of only a few finite-dimensional quantities that depend on the moments of the distribution, and the functions \mmode{\risk } and \mmode{L}.
\begin{enumerate}[label = {\rm (\alph*)}, itemsep = 0mm, topsep = 0mm, leftmargin = *]
\item \emph{Variance}: With the underlying inner product \mmode{\inprod{(\Sigma , \mu)}{(\Sigma' , \mu')} \define \trace (\Sigma\transp\Sigma') + \mu\transp\mu'}, we recall from \eqref{eq:variance-L-r} that \mmode{\nabla \risk (\Sigma, \mu) = \Big( \identity{n} , -2\mu \Big) }. Then the \mmode{\Q}-directional derivative of the variance \mmode{V}, at \mmode{\P} can be calculated from \eqref{eq:RR-directional-derivative} as
\begin{equation}
\label{eq:variance-G-derivative}
	\Vgrad{\P}{\Q} = \trace \big( \Sigma_{\Q} - \Sigma_{\P} \big) - 2 \mu_{\P}\transp (\mu_{\Q} - \mu_{\P}) .
\end{equation}
	
\item \emph{\mmode{\ER}-measure}: With the canonical inner product \mmode{\inprod{z}{\widehat{z}} = z\transp \widehat{z}}, we recall from \eqref{eq:er-risk-L-r} that \mmode{\nabla \risk (z) = \big({(\theta_1 z_1)^{-1}} , {(\theta_2 z_2)^{-1}} , \ldots , {(\theta_n z_n)^{-1}}\big)} at every \mmode{z \in \R{n}_+}. Thus, its \mmode{\Q}-directional derivative at \mmode{\P} is
\begin{equation}
\label{eq:er-risk-G-derivative}
\ERgrad{\P}{\Q} = \summ{j = 1}{n}{ \frac{\EE{\Q_j - \P_j} [e^{-\theta_j \xi_j}] }{ \theta_j \EE{\P_j} [e^{-\theta_j \xi_j}]} }.    
\end{equation}

\item \emph{Finite support}: Recall from \eqref{eq:finite-support-L-r} that \mmode{\EE{\P} [L(\xi)] = M\cdot \P} for all \mmode{ \P \in \amb }. Then with the canonical inner product \mmode{\inprod{z}{z'} = z\transp z'}, we have \mmode{\nabla \risk (z) = (M^{-1})^{\transp} \nabla R (M^{-1}z) } for all \mmode{ z \in M(\amb) }. Substituting these quantities in \eqref{eq:RR-directional-derivative} and simplifying, we get
\begin{equation}
\label{eq:finite-support-G-derivative}
\FSgrad{\P}{\Q} = (\nabla \FS{\P} )\transp (\Q - \P) .
\end{equation}
It is to be observed that the matrix \mmode{M} has no relevance in the G-derivative as one would expect since the risk measure \mmode{R} is defined independent of the matrix \mmode{M}.
\end{enumerate}
}\end{example}

The G-derivative enjoys inherent properties that will be helpful to devise computational solutions. 
\begin{proposition}[G-derivative: properties]
\label{proposition:derivative-properties}
For any \mmode{\P , \Q \in \amb}, and \mmode{\stepsize \in [0,1]}, let \mmode{\P_{\stepsize} = \P + \stepsize (\Q - \P)}, then the directional derivative \mmode{\RPgrad{\P}{\cdot}} in Definition \ref{def:directional-derivative} satisfies
\begin{subequations}
\begin{align}
\label{eq:positive-homogeniety}
\text{\rm{Positively homogeneous: }}& \quad
\RPgrad{\P}{\P_{\stepsize}} \symbolspace{=}{\;} \stepsize \RPgrad{\P}{\Q} \\
\label{eq:concavity-linear-bound}
\text{\rm{Upper bound for concave risk measures: }}& \quad
\RP{\Q} - \RP{\P} \symbolspace{\leq}{\;} \RPgrad{\P}{\Q} .
\end{align}
\end{subequations}
\end{proposition}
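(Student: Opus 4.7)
The plan is to prove the two claims separately by directly unwinding Definition~\ref{def:directional-derivative}. For the positive homogeneity \eqref{eq:positive-homogeniety}, the key observation is that $\P_{\stepsize} - \P = \stepsize(\Q - \P)$ by construction, so the perturbed distribution $\P + \tau(\P_{\stepsize} - \P)$ that appears in the limit defining $\RPgrad{\P}{\P_{\stepsize}}$ coincides exactly with $\P + (\tau\stepsize)(\Q - \P)$. Assuming $\stepsize > 0$ (the case $\stepsize = 0$ is trivial, since both sides vanish), I would perform the change of variables $s = \tau\stepsize$ in the limit: as $\tau \downarrow 0$ we also have $s \downarrow 0$, and after extracting the factor of $\stepsize$ from the denominator, the limit collapses to $\stepsize$ times the limit defining $\RPgrad{\P}{\Q}$, giving the claim.

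For the concavity-based upper bound \eqref{eq:concavity-linear-bound}, I would invoke the definition of concavity of $R$ along the segment from $\P$ to $\Q$: for every $\stepsize \in (0,1]$,
\begin{equation*}
\RP{\P_{\stepsize}} \;=\; \RP{(1-\stepsize)\P + \stepsize \Q} \;\geq\; (1 - \stepsize) \RP{\P} + \stepsize \RP{\Q} .
\end{equation*}
Subtracting $\RP{\P}$ from both sides and dividing by $\stepsize > 0$ yields the inequality $\tfrac{1}{\stepsize}\big(\RP{\P_{\stepsize}} - \RP{\P}\big) \geq \RP{\Q} - \RP{\P}$. Since the right-hand side does not depend on $\stepsize$, taking the limit $\stepsize \downarrow 0$ and recognising the left-hand side as $\RPgrad{\P}{\Q}$ via Definition~\ref{def:directional-derivative} produces the desired bound.

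There is no serious obstacle in either step; the only mildly subtle point is ensuring that the relevant limits actually exist along the two directions in question. This is implicit in the statement of the proposition, which already refers to $\RPgrad{\P}{\cdot}$ along both $\Q$ and $\P_{\stepsize}$, and in the tacit convexity of $\amb$ which guarantees $\P_{\stepsize} \in \amb$ for all $\stepsize \in [0,1]$ so that the G-derivative in that direction is meaningful.
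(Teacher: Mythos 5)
Your proposal is correct and follows essentially the same route as the paper: the change of variables $s=\tau\stepsize$ in the defining limit for positive homogeneity, and the concavity inequality $\RP{\P_\stepsize}\geq(1-\stepsize)\RP{\P}+\stepsize\RP{\Q}$ followed by dividing by $\stepsize$ and passing to the limit for the upper bound. Your explicit remarks on the $\stepsize=0$ case and on the existence of the limits are minor additions the paper leaves implicit, but the substance is identical.
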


\begin{remark}[Optimality conditions]
\label{remark:optimality conditions}
Similar to KKT conditions, F-derivative based first-order optimality conditions for both constrained and unconstrained versions of \eqref{eq:main-opt} are given in \cite[Section 3]{lanzetti2022first}. With G-derivatives, the upper bound~\eqref{eq:concavity-linear-bound} for the concave risk measure \mmode{R} immediately gives sufficient optimality conditions for \eqref{eq:main-opt}. More precisely, if \mmode{\P\opt \in \argmax_{\Q \in \amb} \ \RPgrad{\P\opt}{\Q} }, then applying \eqref{eq:concavity-linear-bound} for any \mmode{\Q \in \amb} yields
\[
\RP{\Q} \symbolspace{\leq}{\;} \RP{\P\opt} + \RPgrad{\P\opt}{\Q} \symbolspace{\leq}{\;} \RP{\P\opt} + \RPgrad{\P\opt}{\P\opt} \symbolspace{=}{\;} \RP{\P\opt} ,
\]
which establishes the optimality \mmode{\P\opt \in \argmax_{\Q \in \amb} \ \RP{\Q}}.
\end{remark}

\subsection{Smoothness}
The proposed approach to solve \eqref{eq:main-opt} builds on the Frank-Wolfe (FW) algorithm for finite-dimensional convex optimization problems \cite{frank1956algorithm}. It is a well-known fact that the primal sub-optimality in FW algorithm converges at a sub-linear rate O(\nicefrac{1}{k}) for optimization of ``smooth'' objective functions over compact feasible sets in finite-dimensional problems. A convex function is said to be smooth if it has Lipschitz continuous gradients with respect to some norm. The choice of norm in an infinite-dimensional setting can be problematic, as all the norms are not equivalent (unlike the finite-dimensional setting). To extend such convergence attributes for a FW like algorithm in the infinite-dimensional setting of \eqref{eq:main-opt}, we first propose an appropriate notion of smoothness in terms of the  directional derivatives (as in Definition \ref{def:directional-derivative}) that is also norm-independent.

\begin{definition}[G-smoothness]
\label{def:smoothness}
The risk measure \( R \) is \mmode{\smooth}-smooth if there exists a constant \mmode{\smooth \geq 0} such that for all \( \P, \Q \in \amb \) and \( \stepsize \in [0, 1] \), we have the inequality
\begin{equation}
\label{eq:G-smoothness}
\RPgrad{\P_{\stepsize}}{\P} + \RPgrad{\P}{\P_{\stepsize}} \symbolspace{\leq}{\;} \stepsize^2 \smooth \quad \text{where \( \P_{\stepsize} = \P + \stepsize \big( \Q - \P \big) \) } .
\end{equation}
\end{definition}

\paragraph{{\bf Connection to the existing notions of smoothness}}
The notion of smoothness in Definition \ref{def:smoothness} is a generalization of the canonical smoothness condition of Lipschitz continuous gradients \cite[Section 2.1.5]{nesterov2003introductory}. Formally, a function \mmode{f : \mathcal{D} \rightarrow \R{}} with a finite-dimensional domain \mmode{(\mathcal{D}, \norm{\cdot})} and gradients \mmode{\nabla f } is said to be \mmode{\beta}-smooth if 
\begin{equation}
\label{eq:canonical-smoothness}
\dualnorm{\nabla f (x) - \nabla f (y)} \symbolspace{\leq}{\;} \beta \norm{x - y} \quad \text{ holds for all \mmode{x,y \in \mathcal{D}}.}
\end{equation}
It is further generalized (or relaxed) by the notion of Holder-smoothness (particularly, \mmode{1}-Holder smooth) where it is required that
\begin{equation}
\label{eq:holder-smoothness}
\inner{\nabla f (x) - \nabla f (y)}{y - x} \symbolspace{\leq}{\;} \beta \norm{x - y}^{2} \quad \text{holds for all \mmode{x,y \in \mathcal{D}}}.
\end{equation}
Furthermore, if the set \mmode{\mathcal{D}} is \mmode{\pnorm{\cdot}{}}-bounded in addition, then the notion of Holder-smoothness \eqref{eq:holder-smoothness} is sufficient to the requirement that there exists some constant \mmode{\smooth \geq 0} such that for every \mmode{x,y \in \mathcal{D}}, \mmode{\gamma \in [0,1]}, the function \mmode{f} satisfies the inequality
\begin{equation}
\label{eq:holder-smoothness-in-stepsize}
\inner{\nabla f (x) - \nabla f (x_{\stepsize})}{x_{\stepsize} - x} \symbolspace{\leq}{\;} \stepsize^2 \smooth \quad \text{ where \mmode{ \ x_{\stepsize} \define x + \stepsize (y - x)}}.
\end{equation}
Expanding the left-hand side of \eqref{eq:holder-smoothness-in-stepsize} as \mmode{\inner{\nabla f (x) - \nabla f (x_{\stepsize})}{x_{\stepsize} - x} = \inner{\nabla f(x)}{x_{\stepsize} - x} + \inner{\nabla f (x_{\stepsize})}{x - x_{\stepsize}} }, we see that the individual terms are simply the directional derivatives of \mmode{f}; Definition \ref{def:smoothness} of the proposed notion of smoothness becomes apparent at once. We also highlight that the notion of smoothness in Definition \ref{def:smoothness} is closely related to the notion of ``curvature coefficient'' used in \cite[(3)]{jaggi2013revisiting}. In fact, it can be easily shown that a function has a finite curvature coefficient if it is smooth in the sense of Definition~\ref{def:smoothness}.

\paragraph{{\bf Concave quadratic lower bound via smoothness}}
\begin{wrapfigure}{r}{0.365\linewidth}
    \centering
    \includegraphics[width=0.3325\textwidth]{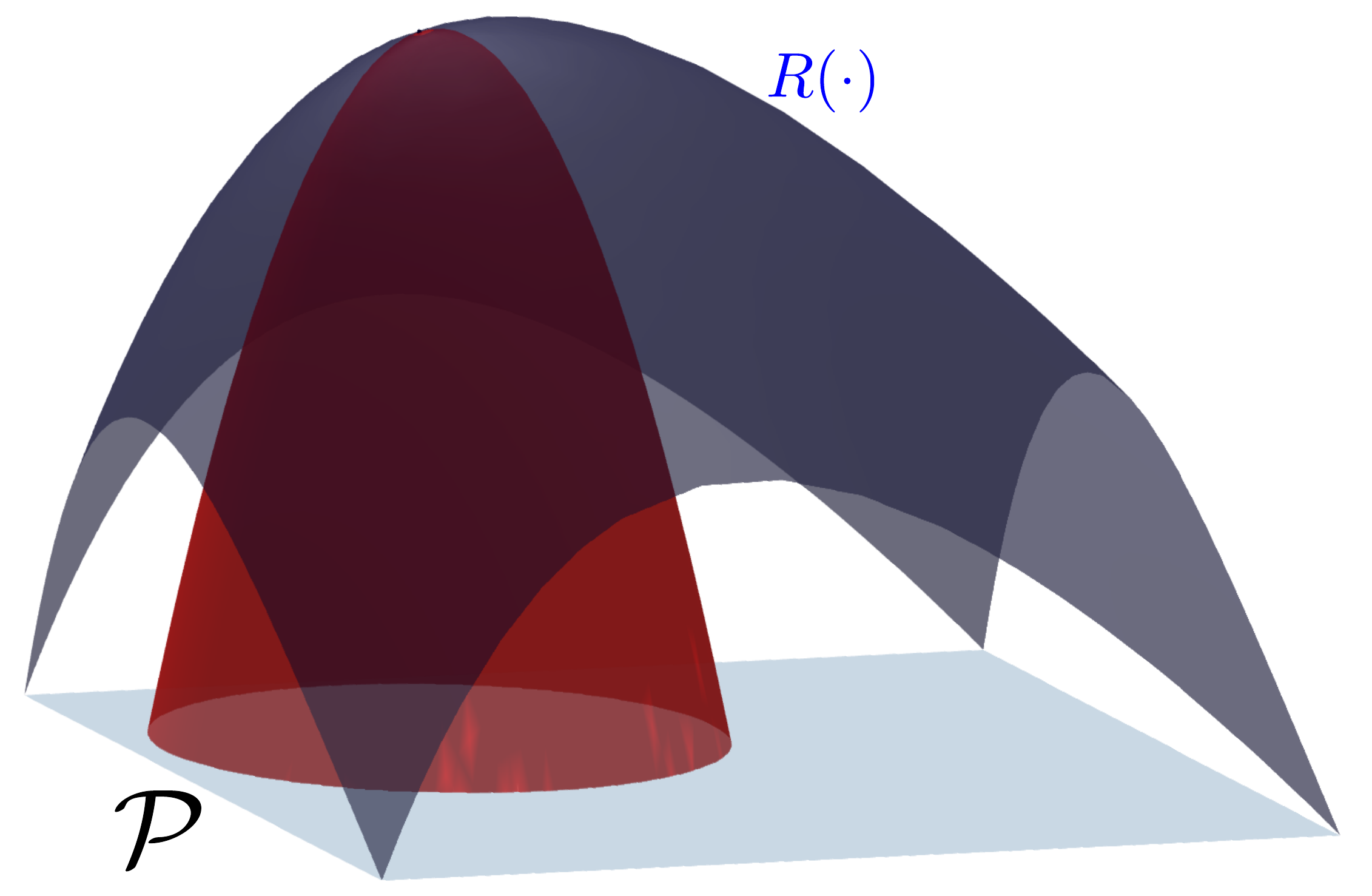}
    \caption{Concave-quadratic lower bound via smoothness.} 
    \label{fig:smoothness}
\end{wrapfigure}
It is well established that in a finite-dimensional setting, smoothness conditions like \eqref{eq:canonical-smoothness} and \eqref{eq:holder-smoothness} give rise to a global convex (resp.\,concave) quadratic upper bound (resp.\,lower bound). For the risk measure \mmode{R} specifically, a sample representation of such a quadratic lower bound is shown in Figure \ref{fig:smoothness}. The existence of such bounds guarantees that the curvature of the function is at most that of the quadratic bounds, which is crucial in concluding the convergence of the FW algorithm. In other words, if a function \mmode{f} is \mmode{\beta}-canonically smooth (as in \eqref{eq:canonical-smoothness}), then the following inequality holds: 
\[
-\frac{\beta}{2} \norm{y - x}^2 \leq f(y) - f(x) - \inprod{\nabla f (x)}{y - x} \leq \frac{\beta}{2} \norm{y - x}^2 \quad \text{for all } x,y \in \mathcal{D}.
\]
The notion of smoothness in Definition \ref{def:smoothness} imposes similar quadratic bounds, but without using any norm. This is done by enforcing ``quadratic-like'' bounds to hold uniformly over all directions. The following Lemma establishes the smoothness of the class of RR measures.

\begin{lemma}[Regular G-smoothness]
\label{lemma:RR-smoothness}
Suppose the RR measure \mmode{\RP{\P} = \risk (\EE{\P}[L(\xi)] ) } satisfies

\begin{enumerate}[label = {\rm (\roman*)}, itemsep = 0mm, topsep = 0mm, leftmargin = *]
    \item bounded diameter in expectations with \mmode{ \diameter \define \sup\limits_{\P, \Q \in \amb} \; \norm{\EE{\P} [L (\xi)] \; - \; \EE{\Q} [L (\xi)]} \symbolspace{<}{\;}   +\infty }, and
    
    \item it admits a smooth \mmode{\risk} in the canonical sense \eqref{eq:canonical-smoothness}, i.e., 
    \[
       \pnorm{\nabla \risk (u) - \nabla \risk (v)}{*} \leq \beta \norm{u - v} \quad \text{for all } u, v \in \{ \EE{\P} [L(\xi)] : \P \in \amb \} .
    \]
\end{enumerate}
Then, it is \mmode{( \beta d^2)}-smooth in the sense of Definition \ref{def:smoothness}.
\end{lemma}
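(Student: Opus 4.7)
The plan is to reduce the claimed G-smoothness of $R$ to the assumed canonical smoothness of $r$ via the explicit formula for the G-derivative from Lemma~\ref{lemma:RR-directional derivatives}. Fix $\P, \Q \in \amb$ and $\stepsize \in [0,1]$, and abbreviate $u_{\P} \define \EE{\P}[L(\xi)]$, $u_{\Q} \define \EE{\Q}[L(\xi)]$. The first step is to observe that by linearity of the expectation, $u_{\P_\stepsize} = u_{\P} + \stepsize (u_{\Q} - u_{\P})$, so in particular $u_{\P} - u_{\P_\stepsize} = -\stepsize (u_{\Q} - u_{\P})$ and $u_{\P_\stepsize} - u_{\P} = \stepsize (u_{\Q} - u_{\P})$.

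The second step is to plug these into the formula from Lemma~\ref{lemma:RR-directional derivatives} to get
\[
\RPgrad{\P_\stepsize}{\P} \;=\; \inprod{\nabla \risk (u_{\P_\stepsize})}{u_{\P} - u_{\P_\stepsize}} \;=\; -\stepsize\, \inprod{\nabla \risk (u_{\P_\stepsize})}{u_{\Q} - u_{\P}} ,
\]
and similarly $\RPgrad{\P}{\P_\stepsize} = \stepsize \inprod{\nabla \risk (u_{\P})}{u_{\Q} - u_{\P}}$. Adding the two terms makes the value of $R$ disappear and yields the clean identity
\[
\RPgrad{\P_\stepsize}{\P} + \RPgrad{\P}{\P_\stepsize} \;=\; \stepsize \, \inprod{\nabla \risk (u_{\P}) - \nabla \risk (u_{\P_\stepsize})}{u_{\Q} - u_{\P}} .
\]

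The third step is to bound the right-hand side via Cauchy--Schwarz / the duality of $\norm{\cdot}$ and $\dualnorm{\cdot}$, and then invoke the assumed $\beta$-smoothness of $\risk$ applied to $u_{\P}, u_{\P_\stepsize}$ (both of which lie in $\{ \EE{\P}[L(\xi)] : \P \in \amb \}$, since $\P_\stepsize \in \amb$ by convexity\,---\,implicitly assumed as $\amb$ is an ambiguity set of distributions):
\[
\stepsize \, \dualnorm{\nabla \risk (u_{\P}) - \nabla \risk (u_{\P_\stepsize})} \cdot \norm{u_{\Q} - u_{\P}} \;\leq\; \stepsize\, \beta \norm{u_{\P} - u_{\P_\stepsize}} \cdot \norm{u_{\Q} - u_{\P}} \;=\; \stepsize^2 \beta \norm{u_{\Q} - u_{\P}}^2 .
\]
The fourth and final step is to use the bounded-diameter hypothesis $\norm{u_{\Q} - u_{\P}} \leq \diameter$ to conclude
\[
\RPgrad{\P_\stepsize}{\P} + \RPgrad{\P}{\P_\stepsize} \;\leq\; \stepsize^2 \beta \diameter^2 ,
\]
which is precisely \eqref{eq:G-smoothness} with $\smooth = \beta \diameter^2$.

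I do not anticipate a genuine obstacle here; the result is essentially a routine application of the chain rule hidden in Lemma~\ref{lemma:RR-directional derivatives} combined with the standard monotonicity-of-gradients bound. The only mild subtlety worth flagging is the need for $\amb$ to be closed under the convex combination $\P_\stepsize$ so that $u_{\P_\stepsize}$ belongs to the set on which $\risk$ is assumed smooth; this is harmless for any ambiguity set used in practice (and is tacit in Definition~\ref{def:smoothness} itself, which also quantifies over $\P_\stepsize$).
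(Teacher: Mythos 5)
Your proposal is correct and follows essentially the same route as the paper: both combine the two G-derivatives via Lemma~\ref{lemma:RR-directional derivatives} into the single inner product \(\inprod{\nabla \risk (\EE{\P}[L(\xi)]) - \nabla \risk (\EE{\P_\stepsize}[L(\xi)])}{\EE{\P_\stepsize}[L(\xi)] - \EE{\P}[L(\xi)]}\), then apply the dual-norm bound, the \(\beta\)-Lipschitz gradient assumption, and the diameter bound. Your remark about \(\amb\) needing to contain \(\P_\stepsize\) is a fair observation that the paper leaves implicit, but it does not affect the argument.
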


\begin{example}[Regular G-smoothness]\rm{
For the RR measures in Examples \ref{example:RR-measures} and \ref{example:RR-directional-derivatives} with the same underlying norms, the smoothness constant \mmode{\beta} of the respective \mmode{\risk}-function is as follows:

\begin{enumerate}[label = {\rm (\alph*)}, itemsep = 0mm, topsep = 0mm, leftmargin = *]
    \item \emph{Variance}: Recalling the function \mmode{\risk} from \eqref{eq:variance-L-r}, and that \mmode{\nabla \risk \big( \Sigma, \mu \big) = \big( \identity{n} , -2\mu  \big)}, we get \mmode{\beta = 2}.

    \item \emph{Entropic risk}: Assume that there exists \mmode{b > 0} such that \mmode{ b \leq \EE{\P_j}[e^{-\theta_j \xi_j}] } for all \mmode{j = 1,2,\ldots,n}, and \mmode{\P = (\P_j)_j \in \amb}, and letting \mmode{ \theta_{\min} \define \min_{j\le n} \theta_j }, we see that the function \mmode{r : [b, +\infty)^n \; \to \R{} } as given in \eqref{eq:er-risk-L-r} is \mmode{\beta}-canonically smooth for \mmode{\beta = (b^2 \theta_{\min})^{-1} }.

    \item \emph{Finite support}: Assuming that the risk measure \mmode{R} has  \mmode{\beta'}-Lipschitz continuous gradients on \mmode{\simplex{N}} with respect to some norm \mmode{\norm{\cdot}}. It follows from \eqref{eq:finite-support-L-r}, that \mmode{\beta = \beta' \pnorm{M^{-1}}{o}^2}, where, \mmode{\pnorm{M^{-1}}{o} = \sup_{\norm{u} \leq 1} \; \norm{M^{-1}u} }. 
\end{enumerate} 
}\end{example}
One of the advantages of the proposed notion of smoothness is that it allows us to employ inequalities and bound sets in a finite-dimensional space to guarantee the smoothness of the risk measures. Since all norms on finite-dimensional spaces are equivalent, establishing that \mmode{\beta, d < +\infty} with respect to the same norm is not restrictive, even though this may give rise to dimension-dependent constants~\mmode{\beta} and \mmode{d}.

\subsection{Frechet derivatives}
\label{subsection:frechet-derivatives}
An important observation to be made is that the \mmode{\Q}-directional derivative \mmode{ \RPgrad{\P}{\Q}} of regular-risk measures is {\em affine} in \mmode{\Q}.\footnote{A function \mmode{f: \mathcal{S} \rightarrow \R{}} is said to affine if \mmode{f(x + \theta(y - x)) = f(x) + \theta (f(y) - f(x))} for every \mmode{x,y \in \mathcal{S}} and \mmode{\theta \in [0,1]}.} In principle, without any further assumptions on the risk measure \mmode{R}, its \mmode{\Q}-directional derivative need not be affine in \mmode{\Q}. Counter examples of functions that have well defined G-derivatives in all directions but that are nonlinear in the direction exist even among functions defined on \mmode{\R{2}}, let alone the infinite-dimensional setting of \mmode{\amb}. A sufficient condition for the directional derivative to be affine in \mmode{\Q} is the existence of a stronger notion of derivative called the \emph{Frechet-derivative} or \emph{F-derivative}. 
\begin{definition}[Frechet-derivative]
The Frechet(F)-derivative of the risk measure \mmode{R} at \mmode{\P \in \amb} associated with  a given norm \mmode{\pnorm{\cdot}{\amb}} on \mmode{\amb}, is a function \mmode{\ell_{\P} : \Xi \rightarrow \R{}} such that the mapping \mmode{\amb \ni \P' \mapsto \EE{\P'} [\ell_{\P}(\xi)] } is continuous w.r.t.\,\mmode{\pnorm{\cdot}{\amb}} and satisfies
\begin{equation}
\label{eq:frechet-derivative}
0 = \lim\limits_{\pnorm{\P' - \P}{\amb} \downarrow 0} \frac{ \RP{\P'} - \RP{\P} - \EE{\P' - \P} [\ell_{\P}(\xi)] }{\pnorm{\P' - \P}{\amb}} .
\end{equation}
\end{definition}

\paragraph{{\bf Smoothness with F-derivatives}} If the F-derivative \mmode{\ell_{\P}} were to exist at every \mmode{\P \in \amb}, the canonical notion of smoothness \eqref{eq:continuous-partial-derivatives} can be naturally extended into the infinite-dimensional setting. The risk measure \mmode{R} is \emph{F-smooth} if there exists some \mmode{\beta \geq 0} such that its F-derivative \mmode{\ell_{\P}} satisfies
\begin{equation}
\label{eq:frechet-smoothness}
\text{F-smoothness: } \quad \pnorm{\ell_{\P} - \ell_{\P'}}{\amb\opt} \symbolspace{\leq}{\;} \beta \pnorm{\P - \P'}{\amb} \quad \text{for all } \P,\P' \in \amb ,
\end{equation}
where \mmode{\pnorm{\cdot}{\amb\opt}} is the dual norm of \mmode{\pnorm{\cdot}{\amb}}. If the risk measure in \eqref{eq:main-opt} is F-smooth, most of the convergence analysis due to smoothness in finite-dimensional convex problems simply carries forward to the infinite-dimensional setting right away. This provides a natural recipe to extend the FW algorithm into the infinite-dimensional setting of probability spaces and establish their convergence under F-smoothness. F-derivative based FW-algorithms in probability spaces have already been studied in the literature \cite{kent2021frank} with a slightly more general notion of smoothness than \eqref{eq:frechet-smoothness}, and for a slightly more general class of risk measures than the usual concavity assumption. Our approach differs from \cite{kent2021frank} in the fact that we only make use of G-derivatives in both the development of the FW-algorithm and also in establishing its convergence based on only G-derivative based regularity conditions, which are simpler to deal with than F-derivatives.

\paragraph{{\bf Comparison with G-derivatives}}
It is not necessary for a function to have F-derivatives even if it has affine directional derivatives in all directions, such counterexamples exist even in a finite-dimensional setting. On the contrary, if the risk measure \mmode{R} has a well-defined F-derivative \mmode{\ell_{\P}}, then it can be shown that its \mmode{\Q}-directional derivatives also exist in all directions \mmode{\Q \in \amb}. This is easily seen by considering \mmode{\P' = \P + \stepsize (\Q - \P ) } for \mmode{\stepsize \in [0,1]} in the definition \eqref{eq:frechet-derivative} and observing that \mmode{\EE{\P' - \P} [\ell_{\P} (\xi)] = \stepsize \EE{\Q - \P}[\ell_{\P} (\xi)] }. When \mmode{\Q \neq \P}, the limit \mmode{\pnorm{\P' - \P}{\amb} \downarrow 0} is achieved if and only if \mmode{\stepsize \downarrow 0}. Then it follows from \eqref{eq:frechet-derivative} that
\[
0 = \lim_{\stepsize \downarrow 0} \; \frac{\RP{\pg} - \RP{\P} - \stepsize \EE{\Q - \P} [\ell_{\P}(\xi)]}{\stepsize \pnorm{\Q - \P}{\amb}} = \frac{1}{\pnorm{\Q - \P}{\amb}} \Big( \RPgrad{\P}{\Q} - \EE{\Q - \P} [\ell_{\P}(\xi)] \Big) .
\]
Since \mmode{\Q \neq \P}, we conclude that the \mmode{\Q}-directional derivative exists and \mmode{\RPgrad{\P}{\Q} \, = \, \EE{\Q - \P} [\ell_{\P}(\xi)]}. Moreover, this equality holds even if \mmode{\Q = \P} since \mmode{ \RPgrad{\P}{\P} = 0 = \EE{\P - \P} [\ell_{\P} (\xi)]}.

The notion of F-derivative relies heavily on the underlying metric structure on \mmode{\amb}, whereas, the notion (Definition \ref{def:directional-derivative}) of G-derivatives is independent of it. To compare, for a G-derivative to exist along a given direction, it is only required for the limit in \eqref{eq:derivative-definition} to exist. However, the existence of an F-derivative requires that the limit in \eqref{eq:derivative-definition} is achieved uniformly over all possible directions.

\paragraph{{\bf Challenges with F-derivatives}} Even though the notion of F-smoothness \eqref{eq:frechet-smoothness} is a natural extension of canonical smoothness \eqref{eq:continuous-partial-derivatives} in an infinite-dimensional setting, working with F-derivatives is potentially challenging due to the following reasons:

\begin{enumerate}[label = {\rm (\roman*)}, itemsep = 0mm, topsep = 0mm, leftmargin = *]
\item \textbf{Existence}: It is a stronger requirement that the limit in \eqref{eq:frechet-smoothness} converges uniformly (w.r.t.\,\mmode{\pnorm{\cdot}{\amb}}) in all directions, which often implies that an F-derivative might not even exist.

\item \textbf{Finite representability}: An F-derivative is a function \mmode{\ell_{\P}: \Xi \rightarrow \R{}} which is an infinite-dimensional object, so apriori, it is not clear as to whether it can be characterized in terms of a few finite-dimensional quantities.

\item \textbf{Norm consistency}: Most importantly, it is often very difficult to establish the smoothness condition \eqref{eq:frechet-smoothness} of the F-derivatives in a specific norm. To elaborate further, we know that the FW algorithm in a finite-dimensional convex problem converges sub-linearly, if the feasible set is bounded and the objective function is smooth. Since all norms on finite-dimensional vector spaces are equivalent, the choices of norms for establishing the smoothness of the objective function, and the boundedness of the feasibility set are irrelevant (even though this could potentially give rise to dimensionally dependent constants). However, since no such equivalence exists between norms on an infinite-dimensional space, it becomes then necessary that the risk measure is F-smooth w.r.t.\,the same norm under which the ambiguity set is bounded, which is a much stronger condition to expect.
\end{enumerate}

We close this discussion by providing an example wherein a risk measure has directional derivatives and is G-smooth, yet its F-derivatives do not exist. In particular, we argue that all RR measures that satisfy conditions of Lemma~\ref{lemma:RR-smoothness} are G-smooth, whereas their F-derivative does not exist if the corresponding~\mmode{L} function (i.e., sufficient statistic) is discontinuous.

\begin{example}[G-derivative vs F-derivative]\label{example:G-F}\rm{
Suppose the support set is~\mmode{\Xi = [-1, +1]}, the ambiguity set~\mmode{\amb} contains all possible distributions supported on~\mmode{\Xi} and it is equipped with a Wasserstein metric. Let the regular risk be defined by the sign function~\mmode{L(\xi) = \sgn (\xi)} (with the convention that \mmode{\sgn (0) = 0}) and \mmode{ r(z) = z}, i.e., \mmode{\RP{\P} = \EE{\P} [\sgn (\xi)]}.

\begin{enumerate}[label = {\rm (\alph*)}, itemsep = 0mm, topsep = 0mm, leftmargin = *]
\item {\em Existence of G-smoothness:} We conclude from Lemma \ref{lemma:RR-directional derivatives} that the G-derivatives of the risk measure exist and are given by \mmode{\RPgrad{\P}{\Q} = \EE{\Q - \P} [\sgn (\xi)]}. Moreover, since the support \mmode{\Xi} is bounded, condition (i) of Lemma \ref{lemma:RR-smoothness} is satisfied with \mmode{d = 2}, and since \mmode{\nabla r(z) = 1} for all \mmode{z \in [-1, +1]}, condition (ii) of Lemma \ref{lemma:RR-smoothness} is also satisfied with \mmode{\beta = 0}. Consequently, the given regular risk is \mmode{0}-smooth in the sense of Definition \ref{def:smoothness}.

\item {\em Non-existence of F-derivatives:} Since \mmode{\RPgrad{\P}{\Q} = \EE{\Q - \P} [\sgn (\xi)]}, if the F-derivative of the risk measure \mmode{R} were to exist, the mapping \mmode{\Q \longmapsto \EE{\Q - \P} [\sgn (\xi)] } must be continuous. However, for the sequence of distributions \mmode{\Q_n (\xi) = \delta (\xi - \nicefrac{1}{n})}, we see that \mmode{ \EE{\Q_n - \P} [\sgn (\xi)] = 1 - \EE{\P} [\sgn (\xi)] } for all \mmode{n = 1,2,,\ldots}, and they converge to the distribution \mmode{\Q = \delta(\xi)} in the Wasserstein metric, for which we have \mmode{ \EE{\Q - \P} [\sgn (\xi)] = - \EE{\P} [\sgn (\xi)] \neq 1 - \EE{\P} [\sgn (\xi)] = \EE{\Q_n - \P} [\sgn (\xi)] }. Therefore, the F-derivative of the risk measure does not exist.
\end{enumerate}}
\end{example}

More sophisticated examples of risk measures and ambiguity sets can be constructed following the same underlying idea of discontinuity of function \mmode{L}. Example~\ref{example:G-F} highlights the relevance of different notions of derivatives and the resulting regularity. The existence of F-derivatives and associated smoothness requires the derivative object to be continuous w.r.t.\,more variations of distributions whereas the notion of G-derivatives and smoothness considers variations only along a line joining any pair of distributions. Luckily, since the FW algorithm operates by taking convex combinations at each iteration, we only require the latter bounds which are considerably less restrictive.

\subsection{G-derivatives in non-flat spaces}
In the following, we consider a slightly general setting wherein the ambiguity set \mmode{\amb} may not be convex, e.g., when \mmode{\amb \subset \mathcal{N}(\R{n})} is the set of $n$ multivariate Gaussian distributions. In such a setting, given two distributions \mmode{\P, \Q \in \amb}, the line joining them  \mmode{\pg = \P + \stepsize (\Q - \P)} is not contained in~\mmode{\amb}. Even though such examples cannot be directly handled in our setting, it turns out that a slight modification of the definition of the derivative allows us to take care of such scenarios. To this end, we assume that the ambiguity set~\mmode{\amb} is equipped with
\begin{enumerate}[label = {\rm (\alph*)}, itemsep = 0mm, topsep = 0mm, leftmargin = *]
\item {\em Metric}: \mmode{d : \amb \times \amb \longrightarrow [0, +\infty[}

\item {\em Geodesics}: For any \mmode{\P , \Q \in \amb}, there exists a parametric curve \mmode{[0,1] \ni \stepsize \longmapsto \P^d_{\stepsize} \in \amb} such that \mmode{\P^d_0 = \P}, \mmode{\P^d_1 = \Q}, and \mmode{d(\pg^d, \P^d_{\stepsize'}) = \abs{ \stepsize - \stepsize' } d(\P, \Q) }.
\end{enumerate}
For example, if \mmode{\amb = \mathcal{N} (\R{n})}, then each distribution is uniquely identified by its first and second moments \mmode{\mu} and \mmode{\Sigma} respectively. Then, for given two distributions \mmode{\P = \mathcal{N} (\mu , \Sigma)} and \mmode{\P' = \mathcal{N} (\mu' , \Sigma')}, we define the metric \mmode{ d (\P , \P') \define \sqrt{ \pnorm{\mu - \mu'}{2}^2 + \pnorm{\Sigma - \Sigma'}{F}^2 } }, and the associated geodesic is \mmode{ \pg^d = \mathcal{N} (\mu_{\stepsize} , \Sigma_{\stepsize}) }, where  \mmode{(\mu_{\stepsize} , \Sigma_{\stepsize}) = (1 - \stepsize) (\mu, \Sigma) + \stepsize (\mu' , \Sigma')}.

{\bf G-derivatives and smoothness.} Given a geodesic structure on \mmode{\amb}, we define the associated G-derivatives by
\[
\RPgrad{\P}{\Q} \define \lim_{\stepsize \downarrow 0} \frac{1}{\stepsize} \left( \RP{\pg^d} - \RP{\P} \right) .
\]
Observe that the above definition is slightly different from that of \eqref{eq:derivative-definition} where the convex combination \mmode{\P_{\stepsize}} of distributions \mmode{\P} and \mmode{\Q} is replaced with its geodesic counterpart. Consequently, we can also define G-smoothness on non-flat spaces in a geodesic sense similar to Definition \ref{def:smoothness}. We say that the risk measure \mmode{R} is \mmode{\smooth}-smooth if there exists some \mmode{\smooth \geq 0} such that the inequality holds
\[
\RPgrad{\P}{\pg^d} + \RPgrad{\pg^d}{\P} \leq \stepsize^2 \smooth , \quad \text{for any \mmode{\P, \Q \in \amb} and \mmode{\stepsize \in [0,1]}.}
\]

{\bf Geodesic concavity}. We can define a notion of concavity using the geodesic structure that is slightly more general than the usual notion. We say that a risk measure \mmode{R} is geodesically concave on \mmode{\amb} if the mapping \mmode{[0,1] \ni \stepsize \longmapsto \RP{\pg^d} } is concave for every \mmode{\P, \Q \in \amb }. It turns out that both of the results from Proposition \ref{proposition:derivative-properties} hold true for derivatives defined in a geodesic sense. This allows us to consider DRO problems like 
\[
\min_{x \in \setx} \sup_{\P \in \mathcal{N}} \EE{\P} [f(x, \xi)] - \lambda G(\P, \Pref) ,
\]
where \mmode{G(\P, \Pref)} is the Gelbrich distance (or any other moment-based distance). We wish to emphasize that the FW algorithm must also be adapted to work with the geodesic derivatives instead.

\section{The Frank-Wolfe Algorithm}
\label{section:FW-algorithm}
Given the notion of G-derivative as in Definition \ref{def:directional-derivative}, the Frank-Wolfe (FW) algorithm for \eqref{eq:main-opt} is an iterative procedure that involves solving the optimization problem
\begin{equation}
\label{eq:main-FW}
\sup_{\Q \in \amb} \; \RPgrad{\P}{\Q} ,
\end{equation}
\begin{wrapfigure}{r}{0.4\linewidth}
    \centering
    \includegraphics[width=0.38\textwidth]{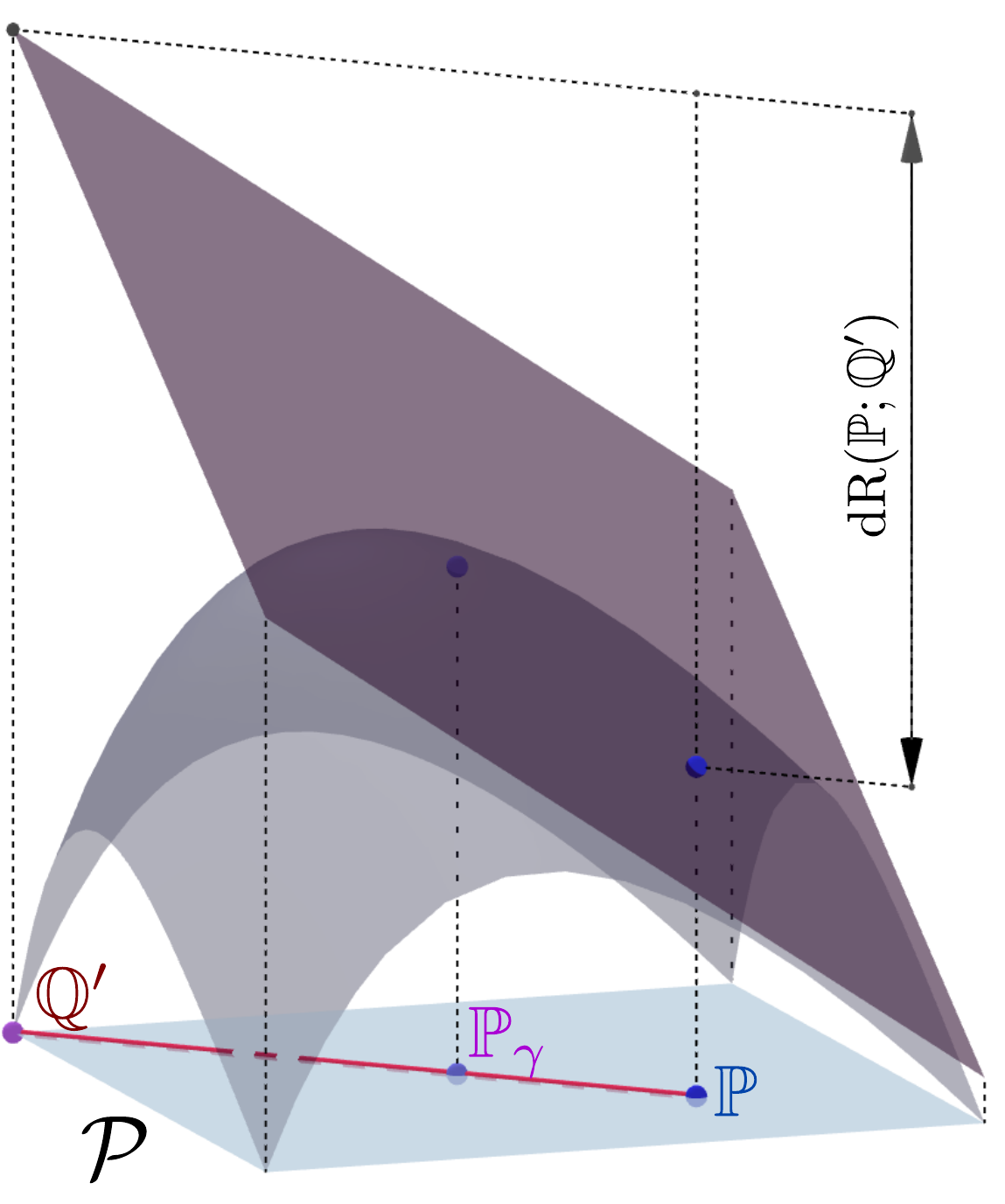}
    \caption{FW-oracle.} 
    \label{fig:single-FW-update}
\end{wrapfigure}
for a given \mmode{\P \in \amb}, at each iteration. The FW-problem \eqref{eq:main-FW} is linear if and only if the G-derivative \mmode{\RPgrad{\P}{\Q}} is affine in \mmode{\Q}, for every \mmode{\P \in \amb}, translating the problem into the linear DRO class in~\eqref{LDRO}. This is indeed the case for many interesting risk measures as seen in Example \ref{example:RR-directional-derivatives}. Moreover, in all finite-dimensional optimization problems, the corresponding FW-problem \eqref{eq:main-FW} is always linear, which need not be the case for a generic infinite-dimensional optimization problem like \eqref{eq:main-opt}.

Implementation of the FW algorithm only requires a well-defined notion of the G-derivative, and as seen in Lemma~\ref{lemma:RR-directional derivatives}, such objects can be computed by means of a few finite-dimensional quantities in several problems of interest. Moreover, even though the \emph{FW-problem}~\eqref{eq:main-FW} is infinite-dimensional in nature, it admits tractable finite-dimensional convex reformulations for many relevant applications similar to linear DROs~\cite{kuhn2019wasserstein}. This is a compelling reason to investigate FW methods for optimization problems over probability distributions, particularly in the case of nonlinear risk measures as in~\eqref{eq:main-opt}.

\subsection{FW oracle}
The FW-oracle is a set-valued mapping \mmode{\mathcal{F} : \amb \times [0,1] \rightarrow 2^\amb } defined as
\begin{equation}
\label{eq:FW-oracle}
\fworacle{\P}{\stepsize} \define \left\{ \Q' \in \amb \text{ such that } \sup_{\Q \in \amb} \; \RPgrad{\P}{\Q} \leq \stepsize \delta \smooth + \RPgrad{\P}{\Q'} \right\}.
\end{equation}
Given the current iterate \mmode{\P \in \amb}, the FW algorithm involves solving the FW-problem \eqref{eq:main-FW} to obtain its approximate solution \mmode{\Q' \in \amb}, then the current iterate \mmode{\P} is updated by moving it towards \mmode{\Q'} as shown in Figure~\ref{fig:single-FW-update}. 

\paragraph{{\bf Additive accuracy of the oracle}} The parameter \mmode{\delta > 0} is an arbitrary positive number signifying the accuracy of the FW oracle, and \mmode{\smooth} is the smoothness constant of the risk measure \mmode{R} as in Definition \ref{def:smoothness}. It must be observed that in an iterative scheme to solve \eqref{eq:main-opt}, it is typical that the stepsize sequence (\mmode{\stepsize_k}) is monotonically decreasing and converges to \mmode{0}. Therefore, it is also required that the FW-oracle solves the sub-problems \eqref{eq:main-FW} up to a greater precision as the iterations progress.

\paragraph{{\bf The Frank-Wolfe-gap}} We refer to the quantity \mmode{\sup_{\Q \in \amb} \; \RPgrad{\P}{\Q}} as the \emph{Frank-Wolfe(FW)-gap} at \mmode{\P}, and it is crucial in defining aposteriori stopping criteria for the FW-algorithm. Along with the distribution \mmode{\Q'}, we assume that the FW-oracle also provides access to the quantity \mmode{ \RPgrad{\P}{\Q'}}, which is an approximate of the FW-gap at \mmode{\P}.

\begin{assumption}[Accessibility of FW oracle]
\label{a:FW trac}
We assume that the FW oracle~\eqref{eq:FW-oracle} is computationally available, i.e., there exists a tractable approach to find a feasible solution from the set~\eqref{eq:FW-oracle}.
\end{assumption}

To develop our algorithm for NDRO problems, we assume the accessibility of the FW oracle in Assumption~\ref{a:FW trac}. However, given a DRO problem at hand, one needs to ensure that the corresponding FW oracle is indeed computationally available. When the directional derivatives are linear in \mmode{\Q}, the corresponding FW problem is a linear worst-case distribution problem, akin to the linear DRO problems~\eqref{LDRO}. The properties of the loss function~$\ell(\xi)$ (suppressing the decision variable~$x$ in~\eqref{LDRO}) under which the respective linear DRO enjoys a tractable reformulation have been extensively studied in the literature. Next, we provide an example of this kind.  

\begin{remark}[Tractable FW oracle]
\label{rem:FW trac}
For the RR measures in Definition~\ref{def:RR-measure}, the FW oracle is a lienar DRO~\eqref{LDRO} with the loss function~$\ell(\xi) = \inprod{ \nabla \risk (\EE{\P} [L(\xi)]) }{  L(\xi)}$ (cf. \eqref{eq:RR-directional-derivative}). If this loss can, for instance, be described as a sum of pointwise maximum of concave functions (i.e., $\ell(\xi) = \sum_{t\le T} \max_{k \le K}\ell_{tk}(\xi)$), we know that the linear DRO~\eqref{LDRO} under the Wasserstein ambiguity set~$\amb$ and the norm transportation cost~$\|\cdot\|$ has a tractable reformulation~\cite[Theorem~6.1]{mohajerin2018data}. In this light, a sufficient condition for the RR measure is when each element of the sufficient statistic vector~$L(\xi)$ constitutes a pointwise maximum of finitely many concave functions and $\nabla r \ge 0$. 
\end{remark}

We note that the sufficient statistics of the variance and entropic risks in Example~\ref{example:RR-measures} do not meet the piecewise concavity condition of Remark~\ref{rem:FW trac}; they are convex quadratic and exponential function, respectively, over the desired support sets. It is interesting to see that in the case of entropic risk, the positivity $\nabla r \ge 0$ is, however, fulfilled (see~Example~\ref{example:RR-directional-derivatives}(b)). Besides the tractability setting of Remark~\ref{rem:FW trac}, the literature includes various other combinations of ambiguity sets and functionals of probability distributions supported on finite support~\cite{postek2016computationally}, the moment-based ambiguity sets~\cite{delage2010distributionally, ref:goh2010distributionally, ref:wiesemann2014distributionally}, and the metric-based ambiguity set~\cite{gao2022wasserstein, kuhn2019wasserstein}, to name but a few.

\begin{lemma}[FW-one-step-bounds]
\label{lemma:FW-one-step-bounds}
Consider \eqref{eq:main-opt} with a risk measure that is \mmode{\smooth}-smooth in the sense of Definition~\ref{def:smoothness} for some \mmode{\smooth \geq 0}, and let \mmode{R\opt} be its optimal value. Let \mmode{\mathcal{F}} be the  corresponding FW-oracle as in \eqref{eq:FW-oracle} with an arbitrary accuracy parameter \mmode{\delta \geq 0}. For any \mmode{\P \in \amb}, \mmode{\stepsize \in [0,1]}, and \mmode{\Q' \in \fworacle{\P}{\stepsize}}, let \mmode{ \P_{\stepsize} = \P + \stepsize (\Q' - \P) } be the one-step-ahead FW update from \mmode{\P} with a stepsize of \mmode{\stepsize}. Then we have
\begin{equation}
\label{eq:fw-one-step-ineq}
R\opt - \RP{\P} \symbolspace{\leq}{\;} \sup_{\Q \in \amb} \; \RPgrad{\P}{\Q} \symbolspace{\leq}{\;} \frac{1}{\stepsize} \Big( \RP{\P_{\stepsize}} - \RP{\P} \Big) + \stepsize \smooth (1+\delta) .
\end{equation}
\end{lemma}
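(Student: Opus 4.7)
The plan is to prove the two inequalities separately and to rely on three ingredients already established: the concavity upper bound \eqref{eq:concavity-linear-bound}, the positive homogeneity~\eqref{eq:positive-homogeniety}, and the $\smooth$-smoothness inequality~\eqref{eq:G-smoothness}, together with the defining inequality of the FW-oracle~\eqref{eq:FW-oracle}.

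\emph{First inequality.} For the leftmost bound $R\opt - \RP{\P} \le \sup_{\Q \in \amb} \RPgrad{\P}{\Q}$, I would simply invoke the concavity-based upper bound \eqref{eq:concavity-linear-bound}: for every $\Q \in \amb$,
\[
\RP{\Q} - \RP{\P} \;\le\; \RPgrad{\P}{\Q}.
\]
Taking the supremum of both sides over $\Q \in \amb$ yields $R\opt - \RP{\P} \le \sup_{\Q \in \amb} \RPgrad{\P}{\Q}$, since the left-hand side reaches $R\opt - \RP{\P}$ as $\Q$ ranges over $\amb$.

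\emph{Second inequality.} For the main bound, I would first use the FW-oracle property~\eqref{eq:FW-oracle} applied to $\Q' \in \fworacle{\P}{\stepsize}$, which gives
\[
\sup_{\Q \in \amb} \RPgrad{\P}{\Q} \;\le\; \stepsize \delta \smooth + \RPgrad{\P}{\Q'}.
\]
The remaining task is therefore to bound $\RPgrad{\P}{\Q'}$ from above by $\frac{1}{\stepsize}(\RP{\P_\stepsize} - \RP{\P}) + \stepsize \smooth$. Assuming $\stepsize > 0$ (the case $\stepsize = 0$ trivialises the bound since the right-hand side is interpreted via the one-step update coinciding with $\P$), I use positive homogeneity~\eqref{eq:positive-homogeniety} to rewrite
\[
\RPgrad{\P}{\Q'} \;=\; \tfrac{1}{\stepsize}\,\RPgrad{\P}{\P_\stepsize}.
\]
Then I would invoke the $\smooth$-smoothness inequality~\eqref{eq:G-smoothness} in the form
\[
\RPgrad{\P}{\P_\stepsize} \;\le\; \stepsize^{2}\smooth \;-\; \RPgrad{\P_\stepsize}{\P},
\]
and finally use the concavity upper bound~\eqref{eq:concavity-linear-bound} reversed at the base point $\P_\stepsize$:
\[
\RP{\P} - \RP{\P_\stepsize} \;\le\; \RPgrad{\P_\stepsize}{\P} \quad\Longrightarrow\quad -\RPgrad{\P_\stepsize}{\P} \;\le\; \RP{\P_\stepsize} - \RP{\P}.
\]
Chaining these yields $\RPgrad{\P}{\P_\stepsize} \le \stepsize^{2}\smooth + \RP{\P_\stepsize} - \RP{\P}$, hence $\RPgrad{\P}{\Q'} \le \stepsize\smooth + \tfrac{1}{\stepsize}(\RP{\P_\stepsize} - \RP{\P})$. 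Substituting back into the oracle inequality gives the claimed bound $\stepsize \smooth(1+\delta) + \tfrac{1}{\stepsize}(\RP{\P_\stepsize} - \RP{\P})$.

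\emph{Main obstacle.} The argument is essentially mechanical once one realises that the two-sided smoothness condition~\eqref{eq:G-smoothness} must be converted to a one-sided bound on $\RPgrad{\P}{\P_\stepsize}$, and that the ``missing'' term $-\RPgrad{\P_\stepsize}{\P}$ is precisely controlled by concavity at the shifted base point $\P_\stepsize$ rather than at $\P$. This is the one non-obvious step; everything else is an application of homogeneity and the oracle definition. No metric, norm, or F-derivative is required, consistent with the norm-free spirit of the paper.
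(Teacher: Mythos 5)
Your proposal is correct and uses exactly the same ingredients as the paper's proof — concavity applied at the shifted base point $\P_{\stepsize}$ in the direction of $\P$, positive homogeneity, the smoothness inequality \eqref{eq:G-smoothness}, and the oracle definition \eqref{eq:FW-oracle} — merely chained in the forward direction (starting from the oracle inequality) rather than the paper's backward rearrangement starting from $\RP{\P} - \RP{\P_{\stepsize}} \le \RPgrad{\P_{\stepsize}}{\P}$. The two arguments are the same proof up to reordering.
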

Rearranging the inequality \eqref{eq:fw-one-step-ineq}, the one-step improvement in sub-optimality is seen to be
\begin{equation}
\label{eq:fw-one-step-sub-optimality}
R\opt - \RP{\P_{\stepsize}} \symbolspace{\leq}{\;} \stepsize^2 \smooth (1 + \delta) + (1 - \stepsize) \big( R\opt - \RP{\P} \big) .
\end{equation}

\subsection{FW convergence guarantees}
The FW-algorithm seeks to solve the optimization problem \eqref{eq:main-opt} by iteratively solving the FW-problem \eqref{eq:main-FW} using a FW-oracle as in \eqref{eq:FW-oracle}. More precisely, given a step size sequence \mmode{(\stepsize_k)_k \subset [0,1] } and a distribution \mmode{\P_0 \in \amb}, the FW-algorithm generates a sequence of distributions \mmode{(\P_k)_k} such that 
\begin{equation}
\label{eq:FW-iterations-main}
\text{FW-algorithm:} \quad
\P_{k+1} = \P_k + \stepsize_k ( \Q_k - \P_k ),\quad \text{where} \quad \Q_k \in \fworacle{\P_k}{\stepsize_k} .
\end{equation}
It must be observed that the implementation of the FW algorithm \eqref{eq:FW-iterations-main} does not depend on the choice of a norm on the ambiguity set \mmode{\amb}. Therefore, it is desirable to have a \emph{norm-independent} analysis of the FW algorithm. To this end, we take inspiration from \cite{jaggi2013revisiting}, which has a similar analysis for the convergence of FW-algorithm for finite-dimensional problems by working with the notion of \emph{curvature co-efficient} instead of the canonical smoothness \eqref{eq:canonical-smoothness} used in \cite{frank1956algorithm}. It turns out that the notion of smoothness defined in Definition \ref{def:smoothness} is also amenable to a similar analysis of the FW-algorithm; with the advantage that \eqref{eq:G-smoothness} is more in-line as a generalized notion of smoothness from \eqref{eq:canonical-smoothness} and \eqref{eq:holder-smoothness}.

\begin{proposition}[Apriori bounds]
\label{proposition:FW-convergence} 
Consider \eqref{eq:main-opt} with a risk measure that is \mmode{\smooth}-smooth in the sense of Definition \ref{def:smoothness} for some \mmode{\smooth \in [0, +\infty)}, and let \mmode{R\opt} be its optimal value. For \mmode{k = 1,2,\ldots,} let \mmode{(\P_k)_k} be the sequence of distributions obtained from the FW-algorithm \eqref{eq:FW-iterations-main} with a step size sequence \mmode{\stepsize_k = \frac{2}{k+2}} and some \mmode{\P_0 \in \amb}. Then
\begin{equation}
\label{eq:fw-primal-convergence}
R \opt - \RP{\P_k} \symbolspace{\leq}{\;} \frac{4\smooth}{k + 2} (1 + \delta) \quad \text{ for all } k \geq 1 .
\end{equation}
\end{proposition}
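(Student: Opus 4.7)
The plan is to establish the bound \eqref{eq:fw-primal-convergence} by induction on $k \geq 1$, with the inductive engine being the one-step sub-optimality recursion \eqref{eq:fw-one-step-sub-optimality} derived from Lemma~\ref{lemma:FW-one-step-bounds}. The chosen step-size \mmode{\stepsize_k = \frac{2}{k+2}} is precisely tuned so that the geometric contraction factor \mmode{1-\stepsize_k = \frac{k}{k+2}} combines cleanly with the quadratic residual term \mmode{\stepsize_k^2 \smooth(1+\delta)}.

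For the base case \mmode{k=1}, I would instantiate Lemma~\ref{lemma:FW-one-step-bounds} at \mmode{\P = \P_0} with \mmode{\stepsize_0 = \frac{2}{0+2} = 1}, so that \mmode{\P_1 = \Q_0 \in \fworacle{\P_0}{1}}. Inequality \eqref{eq:fw-one-step-sub-optimality} then immediately yields
\begin{equation*}
R\opt - \RP{\P_1} \;\leq\; \smooth(1+\delta) + 0 \cdot \big(R\opt - \RP{\P_0}\big) \;=\; \smooth(1+\delta) \;\leq\; \tfrac{4\smooth}{3}(1+\delta),
\end{equation*}
which is precisely the claimed bound at \mmode{k=1}.

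For the inductive step, assume \mmode{R\opt - \RP{\P_k} \leq \frac{4\smooth(1+\delta)}{k+2}}. I would apply \eqref{eq:fw-one-step-sub-optimality} with \mmode{\P = \P_k} and \mmode{\stepsize = \stepsize_k = \frac{2}{k+2}} to get
\begin{equation*}
R\opt - \RP{\P_{k+1}} \;\leq\; \tfrac{4}{(k+2)^2}\smooth(1+\delta) + \tfrac{k}{k+2}\cdot \tfrac{4\smooth(1+\delta)}{k+2} \;=\; \tfrac{4\smooth(1+\delta)(k+1)}{(k+2)^2}.
\end{equation*}
The remaining task is the purely algebraic inequality \mmode{\frac{k+1}{(k+2)^2} \leq \frac{1}{k+3}}, which rearranges to \mmode{(k+1)(k+3) = k^2+4k+3 \leq k^2+4k+4 = (k+2)^2}, and this is true.

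I do not anticipate any genuine obstacle here: Lemma~\ref{lemma:FW-one-step-bounds} has already absorbed all the analytic content (the concave upper bound \eqref{eq:concavity-linear-bound}, the oracle accuracy \mmode{\delta}, and the G-smoothness constant \mmode{\smooth}) into the clean recursion \eqref{eq:fw-one-step-sub-optimality}. The one potentially subtle point is the base case: with \mmode{\stepsize_0 = 1} the telescoping contraction factor vanishes, so no assumption on the initial sub-optimality \mmode{R\opt - \RP{\P_0}} is needed, which is why the statement only asserts the bound for \mmode{k \geq 1}. Everything else reduces to the elementary inequality \mmode{(k+1)(k+3) \leq (k+2)^2}, which is the standard Frank--Wolfe algebraic identity.
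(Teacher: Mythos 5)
Your proof is correct and follows essentially the same route as the paper's: induction on $k$ driven by the one-step recursion \eqref{eq:fw-one-step-sub-optimality}, with the base case handled via $\stepsize_0 = 1$ and the inductive step reduced to $(k+1)(k+3) \leq (k+2)^2$. (You even state this final algebraic inequality in the correct direction, whereas the paper's proof contains a typographical reversal of it.)
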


Regarding the FW algorithm~\eqref{eq:FW-iterations-main}, an important observation to be made is that the ``complexity" of the distribution~$\P_k$ may, in general, increase with each update of $FW$-iteration. To make this more clear, suppose that the distributions of $\P_k$ and $\Q_k$ are both discrete with potentially different support sets. Then, it is straightforward to see that the support of the next iteration~$\P_{k+1}$ is the union of the two supports, and as such, its cardinality increases in each iteration. However, this issue can be avoided in the case of regular risk measures.  

\begin{remark}[Sufficient statistics \& reduced FW update]
\label{rem:suff-stat}
When the risk measure is regular in the sense of Definition~\ref{def:RR-measure} (i.e., $\RP{\P} = \risk \big( \EE{\P} [L(\xi)] \big)$), we recall from \eqref{eq:RR-directional-derivative} that the G-derivative $\RPgrad{\P}{\Q}$ is characterized entirely in terms of the finite-dimensional sufficient statistic vector~$\EE{\P} [L(\xi)]$. The FW algorithm~\eqref{eq:FW-iterations-main} reduces to the finite-dimensional update 
\begin{align}
\label{FW-update-reduced}
\mu_{k+1} = \mu_k + \stepsize_k \big( \nu_k - \mu_k \big), \quad \text{where} \quad \nu_k \in \big\{ \EE{\Q} [L(\xi)] \in \R{m}~:~ \Q \in \fworacle{\P}{\stepsize} \big\}
\end{align}
in which~$\fworacle{\P}{\gamma}$ is the FW-oracle~\eqref{eq:FW-oracle}, and the initial condition is~\mmode{\mu_0 = \EE{\Pref} [L(\xi)]}. When the FW-oracle~\eqref{eq:FW-oracle} is computationally available~(e.g., the tractable cases in Remark~\ref{rem:FW trac}), it suffices to follow the reduced finite-dimensional update rule~\eqref{FW-update-reduced}, instead of the infinite-dimensional update~\eqref{eq:FW-iterations-main}. In other words, the worst-case risk~\eqref{eq:main-opt} can be solved via $\sup_{\P \in \amb} \RP{\P} = \risk(\mu\opt)$, where $\mu\opt$ is the convergent point of the reduced FW-algorithm in~\eqref{FW-update-reduced}. Thus, the finite vector~$\EE{\P} [L(\xi)]$ is indeed ``sufficient" for computing the worst-case risk via the FW algorithm.
\end{remark}
We note that the sufficient statistics in Remark~\eqref{rem:suff-stat} becomes more involved when an additional decision such as $x$ in \eqref{NDRO} must be determined. This shall be addressed in the next section.

\subsection{FW-gap based termination and aposteriori bounds}
Suppose that we know some \mmode{\smooth' \geq 0} that satisfies the smoothness condition \eqref{eq:G-smoothness}. Then for any given \mmode{\eps>0}, if the FW algorithm is run for \mmode{K \geq \ceil*{ \frac{4\smooth' (1 + \delta)}{\eps} } - 2 } iterations with the stepsize sequence \mmode{\stepsize_k = \frac{2}{k + 2}}, then the last iterate \( \P_K \) is guaranteed to be \mmode{\eps} sub-optimal in objective value. Thus, in principle, it suffices to only know some upper bound \mmode{\smooth' \geq \smooth} for the smoothness constant. However, if finding \mmode{\smooth} exactly is challenging, and the known upper bound \mmode{\smooth'} is not tight; the theoretically guaranteed number of iterations required for \mmode{\eps}-sub-optimality may not be practical. In such a setting, it turns out that the optimal value of the FW-problem \eqref{eq:main-FW} called the \emph{FW-gap} provides a good measure to define aposteriori stopping criteria. Moreover, we will also see later for NDRO problems that terminating the FW algorithm when the FW-gap is small provides worst-case performance bounds in the context of DRO problems. We follow the analysis of \cite{jaggi2013revisiting,clarkson2010coresets} by considering the FW algorithm under two regimes of stepsize sequence to obtain provable upper bounds on the FW-gap towards later iterations.

Since the oracle employed to solve the linear minimization sub-problems at each iteration is only accurate to some specified precision, the actual value of the FW-gap \mmode{\sup_{\Q \in \amb} \, \RPgrad{\P}{\Q}} is never known exactly. However, at each iteration \mmode{k}, the FW-oracle does provide its approximate estimate \mmode{ \RPgrad{\P}{\Q_k}}, which satisfies the inequality \mmode{\sup_{\Q \in \amb} \, \RPgrad{\P}{\Q} \leq \delta \stepsize_k \smooth + \RPgrad{\P}{\Q_k}}. Thus, for a given value of \mmode{\eps > 0}, we terminate the FW procedure by examining the quantity \mmode{\RPgrad{\P}{\Q_k}} such that the desired FW-gap inequality: \mmode{\sup_{\Q \in \amb} \, \RPgrad{\P}{\Q} \leq \eps}, is satisfied.

\begin{proposition}[Aposteriori bounds]
\label{proposition:FW-gap}
Consider \eqref{eq:main-opt} with a risk measure that is \mmode{\smooth}-smooth in the sense of Definition \ref{def:smoothness} for some \mmode{\smooth \geq 0}, and let \mmode{R\opt} be its optimal value. Let \mmode{K \geq 1} and let \mmode{(\P_k)_k} be the sequence obtained from the FW-algorithm \eqref{eq:FW-iterations-main} using a diminishing stepsize \mmode{\stepsize_k \define \frac{2}{k + 2}} for \mmode{k = 0,1,\ldots, K - 1}, and then a constant stepsize \mmode{\stepsize_k = \frac{2}{K + 2}} for \mmode{k = K, K+1, \ldots, 2K + 1}. Finally, let \mmode{\fwgap_k =  \RPgrad{\P}{\Q_k} } for \mmode{k = 1,2,\ldots, 2K + 1 } be the sequence of approximate FW-gaps. There exists \mmode{ \khat \in \{K, K+1, \ldots, 2K + 1\}} such that
\begin{equation}
\label{eq:FW-gap-sub-optimality}
R\opt - \RP{\P_{\khat}} \symbolspace{\leq}{\;} \sup_{\Q \in \amb} \RPgrad{\P_\khat}{\Q} \symbolspace{\leq}{\;} \frac{2\smooth (2 + 3 \delta)}{K + 2} ,
\end{equation}
and every such \mmode{\khat} is recognised by verifying the inequality \mmode{\fwgap_{\khat} \leq \frac{4\smooth}{K + 2} (1+\delta)}.    
\end{proposition}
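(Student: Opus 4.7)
The strategy is a two-phase telescoping argument in the spirit of \cite{jaggi2013revisiting,clarkson2010coresets}, finished by a pigeonhole on the \mmode{K+2} indices in the constant-step-size phase. For the first phase \mmode{k = 0, 1, \ldots, K-1}, the schedule \mmode{\stepsize_k = \frac{2}{k+2}} coincides with that of Proposition~\ref{proposition:FW-convergence}, so its a priori estimate applies verbatim and gives
\[
\primalval - \RP{\P_K} \leq \frac{4\smooth(1+\delta)}{K+2} .
\]

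For the second phase, I would derive a one-step inequality that controls the \emph{approximate} FW-gap \mmode{\fwgap_k = \RPgrad{\P_k}{\Q_k}} in terms of the progress \mmode{\RP{\P_{k+1}} - \RP{\P_k}}. Starting from the smoothness property \eqref{eq:G-smoothness}, the concavity bound \eqref{eq:concavity-linear-bound}, and using positive homogeneity \eqref{eq:positive-homogeniety} with \mmode{\P_{k+1} = \P_k + \stepsize_k(\Q_k - \P_k)}, a short rearrangement mirroring the proof of Lemma~\ref{lemma:FW-one-step-bounds} yields
\[
\fwgap_k \leq \frac{1}{\stepsize_k}\big(\RP{\P_{k+1}} - \RP{\P_k}\big) + \stepsize_k \smooth .
\]
Plugging in the constant step-size \mmode{\stepsize_k = \frac{2}{K+2}} and summing over the \mmode{K+2} indices \mmode{k = K, \ldots, 2K+1}, the right-hand side telescopes to
\[
\sum_{k=K}^{2K+1} \fwgap_k \leq \frac{K+2}{2}\big(\RP{\P_{2K+2}} - \RP{\P_K}\big) + 2\smooth \leq \frac{K+2}{2}\cdot\frac{4\smooth(1+\delta)}{K+2} + 2\smooth = 2\smooth(2+\delta) ,
\]
where the second inequality uses \mmode{\RP{\P_{2K+2}} \leq \primalval} together with the phase-one bound.

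The conclusion then follows by a pigeonhole argument: among \mmode{K+2} nonnegative terms whose sum is at most \mmode{2\smooth(2+\delta)}, there exists \mmode{\khat \in \{K, \ldots, 2K+1\}} with \mmode{\fwgap_{\khat} \leq \frac{2\smooth(2+\delta)}{K+2} \leq \frac{4\smooth(1+\delta)}{K+2}}, which is precisely the verifiable stopping test claimed by the proposition. For any such \mmode{\khat}, the additive accuracy of the FW-oracle \eqref{eq:FW-oracle} lifts this bound to the true FW-gap,
\[
\sup_{\Q \in \amb} \RPgrad{\P_{\khat}}{\Q} \leq \stepsize_{\khat} \delta \smooth + \fwgap_{\khat} \leq \frac{2\delta\smooth + 4\smooth(1+\delta)}{K+2} = \frac{2\smooth(2+3\delta)}{K+2} ,
\]
and the leftmost inequality of \eqref{eq:FW-gap-sub-optimality} is the standard concavity bound applied at the optimum, as in Remark~\ref{remark:optimality conditions}. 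The only delicate bookkeeping I anticipate is tracking the \mmode{(1+\delta)} factors when switching between the true FW-gap and its oracle approximation \mmode{\fwgap_k}; every other step is essentially telescoping plus pigeonhole.
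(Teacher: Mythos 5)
Your proof is correct and follows essentially the same two-phase Jaggi--Clarkson analysis as the paper: the a priori bound at iteration $K$, the one-step inequality from Lemma~\ref{lemma:FW-one-step-bounds} telescoped over the $K+2$ constant-step iterations, and the oracle-accuracy lift from $\fwgap_{\khat}$ to the true FW-gap; the paper merely phrases the existence step as a proof by contradiction on the true sup-gaps, whereas you run the contrapositive as a direct averaging argument on the reported gaps $\fwgap_k$. One small inaccuracy: the $\fwgap_k$ need not be nonnegative (the additive oracle only guarantees $\fwgap_k \geq -\delta \stepsize_k \smooth$), but your pigeonhole step survives because the minimum of a finite collection is always at most its average, so the conclusion is unaffected.
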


\begin{remark}[Explicit error bounds]
\label{remark:FW-eps-stopping}
Consider the setting of Proposition \ref{proposition:FW-gap} with \mmode{K = \keps \define \ceil*{\frac{2\smooth}{\eps} (2 + 3\delta)} - 2} for an \mmode{\eps > 0}. Then, there exists \mmode{\khat \in \{ \keps, \keps + 1, \ldots, 2\keps + 1 \} } such that
\[
\sup_{\Q \in \amb} \ \RPgrad{\P_\khat}{\Q} \leq \eps ,
\]
and every such \mmode{\khat} is recognised by verifying the inequality
\mmode{\fwgap_{\khat} \leq \eps \frac{2 + 2\delta}{2 + 3\delta}}.
\end{remark}

\subsection{FW stepsize selection}
The last part of this section discusses two particular features of the stepsize rule (two-regimes and diminishing behavior) in Proposition~\ref{proposition:FW-gap}.  

\paragraph{\bf Two-regimes stepsize}
The two-regimes for \mmode{\stepsize_k} in Proposition~\ref{proposition:FW-gap} turns out to be crucial to obtain provable guarantees that the FW-gap is bounded above in the later iterations. Even though such certificates are of independent interest in their own right, having such upper bounds is also essential in the context of DRO problems. An upper bound on the FW-gap at iteration \mmode{k} ensures that the performance of the decision \mmode{x_k} for the worst-case distribution is not ``too-bad''.

\paragraph{\bf Different stepsize selection for the FW algorithm}
For generic feasible sets, the Frank-Wolfe algorithm requires that the stepsize sequence \mmode{(\stepsize_k)_k} be diminishing. Even though the risk measure is smooth, the direction \mmode{\Q'_k} may change dis-continuously around the optimal solution if the ambiguity set \mmode{\amb} has flat faces (like Wasserstein-\mmode{1} balls). Therefore, the Frank-Wolfe algorithm in general requires diminishing stepsize to converge, and might not converge with constant stepsize unlike gradient descent algorithms. Even with the standard diminishing stepsize of \mmode{\stepsize_k = \nicefrac{2}{k + 2}}, the canonical FW algorithm is plagued with the {\em zig-zag} phenomena where the iterates keep oscillating around the optimal solution. To remedy these challenges of FW in finite-dimensional convex problems, various adaptive stepsize sequences have been proposed in the literature with provably better guarantees under some additional assumptions. In the following, we describe the main ideas of these stepsize selection rules in the context of \eqref{eq:main-opt}.
\begin{enumerate}[label = {\rm (\alph*)}, itemsep = 0mm, topsep = 0mm, leftmargin = *]
    \item {\em Demyanov-Rubinov (DR) stepsize}: A very interesting stepsize selection is due to \cite{demyanov1970approximate, dem1968minimization}
    \begin{equation}
    \label{eq:Demyanov-Rubinov-stepsize}
    \stepsize_k = \min \left\{ \frac{\fwgap_k}{2 \smooth} , 1  \right\} ,
    \end{equation}
    which adaptively tunes the stepsize \mmode{\stepsize_k} using the current value of the FW-gap.

    \item {\em Backtracking}: Suppose, it is relatively easy to evaluate the risk measure \mmode{R}, then one can employ the backtracking based DR step size due to \cite{pedregosa2020linearly, nguyen2023bridging}. This stepsize selection rule adaptive tunes the smoothness constant \mmode{C} locally along the line segment \mmode{(\pk , \Q'_k)}. This allows the backtracking step size to take larger steps than other methods. Specific way to adaptively tune the smoothness constant may vary, the specific rule in \cite{pedregosa2020linearly} can be summarised as follows. For fixed constants \mmode{\eta \in (0, 1), \tau \geq 1}, at each iteration the stepsize rule is 
    \begin{equation}
    \label{eq:backtracking-stepsize}
    \begin{cases}
    \begin{aligned}
    \stepsize_k &= \min \left\{ \frac{\fwgap_t}{2 \tau^t \smooth_k} , 1 \right\} 
 \text{ for the largest \mmode{t = 1,2,\ldots} such that } \\
    \RP{\P_{k +1}} & \geq \RP{\pk} + \stepsize_k \fwgap_k - \stepsize^2 \tau^t \smooth_k .
    \end{aligned}
    \end{cases}
    \end{equation}
The smoothness constant is also updated as \mmode{\smooth_{k + 1} = \eta \tau^t \smooth_k}. Another variant \cite[Assumption 6.1, Algorithm 1] {nguyen2023bridging} of backtracking-based step size selection in the FW algorithm gives rise to linear/geometric convergence with further assumptions on the feasible set.

 \item {\em Exact line search}: If it is easier to optimize the risk measure \mmode{R} on the line segment \mmode{(\pk , \Q'_k)}, then one can also select the stepsize by exactly maximizing 
 \begin{equation}
\label{eq:exact-line-search-stepsize}
\stepsize_k = \argmax_{\stepsize \in [0,1]} \quad \RP{\pk + \stepsize (\Q'_k - \pk)} .
 \end{equation}
\end{enumerate}

\section{Nonlinear Distributionally Robust Optimization}
\label{section:NDRO}

Let us recall that a generic DRO problem is formulated as the min-max problem
\begin{equation}
\label{eq:NDRO}
F\opt \define \inf_{x \in \setx} \sup_{\P \in \amb} \FP{x}{\P}\,, 
\end{equation}
where \( \setx \subset \R{n} \) is a closed convex set, denoting the set of feasible decisions, and \( \amb \) denotes a given ambiguity set of distributions. A DRO problem is said to be \emph{feasible} if \mmode{F\opt < +\infty}, which happens if and only if there exists some \mmode{x \in \setx} such that \mmode{\sup_{\P \in \amb} \FP{x}{\P}  < +\infty}. Our objective is to develop a framework to solve a generic DRO problem \eqref{eq:NDRO}. Particularly, with emphasis on the case when \mmode{ \FP{x}{\P} } is nonlinear in \mmode{\P} for every \( x \in \setx \), to which we refer to \eqref{eq:NDRO} as a \emph{nonlinear distributionally robust optimization} (NDRO) problem.

The way the variable~$x$ enters the risk~$F$ in~\eqref{eq:NDRO} may have an impact on the scalability of the proposed FW~algorithm. In particular, in the case of the regular risks in Definition~\ref{def:RR-measure}, the important feature is whether the decision~$x$ influences the sufficient statistics of the risk (cf.\,Remark~\ref{rem:suff-stat}). This consideration leads to two classes of regular risk measures: 
\begin{align}\label{eq:NDRO-regular}
    \text{(i)} \, F(x,\P) = r\big(x, \EE{\P}[L(\xi)]\big) \qquad \text{and} \quad  
    \text{(ii)} \, F(x,\P) = r\big(\EE{\P}[L(x,\xi)]\big). 
\end{align}
It should be noted that class~(i) in \eqref{eq:NDRO-regular} is a special form of class~(ii). For regular risk measures in class~(i), the sufficient statistic~$\EE{\P}[L(\xi)]$ is {\it not} influenced by the decision \mmode{x}, whereas this is not the case for general regular risk measures in class~(ii). This subtle difference makes a significant impact on whether the FW~algorithm can benefit from the notion of sufficient statistic described in Remark~\ref{rem:suff-stat}. The variance and entropic risks investigated in this article are indeed different in view of this feature.

\begin{example}[Regular NDRO examples]
\label{example:NDRO}
\rm{
The analogous examples of Example~\ref{example:RR-measures} in the NDRO context are the following:
\begin{enumerate}[label = {\rm (\alph*)}, itemsep = 0mm, topsep = 0mm, leftmargin = *]
\item \emph{Variance}: The variance \mmode{\vrisk{x}{\P}} (cf.\,\eqref{eq:variance-L-r}) associated with the distribution \mmode{\P} and a decision \mmode{x} is  
\begin{equation}
\label{NDRO_var}
\vrisk{x}{\P} \define x\transp \big( \Sigma_{\P} - \mup{\P} \mup{\P}\transp \big) x , \quad \text{where}  \quad \Sigma_{\P} \define \EE{\P} [ \xi \xi\transp ]~ \text{and}~ \mup{\P} \define \EE{\P}[\xi].
\end{equation}

\item \emph{Entropic risk measure}: The entropic risk measure~\mmode{ \ERP{x,\P}} (cf.\,\eqref{eq:er-risk-L-r}) of a multivariate distribution \mmode{\P} and decision \mmode{x}. If \mmode{\P} is a distribution with marginals \mmode{\P_j} for \mmode{j = 1,2,\ldots,n}, (i.e., the \mmode{j}-th component \mmode{\xi_j} is \mmode{\P_j} distributed). For a given set of positive \emph{risk-aversion parameters} \mmode{(\theta_j)_{j = 1}^n$ in $(0, +\infty)}, the associated risk is defined as
\begin{equation}
\label{NDRO_entropic}
\ERP{x,\P} \define \summ{j = 1}{n}{\frac{1}{\theta_j} \log \left( \EE{\P_j} [e^{-\theta_j x_j\xi_j}] \right)}.
\end{equation}
\end{enumerate}}
It is straightforward to see that the NDRO variance risk~\eqref{NDRO_var} belongs to the class (i) in \eqref{eq:NDRO-regular}, and thus keeping its sufficient statistic (i.e., the first two moments) intact, while the NDRO entropic risk~\eqref{NDRO_entropic} belongs to the class (ii) in \eqref{eq:NDRO-regular} where the vector~$L(x,\xi)$ depends inseparably on~$x$. Looking ahead at \eqref{eq:min-variance-tractable-FW}, we see that the proposed FW Algorithm \ref{algo:NDRO-min-max} applied to variance risk measure simplifies to iterations over only sufficient statistic \mmode{(\mu, \Sigma)} and \mmode{x}.

\end{example}

\paragraph{{\bf NDRO Dual problem}} Associated with the DRO problem \eqref{eq:NDRO} is its dual problem:
\begin{equation}
\label{eq:NDRO-dual}
\text{Dual problem:} \quad \dualoptval \define \sup_{\P \in \amb} \inf_{x \in \setx} \FP{x}{\P} .
\end{equation}
In general, we have \emph{weak-duality}~\mmode{\dualoptval \leq \primaloptval} relating the optimal values of the primal problem \eqref{eq:NDRO} and its dual \eqref{eq:NDRO-dual}. If \mmode{\dualoptval = \primaloptval} specifically, we say that \emph{strong duality} holds between \eqref{eq:NDRO} and \eqref{eq:NDRO-dual}. Moreover, suppose the DRO problem \eqref{eq:NDRO} and its dual \eqref{eq:NDRO-dual} admit the solutions~\mmode{x\opt} and \mmode{\P\opt}, i.e.,
\[
x\opt \in \argmin_{x \in \setx} \sup_{\P \in \amb} \FP{x}{\P} \quad \text{and} \quad \P\opt \in \argmax_{\P \in \amb} \inf_{x \in \setx} \FP{x}{\P} ,
\]
Then, the pair \mmode{(x\opt , \P\opt)} is said to be a \emph{saddle point} solution to the problems \eqref{eq:NDRO} and \eqref{eq:NDRO-dual}, which is also characterized by the condition
\[
\max_{\P \in \amb} \FP{\xopt}{\P} \symbolspace{=}{\;} \FP{\xopt}{\popt} \symbolspace{=}{\;} \min_{x \in \setx} \FP{x}{\popt} .
\]
The existence of a saddle point is sufficient for strong duality to hold, however, it is not necessary. Therefore, whenever strong duality holds, we consider the slightly relaxed notion of an \mmode{\eps}-sub-optimal saddle points as a solution concept for \eqref{eq:NDRO}.

\begin{definition}[\mmode{\eps}-saddle point]
\label{def:dual-gap-eps-solution} \rm{
Given \mmode{\eps \geq 0}, a pair \mmode{\epssaddle \in \setx \times \amb } is an \mmode{\eps}-saddle point of the DRO problem \eqref{eq:NDRO}, if it satisfies 
\begin{equation}
\label{eq:NDRO-eps-solution}
    -\eps + \sup_{\P \in \amb} \FP{\xeps}{\P} \symbolspace{\leq}{\;} \FP{\xeps}{\peps} \symbolspace{\leq}{\;} \eps + \inf_{x \in \setx} \FP{x}{\peps} .
\end{equation}
}
\end{definition}
It is worth noting that if \mmode{\epssaddle} is an \mmode{\eps}-saddle point, then we have the inequalities:
\[
\begin{aligned}
\sup\limits_{\Q \in \amb} \FP{\xeps}{\Q} & \symbolspace{\leq}{\;} 2\eps + \inf\limits_{y \in \setx} \FP{y}{\peps} \symbolspace{\leq}{\;} 2\eps + \inf\limits_{y \in \setx} \sup_{\Q \in \amb} \FP{y}{\Q} \ &&= \ 2\eps + \primaloptval , \quad \text{and} \\
\inf\limits_{y \in \setx} \FP{y}{\peps} & \symbolspace{\geq}{\;} -2\eps + \sup_{\Q \in \amb} \FP{\xeps}{\Q} \symbolspace{\geq}{\;} -2\eps + \sup_{\Q \in \amb} \inf\limits_{y \in \setx} \FP{y}{\Q} &&= -2\eps + \dualoptval .
\end{aligned}
\]
In other words, if \mmode{(\xeps , \peps)} is an \mmode{\eps}-saddle point, then both \mmode{x_{\eps}} and \mmode{\peps} are at most \mmode{2\eps}-sub-optimal to the DRO problem \eqref{eq:NDRO} and its dual \eqref{eq:NDRO-dual} respectively. Consequently, the decision \mmode{\xeps} is guaranteed to be at most \mmode{2\eps} worse from the best decision that could have been made in a DRO framework.

Solving the minimization over \mmode{x} in the dual-problem \eqref{eq:NDRO-dual} results in a maximization problem (potentially nonlinear) over the distributions
\begin{equation}
\label{eq:ndro-maximization-problem}
\sup_{\P \in \amb} \RP{\P} , \quad \text{ where } \RP{\P} \define \inf\limits_{x \in \setx} \FP{x}{\P} .
\end{equation}
Denoting \mmode{x(\P) \define \argmin_{x \in \setx} \FP{x}{\P} } (whenever a minimizer exists), for every \mmode{\P \in \amb}, the proposed method to compute an \mmode{\eps}-saddle point of the DRO problem generates a sequence \mmode{(\xk, \pk)_k}, where \mmode{\xk \in \xp{\pk}} for each \mmode{k}, and the sequence of distributions \mmode{(\pk)_k} is obtained by applying the FW algorithm to the maximization problem \eqref{eq:ndro-maximization-problem}. If a pair \mmode{(x' , \P')} satisfies \mmode{\P' \in \argmax_{\P \in \amb} \; \RP{\P}} and \mmode{x' \in x(\P') }, then it is not guaranteed that \mmode{x' \in \argmin_{y \in \setx} \; \sup_{\P \in \amb} \; \FP{y}{\P} } unless \mmode{x(\P')} is unique. It so turns out that the regularity assumptions on \mmode{F} and \mmode{\amb}, required for the algorithm convergence, also ensure uniqueness.

\subsection{NDRO: continuity, derivatives and smoothness}
Our proposed method to solve the NDRO problem \eqref{eq:NDRO} by applying the FW algorithm to \eqref{eq:ndro-maximization-problem} requires that the risk measure \mmode{R} therein has well defined G-derivatives that are also smooth in the sense of Definition \ref{def:smoothness}. This is not guaranteed apriori. In the following, we impose some regularity assumptions on the function \mmode{F} that guarantee the required smoothness of the risk measure \mmode{R}.

\begin{assumption}[NDRO smoothness]
\label{assumption:NDRO-smoothness}
Let \mmode{ \pg \define \P + \stepsize \big( \Q - \P \big)} as in~\eqref{eq:derivative-definition} and denote \mmode{\fxp{x}{\cdot} \define \FP{x}{\cdot} }, for every \mmode{x \in \setx}. We assume that there exists positive constants \mmode{\alpha, \smooth_1, \smooth_2 } such that
\begin{enumerate}[label = {\rm (\roman*)}, itemsep = 0mm, topsep = 0mm, leftmargin = *]

\item {\it Continuous function:} The mapping \mmode{\setx \times [0,1] \ni (x,\stepsize) \mapsto \FP{x}{\pg}} is proper, convex-concave, and continuous for all \mmode{\P, \Q \in \amb}.

\item \emph{Continuous derivatives}: The function \( \fxp{x}{\P} \) is directionally differentiable on \( \amb \), and the G-derivative \mmode{ \FPgrad{x}{\P}{\Q} } is \mmode{\smooth_1}-Lipschitz continuous in \mmode{x} uniformly over \mmode{\P , \Q \in \amb}, i.e.,
\begin{equation}
\label{eq:continuous-partial-derivatives}
\FPgrad{x}{\P}{\Q} - \FPgrad{y}{\P}{\Q} \symbolspace{\leq}{\;} \smooth_1 \norm{x - y}, \qquad \forall\, x,y \in \setx \quad \forall\, \P,\Q\in \amb. 
\end{equation}

\item \emph{Smoothness}: The function \( \fxp{x}{\P} \) is \mmode{\smooth_2}-smooth in the sense of Definition \ref{def:smoothness}, uniformly over \( x \in \setx \).

\item \emph{Strong convexity}: The function \( \FP{x}{\P} \) is \( \alpha \)-strongly convex in \mmode{x} w.r.t.\,the norm \mmode{\norm{\cdot}}, uniformly over all \( \P \in \amb \), i.e.,
\begin{equation}
\label{eq:strong-convexity-assumption}
\frac{\alpha}{2} \norm{x-y}^2 \symbolspace{\leq}{} \FP{y}{\P} - \FP{x}{\P} - \inprod{ \nabla_1 \FP{x}{\P}}{y-x}, \quad \forall\, x,y \in \setx, \quad \forall\, \P,\Q\in \amb.\footnote{\mmode{\nabla_1 F(x,\P) \define \big( \nicefrac{\partial F}{\partial x} \big) (x, \P)} denotes the partial derivative of \mmode{F} w.r.t. \mmode{x} evaluated at \mmode{(x,\P)}. }
\end{equation}

\end{enumerate}
\end{assumption}
\begin{remark}[Choice of norm on \mmode{\setx}]
It must be noted that the norm \mmode{\norm{\cdot}} considered in the strong convexity assumption \eqref{eq:strong-convexity-assumption} and the continuity assumption \eqref{eq:continuous-partial-derivatives} is identical. Considering an identical norm is not restrictive since all norms on \mmode{\setx} are equivalent. However, using such equivalence often makes the resulting constants \mmode{\alpha, \smooth_1} to be dimension dependent (of \mmode{\setx}). We emphasize here that the smoothness constant given in Lemma \ref{lemma:NDRO-smoothness}, requires that the constants \mmode{\alpha}, and \mmode{\smooth_1} that satisfy conditions \eqref{eq:continuous-partial-derivatives} and \eqref{eq:strong-convexity-assumption}, to satisfy with an identical norm.
\end{remark}

For now, we assume that these conditions for the abstract problem \eqref{eq:NDRO} are satisfied. However, for specific problems like the entropic or variance risk minimization (Sections~\ref{section:er-risk-portfolio-selection} and \ref{section:min-variance-portfolio-selection}, respectively), we will determine verifiable conditions whenever possible so that the conditions in Assumption~\ref{assumption:NDRO-smoothness} are indeed satisfied, (see Lemmas \ref{lemma:er-risk-smoothness-assumptions} and \ref{lemma:variance-smoothness-assumptions}). Informally, condition \eqref{eq:continuous-partial-derivatives} together with the smoothness condition is akin to saying that the directional derivatives of \mmode{\fxp{x}{\P}} are similar to being ``Lipschitz continuous'' with respect to both \mmode{x} and \mmode{\P}. Moreover, the strong convexity assumption implies that the mapping \mmode{\P \mapsto x(\P)} is also similar to being ``Lipschitz continuous''. These two consequences together, yield the smoothness of \mmode{R}. The following lemma formally establishes this deduction in a norm-independent (in \mmode{\P}) analysis.
\begin{lemma}[NDRO-derivative properties]
\label{lemma:NDRO-smoothness}
Let the function \( F \) satisfy Assumptions~\ref{assumption:NDRO-smoothness} with constants \mmode{\alpha >0}, and \mmode{ \smooth_1, \smooth_2 \geq 0},  then the following holds for the risk measure \( R \) as defined in \eqref{eq:ndro-maximization-problem} 
\begin{enumerate}[label = {\rm (\roman*)}, itemsep = 0mm, topsep = 0mm, leftmargin = *]
    \item \emph{Danskin's theorem:} The risk measure \mmode{R} is directionally differentiable on \( \amb \), and for any \mmode{\P,\Q \in \amb} its \( \Q \)-directional derivative \( \RPgrad{\P}{\Q} \) at \( \P \), is given by
\begin{equation}
\label{eq:danskin}
\RPgrad{\P}{\Q} = \FPgrad{x(\P)}{\P}{\Q} .
\end{equation}

    \item \emph{Smoothness:} The risk measure \mmode{R} is \mmode{\smooth}-smooth in the sense of Definition \ref{def:smoothness} for 
    \[
    \smooth = \smooth_2 + \frac{\smooth_1}{2\alpha} \Big( \smooth_1 + \sqrt{\smooth_1^2 + 4\alpha \smooth_2} \Big) .
    \]
\end{enumerate}
\end{lemma}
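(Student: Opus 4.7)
My plan is to bracket $\RP{\Pg}-\RP{\P}$ using that $\xp{\P}$ is admissible for the infimum defining $\RP{\Pg}$ and vice versa, yielding
\[
\FP{\xp{\Pg}}{\Pg} - \FP{\xp{\Pg}}{\P} \;\leq\; \RP{\Pg} - \RP{\P} \;\leq\; \FP{\xp{\P}}{\Pg} - \FP{\xp{\P}}{\P}.
\]
Dividing by $\stepsize$ and letting $\stepsize\downarrow 0$, the upper difference quotient tends to $\FPgrad{\xp{\P}}{\P}{\Q}$ by the very definition of the G-derivative. For the lower one I first verify the continuity $\xp{\Pg}\to\xp{\P}$, which follows by combining the strong convexity in Assumption~\ref{assumption:NDRO-smoothness}(i) with the directional continuity of $F_x$ in $\P$; the Lipschitz hypothesis~\eqref{eq:continuous-partial-derivatives} then gives $\FPgrad{\xp{\Pg}}{\P}{\Q}\to\FPgrad{\xp{\P}}{\P}{\Q}$, closing the sandwich and yielding~\eqref{eq:danskin}.

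\textbf{Part (ii) (Smoothness).} Starting from~\eqref{eq:danskin}, I write
\[
\RPgrad{\P}{\Pg}+\RPgrad{\Pg}{\P} = \FPgrad{\xp{\P}}{\P}{\Pg} + \FPgrad{\xp{\Pg}}{\Pg}{\P}
\]
and insert $\pm\FPgrad{\xp{\Pg}}{\P}{\Pg}$. The symmetric pair $\FPgrad{\xp{\Pg}}{\P}{\Pg}+\FPgrad{\xp{\Pg}}{\Pg}{\P}$ is at most $\stepsize^2\smooth_2$ by the G-smoothness of $F_{\xp{\Pg}}$ from Assumption~\ref{assumption:NDRO-smoothness}(ii), while the remaining term $\FPgrad{\xp{\P}}{\P}{\Pg}-\FPgrad{\xp{\Pg}}{\P}{\Pg}$ equals $\stepsize\bigl[\FPgrad{\xp{\P}}{\P}{\Q}-\FPgrad{\xp{\Pg}}{\P}{\Q}\bigr]$ by positive homogeneity (Proposition~\ref{proposition:derivative-properties}), and is hence bounded by $\stepsize\smooth_1\|\xp{\Pg}-\xp{\P}\|$ thanks to~\eqref{eq:continuous-partial-derivatives}. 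Combined, this gives
\[
\RPgrad{\P}{\Pg}+\RPgrad{\Pg}{\P} \;\leq\; \stepsize^2\smooth_2 + \stepsize\smooth_1\,\|\xp{\Pg}-\xp{\P}\|.
\]

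To close the argument I will establish the analogous quadratic bound $\alpha\|\xp{\Pg}-\xp{\P}\|^2 \leq \stepsize\smooth_1\|\xp{\Pg}-\xp{\P}\| + \stepsize^2\smooth_2$. Summing the two strong-convexity inequalities $\tfrac{\alpha}{2}\|\xp{\Pg}-\xp{\P}\|^2 \leq \FP{\xp{\Pg}}{\P}-\FP{\xp{\P}}{\P}$ and $\tfrac{\alpha}{2}\|\xp{\Pg}-\xp{\P}\|^2 \leq \FP{\xp{\P}}{\Pg}-\FP{\xp{\Pg}}{\Pg}$ yields on the right the combination $[\FP{\xp{\P}}{\Pg}-\FP{\xp{\P}}{\P}]-[\FP{\xp{\Pg}}{\Pg}-\FP{\xp{\Pg}}{\P}]$, which has exactly the same structure as the G-derivative combination handled above. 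Estimating it with the same two ingredients --- the G-smoothness of $F_x$ contributing an $\stepsize^2\smooth_2$ remainder and the Lipschitz continuity of $\FPgrad{x}{\P}{\Q}$ in $x$ contributing the $\stepsize\smooth_1\|\xp{\Pg}-\xp{\P}\|$ first-order term --- delivers the desired quadratic. Solving it yields $\|\xp{\Pg}-\xp{\P}\| \leq \tfrac{\stepsize}{2\alpha}\bigl(\smooth_1+\sqrt{\smooth_1^2+4\alpha\smooth_2}\bigr)$, and substituting back into the bound of the previous paragraph produces the announced constant $\smooth = \smooth_2 + \tfrac{\smooth_1}{2\alpha}\bigl(\smooth_1+\sqrt{\smooth_1^2+4\alpha\smooth_2}\bigr)$.

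The main technical obstacle I foresee is translating the one-sided G-smoothness of Definition~\ref{def:smoothness} into the finite-step estimate $\FP{x}{\Pg}-\FP{x}{\P} - \stepsize\FPgrad{x}{\P}{\Q} \leq O(\stepsize^2\smooth_2)$ needed to recover the $\stepsize^2\smooth_2$ term in the function-value combination above. My intended route is to integrate along the segment $\{\P_t\}_{t\in[0,\stepsize]}$ using the positive-homogeneity identity $\tfrac{d}{dt}\FP{x}{\P_t}=\FPgrad{x}{\P_t}{\Q}/(1-t)$, and to exploit the variation bound that G-smoothness places on $\FPgrad{x}{\P_t}{\Q}$ uniformly in $x$.
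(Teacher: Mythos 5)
Your decomposition for part (ii) is exactly the paper's: insert $\pm\FPgrad{\xp{\Pg}}{\P}{\Pg}$, bound the symmetric pair by $\stepsize^2\smooth_2$ and the residual by $\stepsize\smooth_1\|\xp{\Pg}-\xp{\P}\|$ via positive homogeneity and \eqref{eq:continuous-partial-derivatives}, then close with the strong-convexity quadratic in $\|\xp{\Pg}-\xp{\P}\|$; the resulting constant matches. Two remarks. First, for part (i) you prove Danskin's theorem by the sandwich argument rather than citing it as the paper does; that is fine, but note that controlling the lower difference quotient $\tfrac{1}{\stepsize}\bigl(\FP{\xp{\Pg}}{\Pg}-\FP{\xp{\Pg}}{\P}\bigr)$ uniformly as $\xp{\Pg}$ varies requires concavity of $\FP{x}{\cdot}$ in $\P$ (so that \eqref{eq:concavity-linear-bound} and G-smoothness bracket the finite difference by directional derivatives up to $O(\stepsize^2)$); the paper also relies on this concavity implicitly. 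Second, the "main technical obstacle" you foresee at the end is not actually there: you do not need any integration along $\{\P_t\}$ or a finite-step Taylor estimate with an $O(\stepsize^2\smooth_2)$ remainder. The concavity upper bound \eqref{eq:concavity-linear-bound} applied twice gives $\FP{\xp{\P}}{\Pg}-\FP{\xp{\P}}{\P}\leq\FPgrad{\xp{\P}}{\P}{\Pg}$ and $\FP{\xp{\Pg}}{\P}-\FP{\xp{\Pg}}{\Pg}\leq\FPgrad{\xp{\Pg}}{\Pg}{\P}$, so your function-value combination is bounded by $\RPgrad{\P}{\Pg}+\RPgrad{\Pg}{\P}$ via \eqref{eq:danskin}, which you have already shown is at most $\stepsize\smooth_1\|\xp{\Pg}-\xp{\P}\|+\stepsize^2\smooth_2$. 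This is precisely how the paper closes the loop, and it makes your last paragraph unnecessary.
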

 
\subsection{Frank-Wolfe based algorithm for the NDRO problem.}
Let \mmode{\P_k} for \mmode{k = 0,1,2,\ldots,} be the sequence of iterates generated by the FW-algorithm \eqref{eq:FW-iterations-main} for \mmode{R} as defined in \eqref{eq:ndro-maximization-problem}. Assume that the FW oracle is \mmode{\delta}-accurate, for some arbitrary \mmode{\delta>0}. To produce a solution \mmode{x_\eps} to the NDRO problem \eqref{eq:NDRO} for a given \mmode{\eps > 0}, one must decide the number of iterations \mmode{\keps}, the stepsize sequence \mmode{\stepsize_k} for \mmode{k = 0,1,\ldots, \keps}, and the stopping criteria for the FW-algorithm. 
Algorithm~\ref{algo:NDRO-min-max} and the related next theorem present the main result of the article that provides a solution to the NDRO problem \eqref{eq:NDRO}.
\begin{algorithm}[h]
\caption{FW algorithm for NDRO problem \eqref{eq:NDRO}}
\label{algo:NDRO-min-max}
\KwIn{A distribution \( \P \in \amb \), positive real numbers \( \eps \) and \mmode{\smooth}, and access to a FW-oracle corresponding to the function \mmode{F}.}
\KwOut{The final decision variable \( x_\eps \) and worst case distribution \( \P_{\eps} \) .}

\nl \textbf{Initialization}: \mmode{\P_0 \define \P} 

\emph{diminishing stepsize regime}

\nl\textbf{for}: \mmode{k =0,1,2,\ldots, \keps \define \ceil*{\frac{2\smooth}{\eps} (2 + 3\delta)} - 2}

\quad \textbf{stepsize:} \mmode{\stepsize_k \define \frac{2}{k + 2}}

\quad find \mmode{(x_k, \Q_k) \in \setx \times \amb} such that
\begin{equation}
\label{eq:FW-oracle-NDRO-regime-1}
\FP{x_k}{\P_k} \symbolspace{=}{\;} \min\limits_{x \in \setx} \FP{x}{\P_k} \quad \text{and} \quad
\sup\limits_{\Q \in \amb} \FPgrad{x_k}{\P_{k}}{\Q} \symbolspace{\leq}{\;} \delta \stepsize_k \smooth + \FPgrad{x_k}{\P_k}{\Q_k} 
\end{equation}

\quad \mmode{\fwval_k = \FPgrad{x_k}{\P_k}{\Q_k}}
\quad \text{and} \quad \mmode{\P_{k+1} \define \P_k + \stepsize_k ( \Q_k - \P_k )}

\nl\textbf{end for}

\emph{constant stepsize regime}

\nl\textbf{for} \mmode{k = \keps + 1, \ldots, 2\keps + 1 }

\quad \textbf{stepsize:} \mmode{\stepsize_k \define \frac{2}{\keps + 2}}

\quad \textbf{if} \mmode{\fwval_k > \eps \frac{2+2\delta}{2+3\delta}}, \textbf{do}

\quad \quad find \mmode{(x_k, \Q_k) \in \setx \times \amb} based on \eqref{eq:FW-oracle-NDRO-regime-1}

\quad \quad \mmode{\fwval_k = \FPgrad{x_k}{\P_k}{\Q_k}}
\quad \text{and} \quad \mmode{\P_{k+1} \define \P_k + \stepsize_k ( \Q_k - \P_k )}

\quad \textbf{else}

\quad \quad \textbf{Output}: \( x_\eps \define x_k \) and \( \P_\eps \define \P_k \) and \textbf{end for}.

\nl\textbf{end for}
\end{algorithm}

\begin{theorem}[NDRO solution]
\label{theorem:NDRO-FW}
Consider the Nonlinear DRO problem \eqref{eq:NDRO} under the setting of Assumptions~\ref{assumption:NDRO-smoothness}. Then the following holds
\begin{enumerate}[label = {\rm (\roman*)}, itemsep = 0mm, topsep = 0mm, leftmargin = *]
\item \emph{Strong duality:} 
\begin{equation}
\label{eq:ndro-strong-duality}
\primaloptval \symbolspace{=}{\;} \min\limits_{x \in \setx} \sup\limits_{\P \in \amb} \FP{x}{\P} \symbolspace{=}{\;} \sup\limits_{\P \in \amb} \min\limits_{x \in \setx} \FP{x}{\P} \symbolspace{=}{\;} \dualoptval
\end{equation}
\item {\em Saddle point computation:} Given any \mmode{\eps > 0}, the pair \mmode{(x_\eps , \P_\eps)} computed from Algorithm \ref{algo:NDRO-min-max} is an \mmode{\eps}-saddle point in the sense of Definition \ref{def:dual-gap-eps-solution}.
\end{enumerate}
\end{theorem}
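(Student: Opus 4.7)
The plan is to apply the FW convergence machinery of Section~\ref{section:FW-algorithm} to the auxiliary concave maximization problem \eqref{eq:ndro-maximization-problem}, and then translate the resulting FW-gap bound into an \mmode{\eps}-saddle point certificate via Danskin's identity.

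\textbf{Step 1 (Set-up of the outer FW problem).} I would first observe that, under Assumption~\ref{assumption:NDRO-smoothness}, Lemma~\ref{lemma:NDRO-smoothness} simultaneously provides (a)~the G-derivative formula \( \RPgrad{\P}{\Q} = \FPgrad{x(\P)}{\P}{\Q} \) and (b)~the smoothness of \mmode{R} with the constant \mmode{\smooth} that is supplied as input to Algorithm~\ref{algo:NDRO-min-max}. In particular, the FW updates \(\P_{k+1} = \P_k + \stepsize_k (\Q_k - \P_k)\) in \eqref{eq:FW-oracle-NDRO-regime-1}--\eqref{eq:FW-oracle-NDRO-regime-2} are precisely the iterates of the generic FW algorithm \eqref{eq:FW-iterations-main} applied to \mmode{R}: by Danskin's identity, computing \mmode{x_k \in \argmin_{x}\FP{x}{\P_k}} and then \mmode{\Q_k} is exactly a \mmode{\delta}-accurate FW-oracle for \mmode{R} at \mmode{\P_k}, and \mmode{\fwval_k = \RPgrad{\P_k}{\Q_k}}.

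\textbf{Step 2 (Applying the aposteriori FW bound).} With \mmode{\keps = \ceil*{\tfrac{2\smooth(2+3\delta)}{\eps}} - 2}, the two-regime step-size schedule in Algorithm~\ref{algo:NDRO-min-max} matches that of Proposition~\ref{proposition:FW-gap} and Remark~\ref{remark:FW-eps-stopping}. Therefore, there exists some \mmode{\khat \in \{\keps, \ldots, 2\keps+1\}} satisfying
\[
\sup_{\Q \in \amb} \RPgrad{\P_{\khat}}{\Q} \;\leq\; \eps,
\]
and every such \mmode{\khat} is recognised by the stopping test \mmode{\fwval_{\khat} \leq \eps \tfrac{2+2\delta}{2+3\delta}} used in the second loop. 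Hence Algorithm~\ref{algo:NDRO-min-max} terminates at some such \mmode{\khat}, returning \mmode{(x_\eps,\P_\eps) = (x_{\khat},\P_{\khat})}.

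\textbf{Step 3 (From FW-gap to \mmode{\eps}-saddle point).} By Danskin's identity and the stopping guarantee,
\[
\sup_{\Q \in \amb} \FPgrad{x_\eps}{\P_\eps}{\Q} \;=\; \sup_{\Q \in \amb} \RPgrad{\P_\eps}{\Q} \;\leq\; \eps.
\]
Since the risk measure is concave (as an infimum of concave-in-\mmode{\P} functions, following the standing NDRO setup), Proposition~\ref{proposition:derivative-properties}\eqref{eq:concavity-linear-bound} applied to \( \P \mapsto \FP{x_\eps}{\P} \) yields \( \FP{x_\eps}{\Q} - \FP{x_\eps}{\P_\eps} \leq \FPgrad{x_\eps}{\P_\eps}{\Q} \leq \eps \) for every \mmode{\Q \in \amb}. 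This is the left-hand inequality in \eqref{eq:NDRO-eps-solution}. The right-hand inequality is immediate from the construction \mmode{x_\eps \in \argmin_{x \in \setx} \FP{x}{\P_\eps}}, which even gives \mmode{\FP{x_\eps}{\P_\eps} = \inf_{x \in \setx} \FP{x}{\P_\eps}}. Thus \mmode{(x_\eps,\P_\eps)} is an \mmode{\eps}-saddle point, establishing part~(ii).

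\textbf{Step 4 (Strong duality by \mmode{\eps \downarrow 0}).} Chaining the two inequalities of the \mmode{\eps}-saddle point (as in the paragraph following Definition~\ref{def:dual-gap-eps-solution}) gives \mmode{\primaloptval \leq \sup_{\Q} \FP{x_\eps}{\Q} \leq \inf_{y} \FP{y}{\P_\eps} + 2\eps \leq \dualoptval + 2\eps}. Combining with weak duality \mmode{\dualoptval \leq \primaloptval} and letting \mmode{\eps \downarrow 0} proves \eqref{eq:ndro-strong-duality}. The main delicate point is the conjunction of Danskin's formula with Proposition~\ref{proposition:derivative-properties}\eqref{eq:concavity-linear-bound} that converts a first-order FW-gap certificate on \mmode{R} into a zeroth-order worst-case bound for \mmode{\FP{x_\eps}{\cdot}}; once this is in place, both assertions follow mechanically from Section~\ref{section:FW-algorithm}.
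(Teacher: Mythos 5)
Your proposal is correct and follows essentially the same route as the paper's proof: identify Algorithm~\ref{algo:NDRO-min-max} with the generic FW iteration for \(R(\P)=\min_{x\in\setx}\FP{x}{\P}\) via Danskin's identity \eqref{eq:danskin}, invoke Proposition~\ref{proposition:FW-gap} and Remark~\ref{remark:FW-eps-stopping} for the FW-gap certificate, convert it to the \(\eps\)-saddle point inequalities using \eqref{eq:concavity-linear-bound} together with the exact inner minimization, and deduce strong duality by letting \(\eps\downarrow 0\) against weak duality. The only cosmetic difference is that you obtain \(\primaloptval\le\dualoptval+2\eps\) where the paper's chaining gives \(\primaloptval\le\dualoptval+\eps\); both suffice for the limit.
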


Approaching an NDRO problem via the FW-based approach in Algorithm~\ref{algo:NDRO-min-max} requires the following three key ingredients:

\begin{enumerate}[label = {\rm (\roman*)}, itemsep = 0mm, topsep = 0mm, leftmargin = *]

\item {\em Regularity conditions of Assumption~\eqref{assumption:NDRO-smoothness}}: To ensure the convergence of the FW algorithm, we must ensure that the corresponding G-derivatives exist, and satisfy regularity conditions of Assumption~\ref{assumption:NDRO-smoothness}, particularly (ii)~\textit{continuous derivatives} and (iii)~\textit{smoothness}.

\item {\em Solver for \mmode{\min_{x \in \setx} \FP{x}{\P}} in~\eqref{eq:FW-oracle-NDRO-regime-1}}: Since it is a finite-dimensional convex problem, any well known first-order methods like ISTA, FISTA \cite{beck2009fast}, Nesterov's Accelerated Gradient Descent \cite{nesterov1983method}, Extra Gradient \cite{korpelevich1976extragradient} can be applied. Moreover, since the risk function \mmode{F} is assumed to be strongly convex in \mmode{x} (or at least with a regulariser), these first-order methods solve the corresponding minimization problems with geometric convergence.

\item {\em Feasibility of the FW oracle~\eqref{eq:FW-oracle-NDRO-regime-1}}: The FW worst-case distribution problem must admit tractable reformulations. In fact, when the directional derivatives are linear in the distribution, the corresponding FW problem is indeed tractable for several interesting choices of risk measures and ambiguity sets, as discussed in \cite{postek2016computationally, kuhn2019wasserstein, gao2022wasserstein}.

\end{enumerate}

\subsection{Slower convergence without strong-convexity}
\label{subsection:slow-convergence-without-strong-convexity}
In this case, we assume that the function \mmode{F} satisfies conditions (ii) and (iii) of Assumption \ref{assumption:NDRO-smoothness}. However, it may not be necessarily strongly convex in \mmode{x}. For example, the variance risk measure is strongly convex if and only if the smallest eigenvalue of the matrix \mmode{\big( \Sigma_{\P} - \mup{\P}\mup{\P}\transp \big) } is bounded away from \mmode{0} uniformly over \mmode{\P \in \amb}, which might not be the case. Even in such a setting, we desire to develop methods that compute an \mmode{\eps}-saddle point of \mmode{F} using the setup of Algorithm \ref{algo:NDRO-min-max} for any \mmode{\eps>0}. We take inspiration from the smoothing techniques in the optimization literature \cite{nesterov2005smooth} for smoothing a non-smooth convex function and devise similar techniques that work with a suitable strongly-convex approximation \mmode{F_{\eps}}, of \mmode{F}, and still use Algorithm \ref{algo:NDRO-min-max} to compute an \mmode{\eps}-saddle point of \mmode{F}. To this end, we assume that the set \mmode{\setx} is also bounded in addition to being closed and, thus, compact. Many common examples of \mmode{\setx} like the simplex \mmode{\simplex{n}}, satisfy the compactness assumption.

For any given an \mmode{\eps > 0}, \mmode{x \in \setx}, and \mmode{\P \in \amb}, let \mmode{\fapprox{x}{\P} \define \FP{x}{\P} + \big( \nicefrac{\eps}{\boundx^2} \big) \norm{x}^2}. We propose to solve the following min-max problem in place of \eqref{eq:NDRO}
\begin{equation}
   \label{eq:NDRO-approx-problem}
   \min_{x \in \setx} \sup_{\P \in \amb} \fapprox{x}{\P} ,
\end{equation}
where \mmode{B_x \define \max_{x \in X} \norm{x}}. It is apparent at once that \mmode{\fapprox{x}{\P}} is \mmode{(\nicefrac{2\eps}{\boundx^{2}}) }-strongly convex in \mmode{x}, uniformly over \mmode{\P \in \amb}, and consequently satisfies all the conditions in Assumption \ref{assumption:NDRO-smoothness}. Thus, employing Algorithm \ref{algo:NDRO-min-max} with \mmode{F_{\eps}} computes a pair \mmode{(x' , \P')} that satisfies the inequalities
\begin{equation}
\label{eq:approx-saddle-point-ineq}
-\eps + \sup_{\Q \in \amb} \fapprox{x'}{\Q} \symbolspace{\leq}{\;} \fapprox{x'}{\P'} \symbolspace{\leq}{\;} \eps + \inf_{y \in \setx} \fapprox{y}{\P'} .
\end{equation}
It turns out that such a pair \mmode{(x' , \P')} is an \mmode{\eps}-saddle point of \mmode{F} as well. This is easily seen by observing that on the one hand, we have
\[
\begin{aligned}
    \big( \nicefrac{\eps}{\boundx^2} \big) \norm{x'}^2 + \sup\limits_{\Q \in \amb} \FP{x'}{\Q} \symbolspace{&=}{\;} \sup\limits_{\Q \in \amb} \fapprox{x'}{\Q} \symbolspace{\leq}{\;} \eps + \fapprox{x'}{\P'} \quad \text{(from \eqref{eq:approx-saddle-point-ineq})} \\
    & = \ \eps + \big( \nicefrac{\eps}{\boundx^2} \big) \norm{x'}^2 + \FP{x'}{\P'}.
\end{aligned}
\]
Thus, we immediately get \mmode{\sup_{\Q \in \amb} \ \FP{x'}{\Q} \leq \eps + \FP{x'}{\P'}}. On the other hand, since the minimization over \mmode{x} is solved exactly in Algorithm \ref{algo:NDRO-min-max}, we have 
\[
\begin{aligned}
\fapprox{x'}{\P'} \symbolspace{&=}{\;} \min_{y \in \setx} \fapprox{y}{\P'} \symbolspace{=}{\;} \min_{y \in \setx} \FP{y}{\P'} + \big( \nicefrac{\eps}{\boundx^2} \big) \norm{y}^2 \\
& \leq \ \eps + \min_{y \in \setx} \FP{y}{\P'} \quad \text{since \mmode{\norm{y} \leq \boundx} for all \mmode{y \in \setx} } .
\end{aligned}
\]
Thus, it is apparent that the pair \mmode{(x' , \P')} is an \mmode{\eps}-saddle point of the function \mmode{F} as well.

\begin{corollary}[Slower convergence]
Consider a function \mmode{\FP{x}{\P}} that is not necessarily strongly convex in \mmode{x}. Then for any desired precision \mmode{\eps > 0}, an \mmode{\eps}-saddle point of \mmode{F} can be computed by applying Algorithm \ref{algo:NDRO-min-max} with \mmode{\keps = \ceil*{\frac{2\smooth_{\eps}}{\eps} (2 + 3\delta)} - 2 }, to the strongly convex approximate function \mmode{F^\eps} , where \mmode{ \smooth_\eps = \smooth_2 + \frac{\smooth_1 \boundx^2}{4\eps} \Big( \smooth_1 + \sqrt{\smooth_1^2 + \frac{8\eps}{\boundx^2} \smooth_2 } \Big) }.
\end{corollary}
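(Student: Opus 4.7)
The plan is to reduce the corollary to Theorem \ref{theorem:NDRO-FW} applied to the smoothed surrogate \mmode{\fapprox{x}{\P} = \FP{x}{\P} + (\eps/\boundx^2)\norm{x}^2}, and then invoke the transfer of \mmode{\eps}-saddle points from \mmode{F_\eps} back to \mmode{F} that is already carried out in the discussion preceding the corollary. The bulk of the work is assembly of results; the only new content is the computation of the smoothness constant \mmode{\smooth_\eps} of the induced risk measure \mmode{R_\eps(\P) = \min_{x \in \setx} \fapprox{x}{\P}}.

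First I would verify that \mmode{F_\eps} meets all three items of Assumption \ref{assumption:NDRO-smoothness}. Because the perturbation \mmode{(\eps/\boundx^2)\norm{x}^2} is independent of \mmode{\P}, every G-derivative of \mmode{F_\eps} in the \mmode{\P}-direction is identical to that of \mmode{F}; hence conditions (ii) and (iii) hold for \mmode{F_\eps} with the same constants \mmode{\smooth_2} and \mmode{\smooth_1} as for \mmode{F}. For (i), since \mmode{(\eps/\boundx^2)\norm{x}^2} is \mmode{(2\eps/\boundx^2)}-strongly convex in \mmode{\norm{\cdot}} and \mmode{\FP{\cdot}{\P}} is (at least) convex by the standing setup, we obtain that \mmode{\fapprox{\cdot}{\P}} is \mmode{\alpha_\eps}-strongly convex uniformly in \mmode{\P \in \amb} with \mmode{\alpha_\eps \define 2\eps/\boundx^2}. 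Substituting \mmode{\alpha = \alpha_\eps} into the formula \mmode{\smooth = \smooth_2 + (\smooth_1/(2\alpha)) \big( \smooth_1 + \sqrt{\smooth_1^2 + 4 \alpha \smooth_2} \big)} from Lemma \ref{lemma:NDRO-smoothness} yields exactly the expression for \mmode{\smooth_\eps} in the statement, so \mmode{R_\eps} is \mmode{\smooth_\eps}-smooth in the sense of Definition \ref{def:smoothness}.

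Next I would invoke Theorem \ref{theorem:NDRO-FW} applied to \mmode{F_\eps} with target precision \mmode{\eps}: run Algorithm \ref{algo:NDRO-min-max} with the smoothness parameter \mmode{\smooth_\eps} and iteration count \mmode{\keps = \ceil*{(2\smooth_\eps/\eps)(2 + 3\delta)} - 2}. The theorem guarantees that the output \mmode{(x', \P')} is an \mmode{\eps}-saddle point of \mmode{F_\eps}, i.e.\ \mmode{-\eps + \sup_{\Q \in \amb} \fapprox{x'}{\Q} \leq \fapprox{x'}{\P'} \leq \eps + \inf_{y \in \setx} \fapprox{y}{\P'}}. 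The final step is the transfer from \mmode{F_\eps} to \mmode{F}, which is the three-line computation preceding the corollary: using \mmode{0 \leq (\eps/\boundx^2) \norm{x}^2 \leq \eps} for every \mmode{x \in \setx}, the left inequality gives \mmode{\sup_\Q \FP{x'}{\Q} \leq \eps + \FP{x'}{\P'}}, and the right inequality together with the fact that Algorithm \ref{algo:NDRO-min-max} solves the inner minimization exactly gives \mmode{\FP{x'}{\P'} \leq \eps + \inf_y \FP{y}{\P'}}; i.e.\ \mmode{(x',\P')} satisfies Definition \ref{def:dual-gap-eps-solution} for \mmode{F}. I do not anticipate any substantive obstacle: the only point to double-check is that the data \mmode{\smooth_1, \smooth_2, \boundx} are independent of \mmode{\eps}, so that \mmode{\smooth_\eps} (and hence \mmode{\keps}) is well-defined and grows like \mmode{O(1/\eps)} as \mmode{\eps \downarrow 0}, giving an overall \mmode{O(1/\eps^2)} iteration complexity—the expected slower rate advertised by the corollary.
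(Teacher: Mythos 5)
Your proposal is correct and follows essentially the same route as the paper: the paper's own justification is precisely the discussion preceding the corollary (regularize by \mmode{(\nicefrac{\eps}{\boundx^2})\norm{x}^2}, apply Theorem~\ref{theorem:NDRO-FW} to \mmode{F_{\eps}}, and transfer the \mmode{\eps}-saddle point back to \mmode{F} via \mmode{0 \leq (\nicefrac{\eps}{\boundx^2})\norm{x}^2 \leq \eps}), together with the substitution \mmode{\alpha = \nicefrac{2\eps}{\boundx^2}} into the smoothness constant of Lemma~\ref{lemma:NDRO-smoothness}. Your explicit check that conditions (ii) and (iii) of Assumption~\ref{assumption:NDRO-smoothness} carry over unchanged because the regularizer is \mmode{\P}-independent is a detail the paper leaves implicit, but the argument is the same.
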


Since the strong convexity parameter of \mmode{F_{\eps}} itself depends on \mmode{\eps}, we conclude from Lemma~\ref{lemma:NDRO-smoothness} that the risk measure \mmode{R_{\eps}(\P) \define \min_{y \in \setx} \fapprox{y}{\P}}, also has an \mmode{\eps} dependent smoothness constant \mmode{\smooth_\eps = \smooth_2 + \frac{\smooth_1 \boundx^2}{4\eps} \Big( \smooth_1 + \sqrt{\smooth_1^2 + \frac{8\eps}{\boundx^2} \smooth_2 } \Big) }. Now, Algorithm \ref{algo:NDRO-min-max} terminates in \mmode{O(\keps)} iterations, where \mmode{\keps = O(\nicefrac{\smooth_\eps}{\eps})}. Since \mmode{\smooth_\eps = O(\nicefrac{1}{\eps})}, it is easily seen that for non-strongly convex functions \mmode{F}, applying Algorithm \ref{algo:NDRO-min-max} to its regularized strongly-convex approximation \mmode{F_{\eps}} takes \mmode{O(\nicefrac{1}{\eps^2})} iterations to compute an \mmode{\eps}-saddle point of \mmode{F}. To compare, recall that for a strongly-convex function \mmode{F}, Algorithm \ref{algo:NDRO-min-max} takes \mmode{O(\nicefrac{1}{\eps})} iterations to compute an \mmode{\eps}-saddle point. This trade-off between speed of convergence and precision in the approximation is a typical occurrence in standard smoothing techniques as well.

\section{Entropic Risk Portfolio Selection}
\label{section:er-risk-portfolio-selection}

This section is dedicated to study the proposed methodology and its required conditions for the entropic risk~\eqref{NDRO_entropic}. 
To this end, let \mmode{{\xi}(t)}, \mmode{t = 1,2,\ldots,T} be i.i.d. samples drawn from some unknown underlying distribution \mmode{\P_o} supported on \mmode{\Xi = \R{n}}. We assume that the individual components \mmode{\xi_i (t)}, \mmode{i = 1,2,\ldots,n}, are independently distributed from each other. Let \mmode{\Pref_i \define \frac{1}{T}\summ{t = 1}{T} \delta (\xi_i (t)) } for \mmode{i = 1,2,\ldots,n}, and let \mmode{\Pref = \Pi_{i = 1}^n \Pref_i}. For \mmode{c > 0} and any two distributions \mmode{\P, \Q} supported on \mmode{\R{}}, let \mmode{\Wdist{c} (\P, \Q)} be a Wasserstein distance between them defined as
\begin{equation}
\label{eq:er-risk-wasserstein-distance}
\Wdist{c} (\P, \Q) \define 
\begin{cases}
\begin{aligned}
& \sup_{\pi} && \EE{\pi} \left[  e^{c \abs{u - v}} \right]  \quad \text{where \mmode{(u,v)} are \mmode{\pi}-jointly distributed } \\
& \sbjto && \P(u) = \int_v \pi(u,v) dv \quad \text{ and } \quad \Q(v) = \int_u \pi(u,v) du .
\end{aligned}
\end{cases}
\end{equation}
For each \mmode{j = 1, 2 \ldots, n}, consider \mmode{\erwamb{j} \define \{ \P : \Wdist{c} ( \P, \Pref_j ) \leq \ambradius \}} and let \mmode{\amb_c = \Pi_{j = 1}^n \erwamb{j}}.
Let \mmode{\theta_j \in (0, 1)} for \mmode{j = 1,2,\ldots, n} be the risk aversion parameters, and for \mmode{c > \max\{ \theta_j : j = 1,2,\ldots,n \} }, we seek to solve
\begin{equation}
\label{eq:er-risk-portfolio-selection}
\min_{x \in \simplex{n}} \sup_{\P \in \amb_c} \quad \errisk{x}{\P} \symbolspace{\define}{} \summ{j = 1}{n}{\frac{1}{\theta_j} \log \left( \EE{\P_j} [e^{-\theta_j x_j \xi_j}] \right) }  .
\end{equation}
We will study~\eqref{eq:er-risk-portfolio-selection} in detail for the various notions of the directional derivative, smoothness, and the resulting FW-oracle with its tractable formulations.

\subsection{Regularity conditions}
Recalling the entropic risk~\eqref{eq:er-risk-L-r}, we note that in the optimization~\eqref{eq:er-risk-portfolio-selection} the mapping \mmode{\P \mapsto \errisk{x}{\P}} is an RR measure in the sense of Definition~\ref{def:RR-measure} with functions \mmode{L} and \mmode{r} given by
\[
L(x,\xi) = \Big( e^{-\theta_1 x_1 \xi_1} , e^{-\theta_2 x_2 \xi_2} , \ldots , e^{-\theta_n x_m \xi_n} \Big) \quad \text{and} \quad
\risk (z) = \summ{j = 1}{n} \frac{1}{\theta_j} \log(z_j) .
\]
Thus, in view of Lemma \ref{lemma:RR-directional derivatives} and using the short-hand notation \mmode{\erxpshrt{x}{\cdot} \define \errisk{x}{\cdot}}, we can write the directional derivatives of the risk measure \mmode{\ER_x} as
\begin{equation}
\label{eq:er-risk-derivative}
\erxpgrad{x}{\P}{\Q} = \summ{j = 1}{n}{ \frac{\EE{\Q_j - \P_j} [e^{-\theta_j x_j \xi_j}] }{ \theta_j \EE{\P_j} [e^{-\theta_j x_j \xi_j}]} } \quad \text{for every } \P, \Q \in \amb .
\end{equation}
It turns out that for any saddle point \mmode{(x\opt, \P\opt)} of the problem \eqref{eq:er-risk-portfolio-selection}, \mmode{\P\opt} belongs to a strictly smaller set \mmode{\amb_c'} contained in \mmode{\amb_c} and is of bounded support. The following Definition characterizes the smaller ambiguity set and the subsequent Lemma \ref{lemma:er-risk-smaller-ambiguity} formalizes this assertion.

\begin{definition}[Restricted ambiguity set]
For each \mmode{j = 1,2,\ldots, n}, let 
\[
\ximin{j} \define -\ambradius - \frac{\log(T)}{c} + \min_{t = 1,\ldots,T} \xi_j (t) \quad \text{ and } \quad \ximax{j} = \max_{t = 1,\ldots,T} \xi_j (t) ,
\]
let \mmode{ \mathcal{W}_j' \subset \erwamb{j} } be the collection of distributions supported on \mmode{[\ximin{j} , \ximax{j}]}, and let \mmode{\amb_c' \define \Pi_{j = 1}^n \mathcal{W}_j'}.
\end{definition}

\begin{lemma}[Restricted ambiguity set]
\label{lemma:er-risk-smaller-ambiguity}
A pair \mmode{(x\opt, \P\opt)} is a saddle point of \eqref{eq:er-risk-portfolio-selection} if and only if it is also a saddle point of 
\begin{equation}
\label{eq:er-risk-problem-smaller-ambiguity}
\min_{x \in \setx} \sup_{\P \in \amb_c'} \errisk{x}{\P} ,
\end{equation}
\end{lemma}
The crucial consequence of Lemma \ref{lemma:er-risk-smaller-ambiguity} is that it allows us to conclude regularity conditions for the risk measure \mmode{\errisk{x}{\P}} by restricting our analysis to the smaller ambiguity set \mmode{\amb_c'}. It turns out that the \emph{continuous derivatives} and \emph{smoothness} conditions (ii) and (iii) of Assumption \ref{assumption:NDRO-smoothness} respectively, are not satisfied for the entropic risk portfolio optimization problem \eqref{eq:er-risk-portfolio-selection} on the entire ambiguity set \mmode{\amb} but only on the smaller set \mmode{\amb_c'}. Lemma \ref{lemma:er-risk-smaller-ambiguity} ensures that we can indeed restrict the ambiguity set from \mmode{\amb_c} to \mmode{\amb_c'} without losing any optimal solution. The following lemma establishes the regularity conditions.

\begin{lemma}[Entropic risk regularity conditions]
\label{lemma:er-risk-smoothness-assumptions}
Consider the entropic risk portfolio optimization problem \eqref{eq:er-risk-portfolio-selection}. Let \mmode{\mathcal{E}_x (\cdot) \define \errisk{x}{\cdot}} for every \mmode{x \in \setx}, then the following assertions hold
\begin{enumerate}[label = {\rm (\roman*)}, itemsep = 0mm, topsep = 0mm, leftmargin = *]

\item \emph{Continuous derivatives:} The directional derivatives \mmode{\erxpgrad{x}{\P}{\Q}}, satisfy
\[
\erxpgrad{x}{\P}{\Q} - \erxpgrad{y}{\Q}{\P} \symbolspace{\leq}{\;} \pnorm{x - y}{2} \sqrt{ \summ{j = 1}{n} (\ximax{j} - \ximin{j})^2 e^{4 \theta_j (\ximax{j} - \ximin{j})} }, \quad \forall\, x,y \in \setx \quad \forall\,\P,\Q \in \amb_c'. 
\]

\item \emph{Smoothness:} The risk measure \mmode{\mathcal{E}_x} is \mmode{\smooth}-smooth in the sense of Definition \ref{def:smoothness} on \mmode{\amb_c'} and uniformly over \mmode{x \in \setx} for \mmode{\smooth =  \summ{j = 1}{n} \frac{1}{\theta_j} \big( e^{\theta_j (\ximax{j} - \ximin{j})} - 1 \big)^2 } .

\end{enumerate}
\end{lemma}
Besides the \emph{smoothness} and \emph{continuity} conditions, the uniform strong-convexity assumption is extremely difficult to verify for the entropic risk measure. Therefore, as discussed in Section \ref{subsection:slow-convergence-without-strong-convexity}, we add a strongly-convex regularizer of \mmode{x} to apply Algorithm \ref{algo:NDRO-min-max}.

\subsection{The FW Oracle}
The FW problem for minimum entropic risk portfolio selection \eqref{eq:er-risk-portfolio-selection}, at a given distribution \mmode{\P} is
\begin{equation}
\label{eq:er-risk-FW-problem}
\sup_{\Q \in \amb_c} \quad \summ{j = 1}{n} \frac{1}{\theta_j} \frac{ \EE{\Q_j - \P_j} [e^{-\theta_j x_j \xi_j}] }{ \EE{\P_j} [e^{-\theta_j x_j \xi_j}] } .
\end{equation}
Since \mmode{\Q = \Pi_{j = 1}^n \Q_j } and \mmode{\Q_j \in \erwamb{j}}, the FW problem \eqref{eq:er-risk-FW-problem} for entropic risk is separable into \mmode{n} different problems, and for each \mmode{j = 1,2,\ldots,n}, we have
\begin{equation}
\label{eq:er-risk-FW-problem-elementwise}
\sup_{\Q_j \in \erwamb{j}} \EE{\Q_j} [e^{- \theta_j x_j \xi_j}] .
\end{equation}

\begin{lemma}[Entropic risk FW oracle]
\label{lemma:er-risk-FW-oracle}
Let \mmode{\theta > 0}, \mmode{x \in [0, 1]}, and \mmode{z(t)}, \mmode{t = 1,2,\ldots,T,} be any arbitrary collection of real numbers and let \mmode{\Pref = \frac{1}{T} \summ{t = 1}{T} \delta(z(t)) } be a given discrete distribution. Consider the following problem
\begin{equation}
\label{eq:er-risk-FW-scalar-problem}
\sup_{ \Q \in \erwamb{} } \EE{\Q} [e^{- \theta x z}] .
\end{equation}
Let \mmode{\big(Z(t)\big)_t} be a non-increasing permutation of \mmode{\big(z(t)\big)_t}, i.e., we have \mmode{Z(1) \geq Z(2) \geq \cdots \geq Z(T)}. Let \mmode{T' \in \{1, 2, \ldots, T \} } be the smallest integer such that \mmode{ (T e^{c \ambradius} - T' ) e^{\frac{- c \theta x Z(T') }{c - \theta x}}\geq \summ{t = T' + 1}{T} e^{\frac{- c \theta x Z(t) }{c - \theta x}} }, define
\begin{equation}
\label{eq:er-risk-scalar-problem-solution}
\begin{cases}
\begin{aligned}
\eta\opt & \define \frac{\theta x}{c} \left( \frac{1}{T e^{c\ambradius} - T'} \summ{t = T' + 1}{T} e^{ \frac{- c \theta x Z(t)}{c - \theta x} } \right)^{\frac{c - \theta x}{\theta x}} \text{ and } \\
z\opt(t) & \define \min \left\{ z(t) , \frac{ c z(t) + \log (\nicefrac{c \eta\opt}{\theta x}) }{ c - \theta x } \right\} \quad \text{ for } t = 1,2,\ldots,T .
\end{aligned}
\end{cases}
\end{equation}
\begin{enumerate}[label = {\rm (\roman*)}, itemsep = 0mm, topsep = 0mm, leftmargin = *]
\item {\em Optimal solution}: The discrete distribution \mmode{\Q\opt = \frac{1}{T} \summ{t = 1}{T} \delta (z\opt (t)) } is the optimal solution \mmode{\Q\opt} to the linear worst-case distribution problem \eqref{eq:er-risk-FW-scalar-problem}.

\item {\em Lower and upper bounds}: For any \mmode{t = 1, 2,\ldots, T} we have 
    \[
    \underline{z} \define  - \ambradius - \frac{\log(T)}{c} + \min_{t = 1,\ldots,T} z(t) \leq z\opt (t) \leq \overline{z} \define \max_{t = 1 ,\ldots, T} z(t) .
    \]
\end{enumerate}
\end{lemma}

\begin{corollary}[FW restricted ambiguity set]
For any \mmode{x \in \simplex{n}} and \mmode{\P \in \amb}, we have
\begin{equation}
\label{eq:er-risk-FW-arg-max-equality}
\arg\max_{\Q \in \amb_c} \erxpgrad{x}{\P}{\Q} \quad = \quad \arg\max_{\Q' \in \amb_c'} \erxpgrad{x}{\P}{\Q'} .
\end{equation}
\end{corollary}

\subsection{Simulation results}
We validate the convergence properties of Algorithm \ref{algo:NDRO-min-max} for the NDRO problem of the entropic risk portfolio selection \eqref{eq:er-risk-portfolio-selection} with unrestricted support (i.e., \mmode{\Xi = \R{n} }). For each \mmode{j = 1,2,\ldots, n}, the samples \mmode{\xi_j(t)}, for \mmode{t = 1,2,\ldots, T = 2n}, are drawn randomly from bi-exponential distribution with density \mmode{f_j (\cdot) = \lambda_j e^{- \lambda_j \abs{\cdot}}} with each parameter \mmode{\lambda_j} drawn uniformly from \mmode{[0,1]}, and the distribution \mmode{\Pref} is uniformly distributed over the samples drawn. The risk aversion parameter \mmode{\theta_j} are drawn uniformly from \mmode{[0,1]} for each \mmode{j = 1,2,\ldots,n}, and we select \mmode{c = 1} to define the Wasserstein distance in \eqref{eq:er-risk-wasserstein-distance} and the resulting ambiguity set \mmode{\amb}. For \mmode{n = 250} and various values of \mmode{\ambradius} (the radius of the ambiguity set), we solve \eqref{eq:er-risk-portfolio-selection}. The proposed FW-based method (Algorithm \ref{algo:NDRO-min-max} with the FW-oracle of Proposition \ref{lemma:er-risk-FW-oracle}) is implemented in {\fontfamily{pcr}\selectfont MATLAB} on a Macbook Air (M1 with 8GB RAM), wherein the minimization problem: \mmode{\min_{x \in \simplex{n}} \ \errisk{x}{\P_k}} is solved by running the FISTA algorithm \cite{beck2009fast} until convergence.

For three distinct values of \mmode{\ambradius = 1, 5}, and \mmode{10}, Figure \ref{fig:er-risk-simulation-plots-noreg} shows the convergence of Algorithm \ref{algo:NDRO-min-max} with \mmode{\keps = 350}, on the entropic risk portfolio selection problem \eqref{eq:er-risk-portfolio-selection}. The plots on the top show the evolution of the {\em Primal} and {\em Dual} functions: \mmode{\min_{x \in \simplex{n}} \errisk{x}{\P_k}} and \mmode{\sup_{\P \in \amb_c} \errisk{x_k}{\P}} respectively, w.r.t.\,the iteration \mmode{k} of the algorithm. Whereas, the plots at the bottom of the figure show the evolution of the two sub-optimality metrics:
\[
\begin{cases}
\begin{aligned}
\text{\emph{primal sub-optimality}} \quad &: \quad \quad \ER\opt - \min\limits_{x \in \simplex{n}} \errisk{x}{\P_k} \quad \text{and} \\
\text{\emph{duality gap}} \quad &:  \quad \sup\limits_{\P \in \amb_c} \errisk{x_k}{\P} \symbolspace{-}{\;} \min\limits_{x \in \simplex{n}} \ \errisk{x}{\P_k}   .
\end{aligned}
\end{cases}
\]
Since Algorithm \ref{algo:NDRO-min-max} explicitly solves the minimization over \mmode{x} in each iteration (see \eqref{eq:FW-oracle-NDRO-regime-1}), the primal function is readily available. However, this is not the case with the dual function which needs to be computed independently at each iteration by solving \mmode{\sup_{\P \in \amb_c} \ \erxpshrt{x_k}{\P}} while keeping the current iterate \mmode{x_k} fixed.
\begin{figure}[t]
    \centering
    \begin{subfigure}[b]{0.32\textwidth}
    \raggedleft
\includegraphics[width=  5cm]{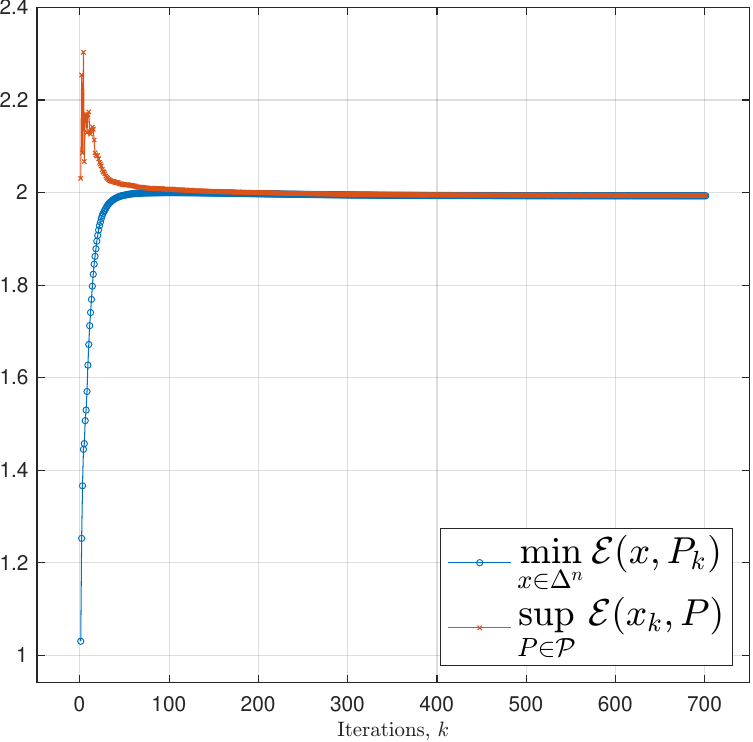}
\vspace{0.2cm}
\includegraphics[width=  5cm]{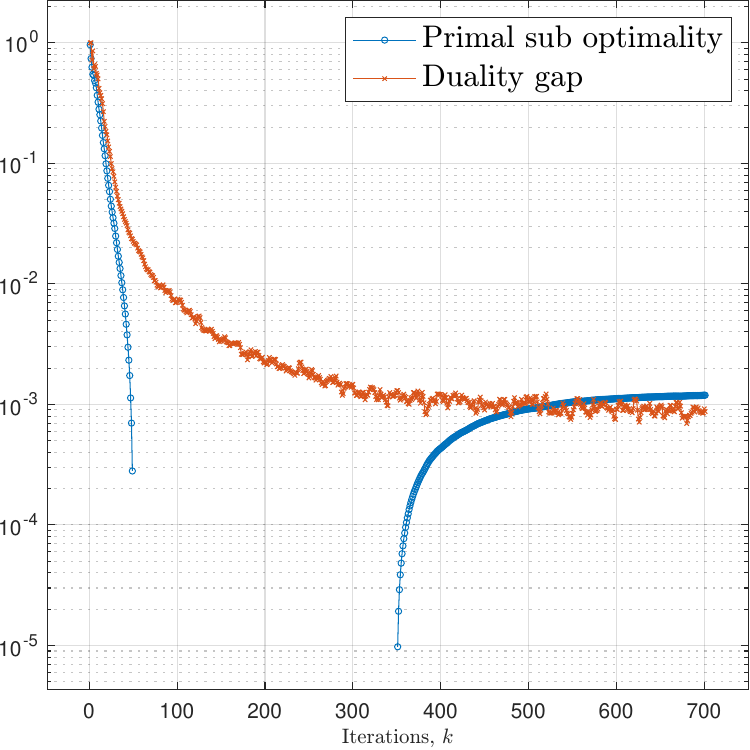}
    \caption{\mmode{\ambradius = 1}, \mmode{\alpha = 0}}
    \label{fig:er-risk-rho1alpha0}
    \end{subfigure}
    \hfill
    \begin{subfigure}[b]{0.32\textwidth}
    \raggedleft
\includegraphics[width=  5cm]{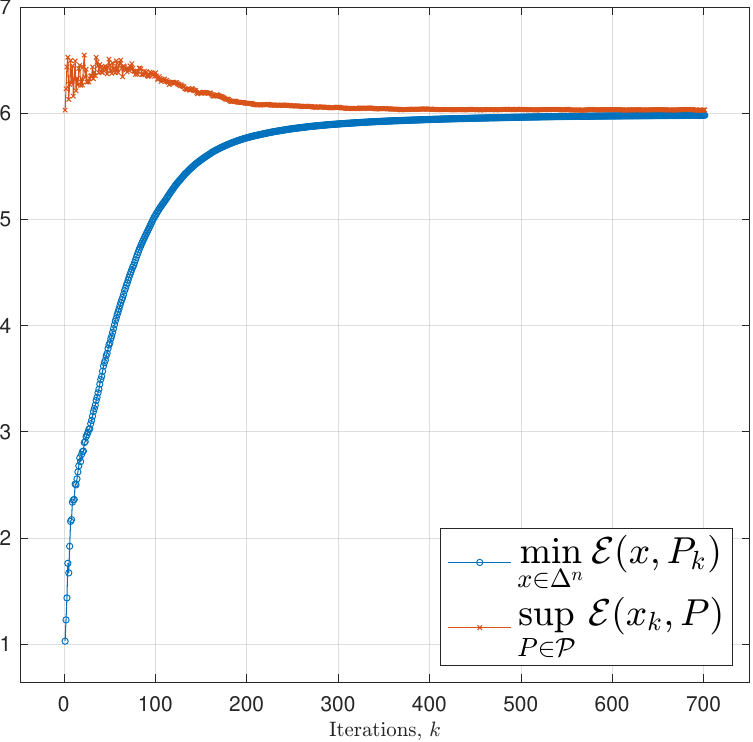}
\vspace{0.2cm}
\includegraphics[width=  5cm]{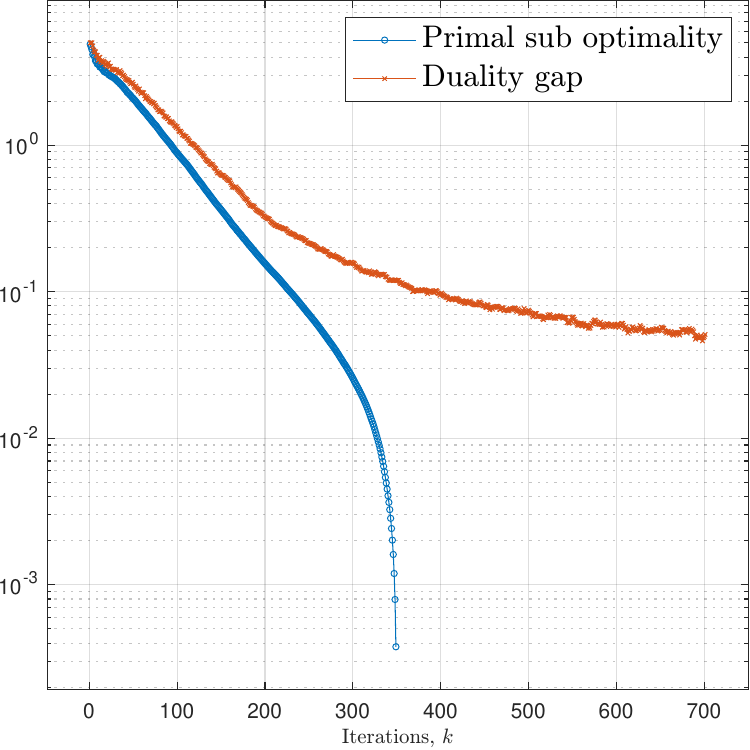}
    \caption{\mmode{\ambradius = 5}, \mmode{\alpha = 0}}
    \label{fig:er-risk-rho5alpha0}
    \end{subfigure}
    \hfill
    \begin{subfigure}[b]{0.32\textwidth}
    \raggedleft
\includegraphics[width=  5cm]{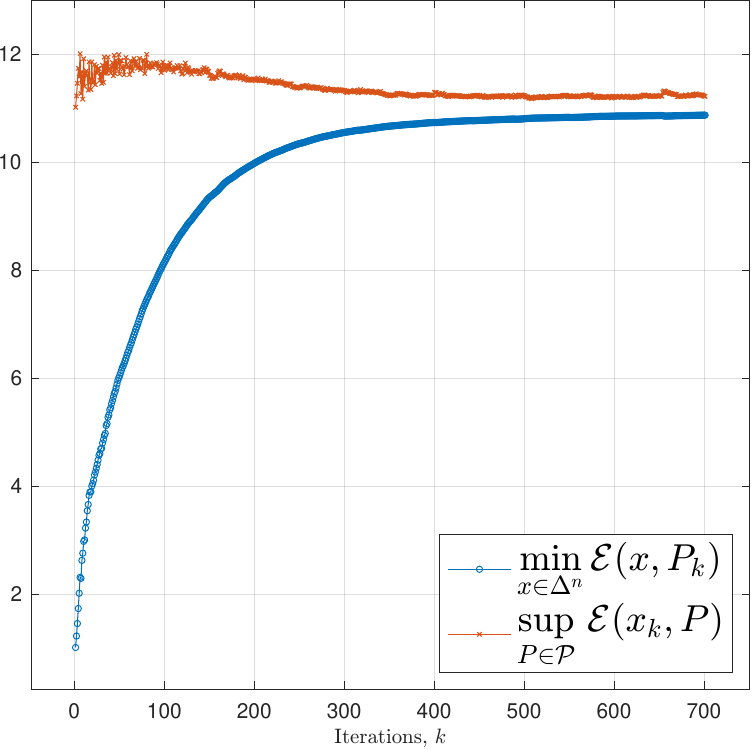}
\vspace{0.2cm}
\includegraphics[width=  5cm]{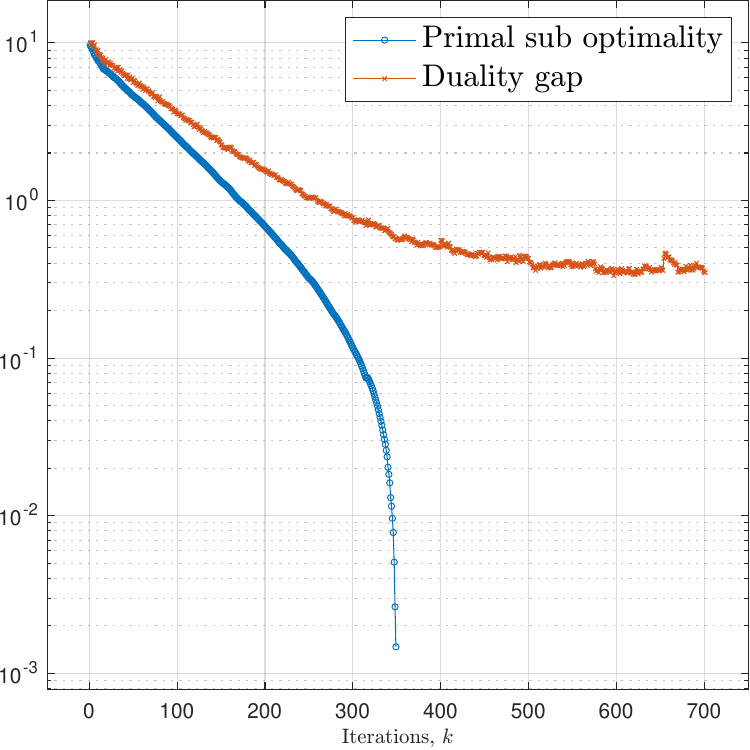}
    \caption{\mmode{\ambradius = 10}, \mmode{\alpha = 0}}
    \label{fig:er-risk-rho10alpha0}
    \end{subfigure}
    \caption{Convergence plots for Algorithm \ref{algo:NDRO-min-max} with \mmode{\keps = 350}, applied to \eqref{eq:er-risk-portfolio-selection}.}
    \label{fig:er-risk-simulation-plots-noreg}
\end{figure}

\paragraph{{\bf With explicit regularization of \mmode{x}}} We observe in Figure \ref{fig:er-risk-simulation-plots-noreg} that, as the value of \mmode{\ambradius} increases, the convergence of the algorithm becomes extremely slow. This can be attributed to the fact that the strong-convexity parameter becomes extremely small, and thus the resulting smoothness constant is prohibitively large, making the convergence slow. To remedy this, as suggested in Section \ref{subsection:slow-convergence-without-strong-convexity}, we compute an approximate saddle point of \eqref{eq:er-risk-portfolio-selection} by applying the FW-algorithm to the explicitly regularized min-sup problem
\begin{equation}
\label{eq:er-risk-simulation-problem-with-reg}
\min_{x \in \simplex{n}} \sup_{\P \in \amb_c} \frac{\alpha}{2} \pnorm{x}{2}^2 + \errisk{x}{\P} .  
\end{equation}
Since \mmode{\pnorm{x}{2} \leq \pnorm{x}{1} = 1} for all \mmode{x \in \simplex{n}}, we know that an \mmode{\eps}-saddle point of \eqref{eq:min-var-ellips-simulation-problem} can be computed for any \mmode{\eps > 0} by applying Algorithm \ref{algo:NDRO-min-max} to \eqref{eq:min-var-ellips-simulation-problem-with-reg} with \mmode{\alpha = 2\eps}.
\begin{figure}[t]
    \centering
    \begin{subfigure}[b]{0.32\textwidth}
    \raggedleft
\includegraphics[width=  5cm]{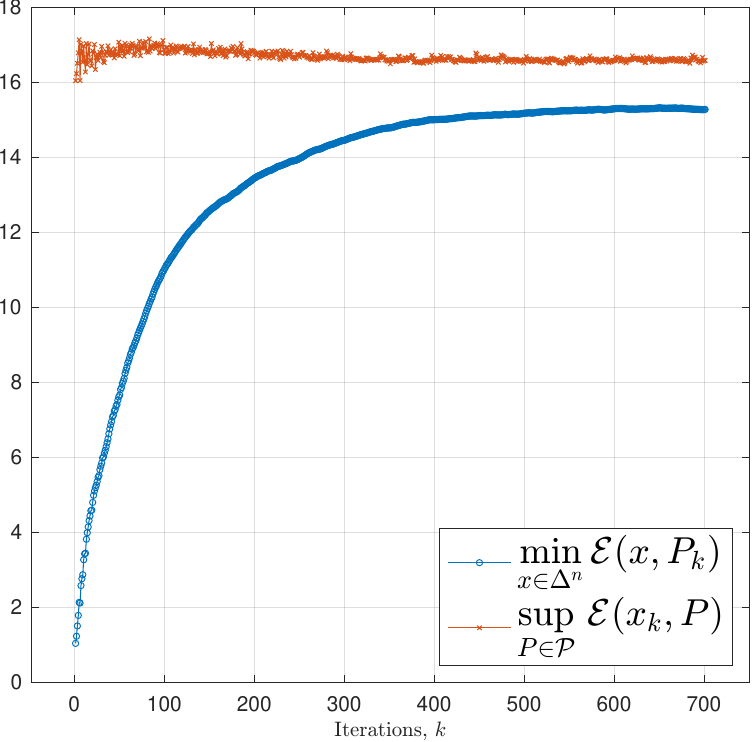}
\vspace{0.2cm}
\includegraphics[width=  5cm]{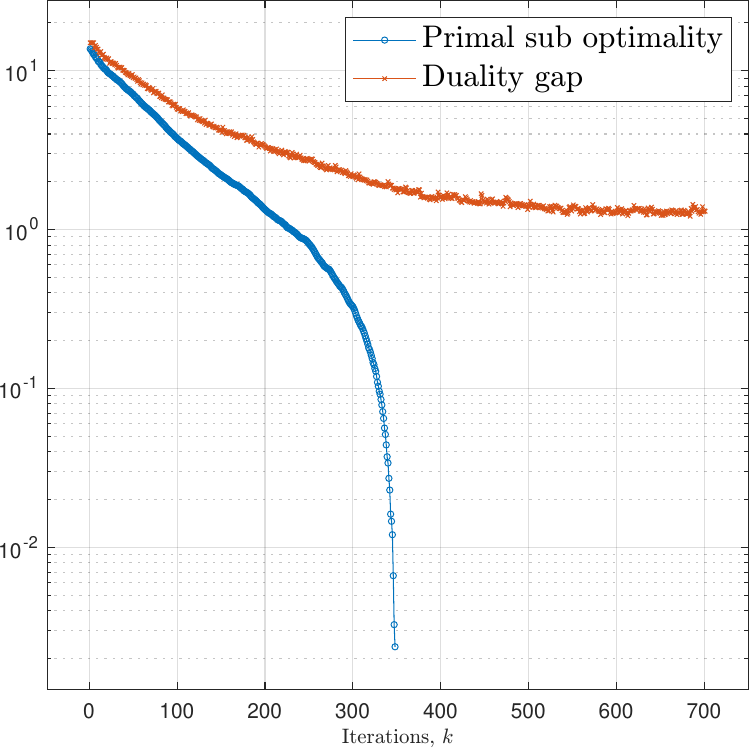}
    \caption{\mmode{\ambradius = 15}, \mmode{\alpha = 0}}
    \label{fig:er-risk-rho15alpha0}
    \end{subfigure}
    \hfill
    \begin{subfigure}[b]{0.32\textwidth}
    \raggedleft
\includegraphics[width=  5cm]{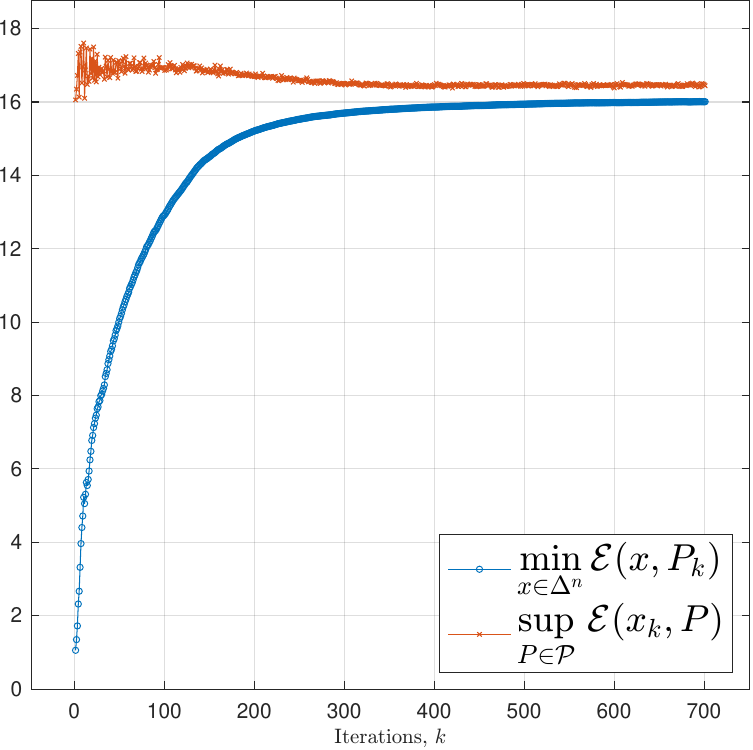}
\vspace{0.2cm}
\includegraphics[width=  5cm]{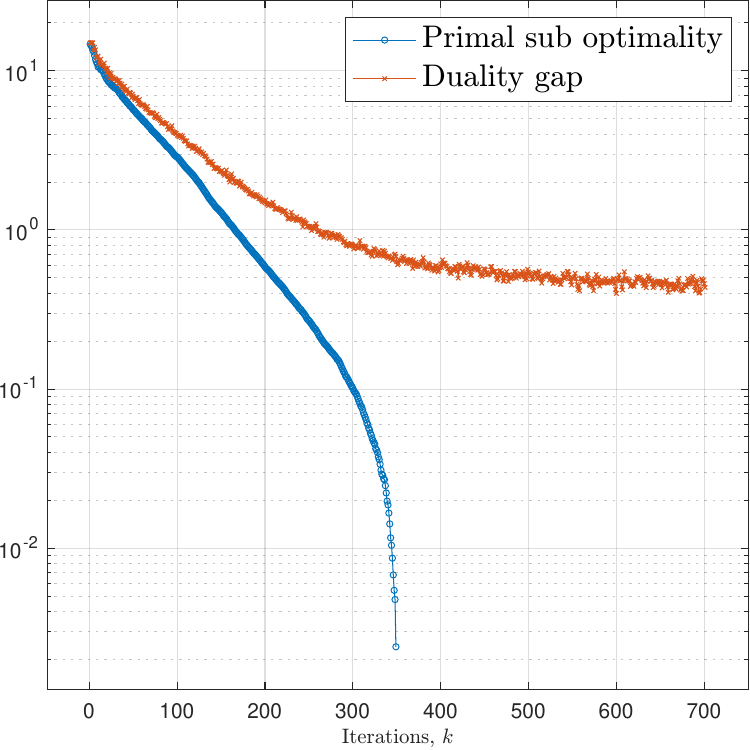}
    \caption{\mmode{\ambradius = 15}, \mmode{\alpha = 5}}
    \label{fig:er-risk-rho15alpha5}
    \end{subfigure}
    \hfill
    \begin{subfigure}[b]{0.32\textwidth}
    \raggedleft
\includegraphics[width=  5cm]{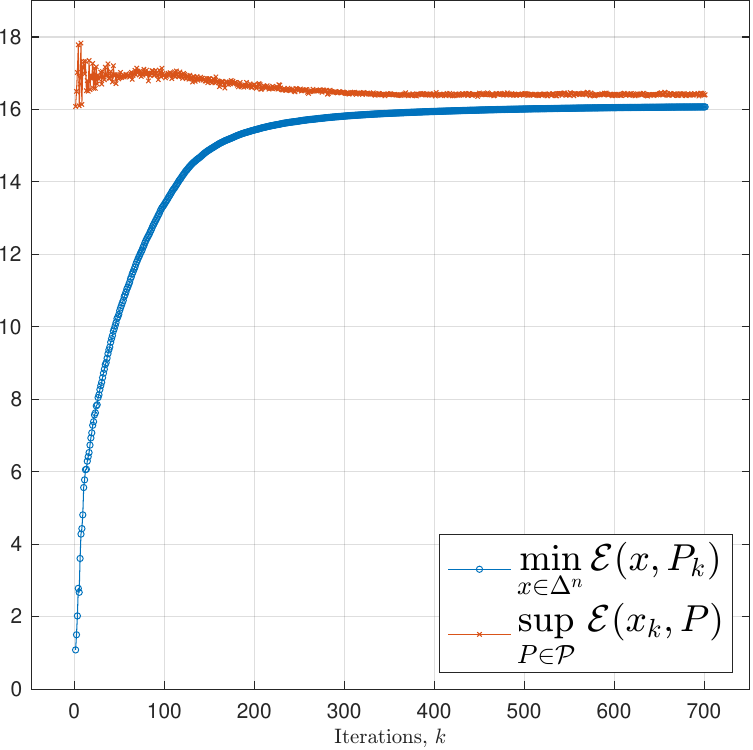}
\vspace{0.2cm}
\includegraphics[width=  5cm]{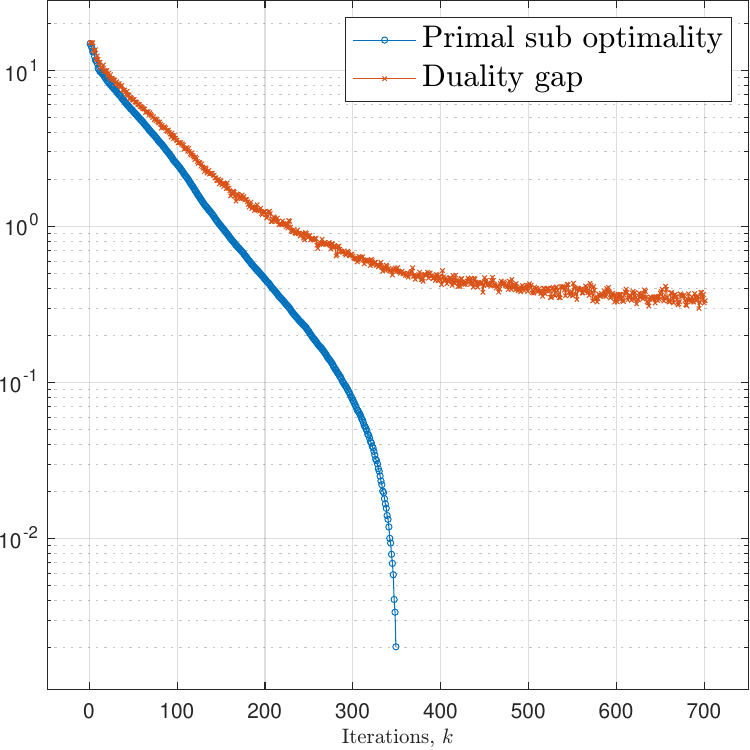}
    \caption{\mmode{\ambradius = 15}, \mmode{\alpha = 10}}
    \label{fig:er-risk-rho15alpha10}
    \end{subfigure}
    \caption{Convergence plots for Algorithm \ref{algo:NDRO-min-max} with \mmode{\keps = 350}, applied to \eqref{eq:er-risk-simulation-problem-with-reg}.}
    \label{fig:er-risk-simulation-plots-withreg}
\end{figure}

In Figure \ref{fig:er-risk-simulation-plots-withreg}, we show the convergence plots of Algorithm \ref{algo:NDRO-min-max} when applied to the explicitly regularized problem \eqref{eq:er-risk-simulation-problem-with-reg}. We consider the same data set from Figure \ref{fig:er-risk-simulation-plots-noreg} but with a slightly larger value of \mmode{\ambradius = 15}, making the conditioning of the problem even worse than that for \mmode{\ambradius = 10}. We show the convergence plots of the algorithm for three different values of \mmode{\alpha = 0, 5 , 10}. Clearly, for \mmode{\alpha = 0}, the problem is not regularized, and the convergence is bad (even worse than that for \mmode{\ambradius = 10} from Figure \ref{fig:er-risk-rho10alpha0}). Then the effect of explicit regularization and how it improves the regularity and convergence can be clearly seen in Figure \ref{fig:er-risk-rho15alpha5} and \ref{fig:er-risk-rho15alpha10}.

\section{Minimum Variance Portfolio Selection}
\label{section:min-variance-portfolio-selection}

The minimum variance portfolio selection~\eqref{NDRO_var} is one of the textbook examples of NDRO~\cite{markowitz1952portfolio, pflug2007ambiguity, blanchet2022distributionally,ref:nguyen2021mean}. This section is dedicated to studying the NDRO regularity of this example and the different aspects of the proposed algorithm in this context. To this end, let \mmode{\widehat{\xi}_i}: \mmode{i = 1,2,\ldots,N} be i.i.d. samples drawn from some unknown underlying distribution \mmode{\P_o}, and let \mmode{\Pref} be the nominal distribution that is uniformly distributed over the samples \mmode{(\widehat{\xi}_i)_i}. Let \mmode{\amb = \wamb{m} \define \{ \Q : \Wdist{m} (\Pref, \Q) \leq \ambradius \}}, where \mmode{\Wdist{m}} is the \mmode{m}-th order Wasserstein distance between the distributions, induced by the transportation cost coming from a norm \mmode{\norm{\cdot}} on \mmode{\Xi}. More specifically, we have
\[
\Wdist{m} (\P, \Q) \define 
\begin{cases}
\begin{aligned}
& \sup_{\pi} && \Big( \EE{\pi} [\norm{u - v}^m] \Big)^{\nicefrac{1}{m}} \quad \text{where \mmode{(u,v)} are \mmode{\pi}-jointly distributed } \\
& \sbjto && \P(u) = \int_v \pi(u,v) dv \quad \text{ and } \quad \Q(v) = \int_u \pi(u,v) du .
\end{aligned}
\end{cases}
\]
Associated with the distribution \mmode{\P}, let \mmode{\sgmap{\P} \define \EE{\P} [\xi \xi\transp]} and \mmode{\mup{\P} \define \EE{\P} [\xi] } denote its first and second moments respectively. Finally, let \mmode{\setx \subset \simplex{n} } denote the set of feasible actions, then we seek to investigate the min-sup problem:
\begin{equation}
\label{eq:min-variance-portfolio-selection}
\min_{x \in \setx} \sup_{\P \in \wamb{m}} \vrisk{x}{\P} \symbolspace{\define}{} x\transp \big( \sgmap{\P} - \mup{\P} \mup{\P}\transp \big) x .
\end{equation}
We shall study in detail the notion of the directional derivative, smoothness, and the resulting FW-oracle with its tractable formulations for~\eqref{eq:min-variance-portfolio-selection} under different conditions.

\subsection{Regularity conditions}
We recall that the variance risk~\eqref{eq:variance-L-r} and note that the mapping \mmode{\P \mapsto \vxpshrt{x}{\P}} is an RR measure in the sense of Definition \ref{def:RR-measure} with the respective sufficient statistic~\mmode{L} and the function~\mmode{r}
\[
L (\xi) = \; (\xi \xi\transp , \, \xi) \quad \text{and} \quad \risk \big(x, (\Sigma, \mu)\big) = \; x\transp \Sigma x - \big( x\transp \mu \big)^2 .
\]
Unlike the risk entropic in Section~\ref{section:er-risk-portfolio-selection}, the variance sufficient statistic~\mmode{L} above is not influenced by the decision~$x$ (cf.\,the two regular risk classes in \eqref{eq:NDRO-regular}). This is a particularly useful feature to control the complexity of the FW iteration~\eqref{eq:FW-iterations-main} by only tracing the finite-dimensional sufficient statistic; see also Remark~\ref{rem:suff-stat}. In view of Lemma~\ref{lemma:RR-directional derivatives} and using the risk functions the short-hand notation \mmode{\vxpshrt{x}{\cdot} \define \vrisk{x}{\cdot}}, we can describe the directional derivatives of \mmode{V_x} as
\begin{equation}
\label{eq:variance-derivative}
\vxpgrad{x}{\P}{\Q} \, = \,  \EE{\Q - \P} \big[ x\transp(\xi - 2 \mup{\P})\xi\transp x \big] \quad \text{for every } \P, \Q \in \wamb{m} .
\end{equation}

\begin{assumption}[Variance moments bound]
\label{assumption:min-variance-bounded-moments}
There exists constants \mmode{B_{\Sigma} , B_{\mu} \geq 0} such that
\begin{equation}
\label{eq:bounded-mean-variance}
\pnorm{\Sigma_{\Q} - \Sigma_{\P}}{o}  \leq B_{\Sigma} \quad \text{ and } \quad \pnorm{\mu_{\Q} - \mu_{\P}}{2}  \leq B_{\mu} \quad \text{ for all } \quad \P,\Q \in \wamb{m} ,
\end{equation}
where \mmode{\Sigma_{\P} = \EE{\P} [\xi\xi\transp]} and \mmode{\mu_{\P} = \EE{\P} [\xi]}.
\end{assumption}

\begin{lemma}[Variance regularity conditions]
\label{lemma:variance-smoothness-assumptions}
Consider the minimum variance portfolio optimization problem \eqref{eq:min-variance-portfolio-selection}, and suppose that Assumption \ref{assumption:min-variance-bounded-moments} holds with constants \mmode{B_{\Sigma} , B_{\mu}}. Let \mmode{V_x (\cdot) \define \vrisk{x}{\cdot}} for every \mmode{x \in \setx}, then the following assertions hold
\begin{enumerate}[label = {\rm (\roman*)}, itemsep = 0mm, topsep = 0mm, leftmargin = *]

\item \emph{Continuous derivatives:} For \mmode{\smooth_1 = 2 \Big( B_{\Sigma} + 2 \big( B_{\mu} + \pnorm{\muref}{2} \big)^2 + B_{\mu}^2 \Big)}, the directional derivatives \mmode{\vxpgrad{x}{\P}{\Q}}, satisfy
\[
\vxpgrad{x}{\P}{\Q} - \vxpgrad{y}{\Q}{\P} \symbolspace{\leq}{\;} \smooth_1 \pnorm{x - y}{2}, \qquad \forall\, x,y \in \setx \quad \forall\,\P,\Q \in \amb. 
\]

\item \emph{Smoothness:} The risk measure \mmode{V_x} is \mmode{\big( 2 B_{\mu}^2 \big)}-smooth in the sense of Definition \ref{def:smoothness}, uniformly over \mmode{x \in \setx}.

\end{enumerate}
\end{lemma}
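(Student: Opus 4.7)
\textbf{Proof plan for Lemma~\ref{lemma:variance-smoothness-assumptions}.}

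My approach is to work directly with the explicit G-derivative formula rather than appealing to Lemma~\ref{lemma:RR-smoothness}, because the $\mu$-dependence in $r(\Sigma,\mu) = x^\top\Sigma x - (x^\top\mu)^2$ is quadratic, and a direct calculation will reveal useful cancellations that a blind application of the generic bound would miss. Expanding $\EE{\Q-\P}[x^\top(\xi - 2\mup{\P})\xi^\top x] = (x^\top\xi)^2 - 2(x^\top\mup{\P})(x^\top\xi)$ under both measures gives the clean form
\[
\vxpgrad{x}{\P}{\Q} \symbolspace{=}{\;} x^\top(\sgmap{\Q} - \sgmap{\P})x \;-\; 2(x^\top\mup{\P})\,x^\top(\mup{\Q} - \mup{\P}),
\]
which is the workhorse for both parts.

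For part~(i), I would fix $\P,\Q \in \wamb{m}$, set $\P_\stepsize = \P + \stepsize(\Q-\P)$, and note that both moment maps are affine along the segment: $\mup{\P_\stepsize} - \mup{\P} = \stepsize(\mup{\Q}-\mup{\P})$ and similarly for $\sgmap{\cdot}$. Substituting into the boxed formula, one sees that $\vxpgrad{x}{\P}{\P_\stepsize}$ contributes $+\stepsize\, x^\top(\sgmap{\Q}-\sgmap{\P})x$ (by positive homogeneity) while $\vxpgrad{x}{\P_\stepsize}{\P}$ contributes $-\stepsize\, x^\top(\sgmap{\Q}-\sgmap{\P})x$, so the $\sgmap{}$-terms cancel and the $\mu$-cross terms combine to
\[
\vxpgrad{x}{\P_\stepsize}{\P} + \vxpgrad{x}{\P}{\P_\stepsize} \symbolspace{=}{\;} 2\stepsize\, x^\top(\mup{\P_\stepsize}-\mup{\P})\cdot x^\top(\mup{\Q}-\mup{\P}) \symbolspace{=}{\;} 2\stepsize^2\bigl(x^\top(\mup{\Q}-\mup{\P})\bigr)^2.
\]
Then I would apply Cauchy--Schwarz together with $\setx \subset \simplex{n}$ (so $\pnorm{x}{2} \leq \pnorm{x}{1} = 1$) and Assumption~\ref{assumption:min-variance-bounded-moments} to bound the right-hand side by $2\stepsize^2 B_\mu^2$, which yields $\smooth = 2B_\mu^2$.

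For part~(ii), I would write the left-hand side as a sum $T_1+T_2$, where $T_1 = x^\top A x - y^\top A y$ with $A = \sgmap{\Q}-\sgmap{\P}$ and $T_2 = -2\bigl[(x^\top\mup{\P})x^\top(\mup{\Q}-\mup{\P}) - (y^\top\mup{\P})y^\top(\mup{\Q}-\mup{\P})\bigr]$. For $T_1$ I use the identity $x^\top A x - y^\top A y = (x-y)^\top A (x+y)$, Cauchy--Schwarz in the operator norm, and $\pnorm{x+y}{2}\leq 2$ to get $|T_1|\leq 2B_\Sigma\pnorm{x-y}{2}$. For $T_2$ I telescope
\[
(x^\top a)(x^\top b) - (y^\top a)(y^\top b) \symbolspace{=}{\;} (x-y)^\top a \cdot x^\top b + y^\top a \cdot (x-y)^\top b
\]
with $a=\mup{\P}$, $b=\mup{\Q}-\mup{\P}$, and bound each factor via Cauchy--Schwarz. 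The crucial ingredient is the a priori bound $\pnorm{\mup{\P}}{2} \leq \pnorm{\muref}{2} + B_\mu$, which follows from $\Pref \in \wamb{m}$ and Assumption~\ref{assumption:min-variance-bounded-moments}; this gives $|T_2|\leq 4(\pnorm{\muref}{2}+B_\mu)B_\mu\pnorm{x-y}{2}$. Combining, a quick algebraic check shows
\[
2B_\Sigma + 4(\pnorm{\muref}{2}+B_\mu)B_\mu \symbolspace{\leq}{\;} 2\bigl(B_\Sigma + 2(B_\mu + \pnorm{\muref}{2})^2 + B_\mu^2\bigr) \symbolspace{=}{\;} \smooth_1,
\]
the residual slack coming from a trivial $4B_\mu s \leq 4s^2 + 2B_\mu^2$ inequality with $s = B_\mu + \pnorm{\muref}{2}$.

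The main obstacle I anticipate is not analytical but bookkeeping: keeping the cross terms in~$T_2$ organized so that the correct dependence on $\pnorm{\muref}{2}$ (rather than the unbounded $\pnorm{\mup{\P}}{2}$) emerges, and then verifying that the resulting constant fits inside the slightly loose $\smooth_1$ stated in the lemma. The use of $\setx \subset \simplex{n}$ in part~(i) is easy to overlook but essential, since without $\pnorm{x}{2}\leq 1$ the smoothness constant would scale with the diameter of~$\setx$.
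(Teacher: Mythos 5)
Your proposal is correct and follows essentially the same route as the paper: the explicit moment-based formula for \(\vxpgrad{x}{\P}{\Q}\), the affinity of \(\sgmap{\cdot}\) and \(\mup{\cdot}\) along the segment \(\P_\stepsize\) leading to the exact cancellation of the \(\Sigma\)-terms and the identity \(\vxpgrad{x}{\P}{\P_\stepsize}+\vxpgrad{x}{\P_\stepsize}{\P}=2\stepsize^2\big(x\transp(\mup{\Q}-\mup{\P})\big)^2\), followed by Cauchy--Schwarz with \(\pnorm{x}{2}\leq\pnorm{x}{1}=1\) and the a priori bound \(\pnorm{\mup{\P}}{2}\leq\pnorm{\muref}{2}+B_\mu\). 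The only (cosmetic) divergence is in part (ii), where the paper rewrites the \(\mu\)-terms as differences of rank-one quadratic forms and applies \(x\transp Mx-y\transp My\leq 2\pnorm{M}{o}\pnorm{x-y}{2}\) four times, whereas you telescope the bilinear product directly; your resulting constant \(2B_\Sigma+4(\pnorm{\muref}{2}+B_\mu)B_\mu\) is in fact slightly tighter than the stated \(\smooth_1\), and your verification that it is dominated by \(\smooth_1\) is valid (note also that both your argument and the paper's prove the bound with \(\vxpgrad{y}{\P}{\Q}\) in the second term, consistent with \eqref{eq:continuous-partial-derivatives}; the swap of \(\P\) and \(\Q\) in the lemma statement appears to be a typo).
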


If Assumption \ref{assumption:min-variance-bounded-moments} holds, then Lemma \ref{lemma:variance-smoothness-assumptions} ensures that the \emph{continuous derivatives} and \emph{smoothness} conditions (ii) and (iii) of Assumption \ref{assumption:NDRO-smoothness} respectively, are satisfied for the minimum variance portfolio optimization problem \eqref{eq:min-variance-portfolio-selection}. However, the uniform strong-convexity assumption need not hold in general. For instance, if the ambiguity set is large (i.e., if \mmode{\ambradius} is large), then all the data points \mmode{\xi_i} can be perturbed within their respective \mmode{\ambradius}-neighborhoods such that the variance corresponding to the perturbed points is rank deficient. Thus, the strong-convexity condition in Assumption \ref{assumption:NDRO-smoothness} is not satisfied at the distribution supported over the perturbed points. In such cases, adding an explicit strongly-convex regularizer to apply Algorithm \ref{algo:NDRO-min-max} (as discussed in Section \ref{subsection:slow-convergence-without-strong-convexity}) not only provides theoretical guarantees for convergence but also improves the speed of the algorithm in our observation; see the numerical simulations concerning Figure~\ref{fig:min-variance-withreg}.

\subsection{The Frank-Wolfe algorithm}
\paragraph{{\bf The FW-Oracle}} Let us consider the FW-problem arising in the NDRO problem of minimum variance portfolio selection \eqref{eq:min-variance-portfolio-selection},
\begin{equation}
\label{eq:min-variance-FW-problem}
\sup\limits_{\Q \in \wamb{m}} \vxpgrad{x}{\P}{\Q} .
\end{equation}
Since the directional derivatives \mmode{\vxpgrad{x}{\P}{\Q}} are affine in \mmode{\Q} for every pair \mmode{(x,\P)}, the corresponding FW-problem is linear. However, the existence and characterization results of the solution to the FW-problem \eqref{eq:min-variance-FW-problem} change depending on the interplay of (i) the Wasserstein distance type~$m$, (ii) the transportation cost~$\|\cdot\|$, and (iii) the support set~$\Xi$ (unbounded, or compact). Therefore, the specific settings for which the corresponding FW-oracle is easy to describe are discussed later in the section. To this end, we simplify \eqref{eq:min-variance-FW-problem}, and study its dual problem which is used later in the characterization of a solution to \eqref{eq:min-variance-FW-problem}.

It is a simple algebraic exercise to verify that
\[
\begin{aligned}
\vxpgrad{x}{\P}{\Q} \; &= \; \EE{\Q - \P} \Big[ x\transp \big( (\xi - 2 \mup{\P})\xi\transp \big)x \big] \\
&= \; \EE{\Q} \left[ (\xi - \mu_{\P})\transp \big(xx\transp\big) (\xi - \mu_{\P}) \right] \symbolspace{-}{\;} x\transp \big( \sgmap{\P} - \mup{\P} \mup{\P}\transp  \big)x .
\end{aligned}
\]
Since the distribution \mmode{\P} is constant in \eqref{eq:min-variance-FW-problem}, eliminating terms that only depend on \mmode{\P} does not affect the set of maximizers. Thus, we have
\begin{equation}
\label{eq:min-variance-FW-simplification}
\argmax\limits_{\Q \in \wamb{m}} \vxpgrad{x}{\P}{\Q} \ = \argmax\limits_{\Q \in \wamb{m}} \EE{\Q} \left[ (\xi - \mu_{\P})\transp \big(xx\transp\big) (\xi - \mu_{\P}) \right] .
\end{equation}
We now focus on a generic version of the linear worst-case distribution problem:
\begin{equation}
\label{eq:min-variance-ldro-general}
\sup_{\Q \in \wamb{m}} \EE{\Q} [(\xi- v)\transp \big(xx\transp\big) (\xi - v)] ,
\end{equation}
for any given \mmode{x, v \in \R{n}}. We known that \cite[Theorem 7]{kuhn2019wasserstein} \eqref{eq:min-variance-ldro-general} admits an equivalent dual formulation given by
\begin{equation}
\label{eq:min-variance-dual}
\inf_{\eta \geq 0} \; \eta \ambradius^m + \frac{1}{N} \summ{i = 1}{N}
\begin{cases}
\begin{aligned}
& \sup_{q_i} && (q_i + \xi_i - v)\transp \big(xx\transp\big) (q_i + \xi_i - v) - \eta \norm{q_i}^m \\
& \sbjto && q_i + \xi_i \in \Xi \quad \text{for all} \quad i = 1,2,\ldots,N.
\end{aligned}
\end{cases}
\end{equation}
For \mmode{\eta \geq 0} and \mmode{\xi \in \Xi}, let
\begin{equation}
\label{eq:min-var-perturbation-problem}
q (\eta , \xi, v) \define \begin{cases}
\begin{aligned}
& \argmax_{q} && \big( (x\transp q) + x\transp (\xi - v)\transp \big)^2 - \eta \norm{q}^m \\
& \sbjto && q + \xi \in \Xi ,
\end{aligned}
\end{cases}
\end{equation}
whenever an optimal solution exists. If the dual problem \eqref{eq:min-variance-dual} admits an optimal solution \mmode{\etax{x}}, and \mmode{q(\etax{x} , \xi_i , v)} exists for each \mmode{i = 1,2,\ldots,N}; then for any collection \mmode{q'_i \in q (\etax{x} , \xi_i , v)}, \mmode{ i = 1,2,\ldots,N}, the discrete distribution \mmode{\Q_x (\xi) \define \frac{1}{N} \sum_{i = 1}^{N} \delta \big( \xi - \xi_i - q_i' \big)} is a maximizer for the linear worst-case distribution problem \eqref{eq:min-variance-ldro-general}.

\paragraph{{\bf Tractable one step FW-update}} We observe that in the minimum variance problem \eqref{eq:min-variance-portfolio-selection}, the only information needed pertaining to a distribution \mmode{\Q \in \wamb{m}} is its first and second order moments \mmode{\mu_{\Q} = \EE{\Q} [\xi]}, \mmode{\Sigma_{\Q} = \EE{\Q} [\xi \xi\transp]} respectively. Thus, in order to solve the DRO problem \eqref{eq:min-variance-portfolio-selection} via the FW-algorithm (Algorithm \ref{algo:NDRO-min-max}), it is apparent that it suffices to track the evolution of these moments rather than the entire distribution (which gets increasingly difficult with the iterations). To this end, a single FW-update \eqref{eq:FW-oracle-NDRO-regime-1}, in Algorithm \ref{algo:NDRO-min-max}, for the minimum-variance problem \eqref{eq:min-variance-portfolio-selection} can be summarised in terms of the finite-dimensional quantities:
\begin{equation}
\label{eq:min-variance-tractable-FW}
\begin{cases}
\begin{aligned}
&\text{Solve for \mmode{x} } &&: \quad  x_k  \in \argmin_{x \in \setx} \; x\transp (\Sigma_k - \mu_k \mu_k\transp) x \\
&\text{FW-oracle } &&: \quad
\begin{cases}
\begin{aligned}
\eta_k &= \eta_{x_k} \quad \text{ and } \quad q_i \in q(\eta_k , \xi_i, \mu_k) \quad \text{for all} \quad i = 1,2, \ldots, N, \\
\mu'_k &= \frac{1}{N} \summ{i = 1}{N} (\xi_i + q_i) \quad \text{ and } \quad \Sigma'_k = \frac{1}{N} \summ{i = 1}{N} (\xi_i + q_i) (\xi_i + q_i)\transp ,
\end{aligned}
\end{cases} \\
&\text{FW-update } &&: \quad
\mu_{k + 1} = \mu_k + \stepsize_k \big( \mu'_k - \mu_k \big) \, , \quad \quad \Sigma_{k + 1} = \Sigma_k + \stepsize_k \big( \Sigma'_k - \Sigma_k \big) .
\end{aligned}
\end{cases}
\end{equation}
The tractable formulation \eqref{eq:min-variance-tractable-FW} of the FW algorithm for variance particularly highlights the discussion in Remark \ref{rem:suff-stat} and \eqref{eq:NDRO-regular} on the sufficient statistic for variance. In particular, it must be noted that the entire information of the distribution is characterized by \mmode{(\mu, \Sigma)}, and therefore, the FW algorithm (Algorithm \ref{algo:NDRO-min-max}) can be simplified as \eqref{eq:min-variance-tractable-FW} which is an iteration over only finite-dimensional quantities \mmode{(\mu , \Sigma)} and \mmode{x}.

We now focus on specific settings of \eqref{eq:min-variance-portfolio-selection} for which the FW-oracle is easily characterized.

{\bf Case 1: unconstrained support.} We consider the setting: \mmode{\Xi = \R{n}}, \mmode{\setx \in \simplex{n}} being any compact subset, and the transportation cost \mmode{\norm{\cdot}} to define the Wasserstein ambiguity set (and let \mmode{\dualnorm{\cdot}} be the associated dual norm).

\begin{lemma}
\label{lemma:min-variance-perturbation-solution}
Consider the maximization problem \eqref{eq:min-var-perturbation-problem} for \mmode{\Xi = \R{n}}, and \mmode{\eta \geq 0}, \mmode{x, v,\xi \in \R{n}}, and let \mmode{J} denote its optimal value. Then the following assertions hold
\begin{enumerate}[label = {\rm (\roman*)}, itemsep = 0mm, topsep = 0mm, leftmargin = *]
\item \emph{(Unbounded)}: \mmode{J = +\infty}, and no optimal solution exists for \eqref{eq:min-var-perturbation-problem} in the following cases: \mmode{(m < 2)}, \mmode{(m > 2, \eta = 0)}, \mmode{(m = 2, \eta < \dualnorm{x}^2)}, and \mmode{(m = 2, \eta = \dualnorm{x}^2, x\transp (\xi - v) \neq 0)}.

\item \emph{(Bounded)}: \mmode{J < +\infty}, and the optimal solution \mmode{q(\eta, \xi, v)} exists for \eqref{eq:min-var-perturbation-problem} in all other cases, and in particular,
\begin{enumerate}[label = {\rm (\alph*)}, itemsep = 0mm, topsep = 0mm, leftmargin = *]
    \item \mmode{(m > 2 , \eta > 0)}, both \mmode{J < +\infty}, and \mmode{q(\eta, \xi, v)} exists.

    \item \mmode{(m = 2, \eta > \dualnorm{x}^2)}, then we have
    \[
    J = \frac{\eta \abs{x\transp (\xi - v)}^2}{\eta - \pnorm{x}{*}^2} \, , \quad q(\eta , \xi, v) = \frac{\pnorm{x}{*} \big( x\transp (\xi - v) \big)} {\eta - \pnorm{x}{*}^2} \, \bar{q}_x \; \text{for } \; \bar{q}_x \in \argmin\limits_{\norm{\bar{q}} \leq 1} \ x\transp \bar{q} .
    \]

    \item \mmode{(m = 2, \eta = \dualnorm{x}^2, x\transp(\xi - v) = 0) }, then \mmode{J = 0} and \mmode{q(\eta, \xi, v) = \{ s \bar{q}_x : s \geq 0 \}}.
\end{enumerate}
\end{enumerate}
\end{lemma}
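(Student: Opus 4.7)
The plan is to reduce the maximization to a one-dimensional problem by decomposing \mmode{q = t u} with \mmode{t \ge 0} and \mmode{\norm{u} = 1}. Writing \mmode{a \define x\transp(\xi - v)}, the objective becomes \mmode{(t\, x\transp u + a)^2 - \eta t^m}. Since \mmode{x\transp u} ranges over \mmode{[-\dualnorm{x}, \dualnorm{x}]} as \mmode{u} varies over the unit sphere, the direction maximizing \mmode{(t\, x\transp u + a)^2} for fixed \mmode{t} aligns \mmode{t\, x\transp u} with the sign of \mmode{a}, giving the scalar reduction
\begin{equation*}
J \symbolspace{=}{\;} \sup_{t \geq 0} \; \psi(t), \qquad \psi(t) \define (t \dualnorm{x} + |a|)^2 - \eta t^m.
\end{equation*}

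For the four \emph{unbounded} subcases of (i), I would exhibit divergent sequences directly in \mmode{\psi}: when \mmode{m < 2}, the quadratic term dominates the \mmode{m}-th power as \mmode{t \to \infty}; when \mmode{m > 2} and \mmode{\eta = 0}, the second term vanishes entirely; when \mmode{m = 2} and \mmode{\eta < \dualnorm{x}^2}, expanding \mmode{\psi(t) = (\dualnorm{x}^2 - \eta) t^2 + 2\dualnorm{x} |a| t + a^2} shows the leading coefficient is strictly positive; and when \mmode{m = 2}, \mmode{\eta = \dualnorm{x}^2}, the quadratic terms cancel leaving \mmode{2 \dualnorm{x} |a| t + a^2}, which diverges whenever \mmode{a \neq 0}. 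In each case, \mmode{\psi} is unbounded above along \mmode{t \to +\infty}, so the original objective is likewise unbounded and no maximizer can exist.

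For the \emph{bounded} subcases of (ii), case (a) relies on coercivity: when \mmode{m > 2} and \mmode{\eta > 0}, \mmode{\psi(t) \to -\infty} as \mmode{t \to \infty}, so \mmode{\psi} attains its supremum at some finite \mmode{t^\ast \ge 0}; any \mmode{u^\ast} with \mmode{\norm{u^\ast} = 1} and \mmode{x\transp u^\ast} aligned with the sign of \mmode{a} gives an optimizer \mmode{q = t^\ast u^\ast}. Case (b) is a concave quadratic maximization: with \mmode{\eta > \dualnorm{x}^2}, the optimum of \mmode{\psi} is attained at \mmode{t^\ast = \dualnorm{x} |a| / (\eta - \dualnorm{x}^2)} and the optimal value is \mmode{J = \eta a^2 / (\eta - \dualnorm{x}^2)}; translating back to \mmode{q} using \mmode{\bar{q}_x \in \argmin_{\norm{\bar{q}} \leq 1} x\transp \bar{q}} with the appropriate sign convention produces the stated closed-form formula for \mmode{q(\eta, \xi, v)}. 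Case (c) uses the dual-norm inequality \mmode{(x\transp q)^2 \leq \dualnorm{x}^2 \norm{q}^2}: with \mmode{a = 0} and \mmode{\eta = \dualnorm{x}^2}, the objective equals \mmode{(x\transp q)^2 - \dualnorm{x}^2 \norm{q}^2 \leq 0}, and equality characterizes exactly the ray \mmode{\{s \bar{q}_x : s \geq 0\}}.

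The main obstacle is the bookkeeping in case (b) when converting the scalar optimum \mmode{t^\ast} back into a vector optimizer: one must pin down the correct sign convention for \mmode{\bar{q}_x} and verify that the resulting closed-form \mmode{q} attains the announced value of \mmode{J}, especially tracking how the sign of \mmode{a = x\transp(\xi - v)} interacts with the orientation of the minimizing direction. A secondary subtlety is the degenerate case \mmode{x = 0} (where \mmode{\dualnorm{x} = 0}), which each step handles trivially but deserves explicit mention so that the formulas remain well-defined.
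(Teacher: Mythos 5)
Your proposal is correct and follows essentially the same route as the paper: the substitution \mmode{q = t u} with \mmode{t \ge 0}, \mmode{\norm{u}=1}, the reduction via the dual norm to the scalar problem \mmode{\sup_{t\ge 0}\,(t\dualnorm{x} + |a|)^2 - \eta t^m}, and the identical case analysis on \mmode{m} and \mmode{\eta}. Your closing remarks on the sign convention for \mmode{\bar q_x} and the degenerate case \mmode{x=0} are exactly the bookkeeping the paper also handles (via \mmode{\bar q = \sgn(x\transp(\xi-v))\,\bar q_x}), so nothing is missing.
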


\begin{lemma}[Variance FW-oracle: unconstrained support]
\label{lemma:min-var-ldro-solution-for-m2}
Consider the linear worst-case-distribution problem \eqref{eq:min-variance-ldro-general} and its dual \eqref{eq:min-variance-dual}, under the setting, \mmode{\Xi = \R{n}}, \mmode{m = 2}, and any \mmode{x \in \setx}. The solutions \mmode{(\Q_x , \etax{x})} to \eqref{eq:min-variance-ldro-general} and its dual \eqref{eq:min-variance-dual} are given by
\begin{equation}
\label{eq:min-var-unconstrained-support-FW-oracle}
\begin{cases}
\begin{aligned}
\etax{x} &= \pnorm{x}{*}^2 + \frac{\pnorm{x}{*}}{\rho} \sqrt{\frac{1}{N} \summ{i = 1}{N} \abs{x\transp (\xi_i - v)}^2 } , \\
\Q_x (\xi) &= \frac{1}{N} \summ{i = 1}{N} \delta \big( \xi - (\xi_i + q'_i) \big) \quad \text{for any }  q'_i \in q(\etax{x} , \xi_i, v), \ i = 1,2,\ldots,N.
\end{aligned}
\end{cases}
\end{equation}
\end{lemma}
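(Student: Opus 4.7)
The plan is to attack the dual problem \eqref{eq:min-variance-dual} first, reduce it to a one-dimensional convex minimization by invoking the per-sample computation in Lemma \ref{lemma:min-variance-perturbation-solution}, and then verify that the discrete distribution built from the inner-supremum optimizers is primal-feasible and attains the dual value by strong duality of \cite[Theorem 7]{kuhn2019wasserstein}.

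First, I would fix \mmode{x \in \setx} and look at the dual \eqref{eq:min-variance-dual} specialized to \mmode{m = 2} and \mmode{\Xi = \R{n}}. Lemma \ref{lemma:min-variance-perturbation-solution} tells us that the inner supremum over each \mmode{q_i} is \mmode{+\infty} whenever \mmode{\eta < \pnorm{x}{*}^2}, and also at the boundary \mmode{\eta = \pnorm{x}{*}^2} unless \mmode{x\transp (\xi_i - v) = 0} for every \mmode{i}. Setting aside this degenerate case for the moment, the infimum is effectively over \mmode{\eta > \pnorm{x}{*}^2}, and on that range Lemma \ref{lemma:min-variance-perturbation-solution}(ii)(b) yields the closed form
\[
\sup_{q_i} \Big[ (q_i + \xi_i - v)\transp (xx\transp) (q_i + \xi_i - v) - \eta \norm{q_i}^2 \Big] \symbolspace{=}{\;} \frac{\eta \abs{x\transp (\xi_i - v)}^2}{\eta - \pnorm{x}{*}^2}.
\]
Writing \mmode{A \define \pnorm{x}{*}^2} and \mmode{B \define \frac{1}{N} \summ{i=1}{N} \abs{x\transp(\xi_i - v)}^2}, the dual collapses to the scalar problem \mmode{\inf_{\eta > A} \; \eta \rho^2 + \frac{\eta B}{\eta - A}}, which is strictly convex on \mmode{(A, +\infty)}. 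A direct first-order condition gives \mmode{(\eta - A)^2 = AB/\rho^2}, and therefore \mmode{\etax{x} = A + \tfrac{\sqrt{AB}}{\rho} = \pnorm{x}{*}^2 + \tfrac{\pnorm{x}{*}}{\rho} \sqrt{B}}, which is precisely the formula in \eqref{eq:min-var-unconstrained-support-FW-oracle}.

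Next, I would construct the candidate primal maximizer by reading off the per-sample optimizers from Lemma \ref{lemma:min-variance-perturbation-solution}(ii)(b), namely \mmode{q'_i = \tfrac{\pnorm{x}{*} (x\transp (\xi_i - v))}{\etax{x} - \pnorm{x}{*}^2} \, \bar{q}_x} with \mmode{\bar{q}_x \in \argmin_{\norm{\bar{q}} \leq 1} x\transp \bar{q}} (so that \mmode{\norm{\bar{q}_x} = 1} whenever \mmode{x \neq 0}), and then setting \mmode{\Q_x(\xi) = \tfrac{1}{N}\summ{i=1}{N} \delta(\xi - \xi_i - q'_i)}. To confirm \mmode{\Q_x \in \wamb{2}}, I would use the explicit transport plan pairing each atom \mmode{\xi_i} of \mmode{\Pref} with \mmode{\xi_i + q'_i}, giving \mmode{\Wdist{2}(\Pref, \Q_x)^2 \leq \frac{1}{N}\summ{i=1}{N} \norm{q'_i}^2}. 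Substituting \mmode{\etax{x} - \pnorm{x}{*}^2 = \pnorm{x}{*}\sqrt{B}/\rho} into the expression for \mmode{\norm{q'_i}^2} makes the sum telescope to exactly \mmode{\rho^2}, so feasibility holds with the transport constraint active.

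Finally, by construction each \mmode{q'_i} attains the inner supremum in \eqref{eq:min-variance-dual} at \mmode{\eta = \etax{x}}, so the value \mmode{\frac{1}{N}\summ{i=1}{N} (x\transp(\xi_i + q'_i - v))^2} agrees with the dual objective evaluated at \mmode{\etax{x}}, i.e.\ it matches the infimum computed above. Strong duality from \cite[Theorem 7]{kuhn2019wasserstein} then forces \mmode{\Q_x} to be optimal for \eqref{eq:min-variance-ldro-general}. To finish, I would treat the degenerate case \mmode{B = 0} separately: then \mmode{x\transp(\xi_i - v) = 0} for all \mmode{i}, the primal objective vanishes on \mmode{\Pref}, and Lemma \ref{lemma:min-variance-perturbation-solution}(ii)(c) gives \mmode{q(\pnorm{x}{*}^2, \xi_i, v) = \{ s \bar{q}_x : s \geq 0 \}}, so choosing \mmode{s = \rho} at each atom again produces a feasible maximizer with objective value \mmode{0}, consistent with \eqref{eq:min-var-unconstrained-support-FW-oracle} evaluated at \mmode{B = 0}. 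The only subtle step I anticipate is verifying that the feasible transport indeed saturates \mmode{\rho} (needed for primal optimality); the rest is a mechanical application of Lemma \ref{lemma:min-variance-perturbation-solution} and one-dimensional convex optimization.
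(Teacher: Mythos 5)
Your proposal is correct and follows essentially the same route as the paper's proof: reduce the dual \eqref{eq:min-variance-dual} to a scalar convex problem via Lemma \ref{lemma:min-variance-perturbation-solution}, solve the first-order condition to obtain \mmode{\etax{x}}, handle the degenerate case \mmode{x\transp(\xi_i - v) = 0} for all \mmode{i} separately, and invoke strong duality to certify \mmode{\Q_x}. Your explicit verification that the induced transport plan saturates the Wasserstein budget \mmode{\rho^2} is a welcome extra detail that the paper leaves implicit in its appeal to strong duality.
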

It turns out that for the special case of \mmode{m = 2}, the problem \eqref{eq:min-variance-portfolio-selection} with unconstrained support reduces to a simple empirical risk minimization problem. This was already discovered in \cite{blanchet2022distributionally} specifically for the case when \mmode{\setx = \setprtfl \define \{ x\in \simplex{n} : \EE{\Q} [x\transp \xi] \geq \alphamin, \text{ for all } \Q \in \amb \} }, for any \mmode{\alphamin}. We generalize it slightly by showing that a similar conclusion holds for any compact set \mmode{\setx \subset \simplex{n}}. Moreover, we provide an alternate proof completely based on first-order optimality conditions and the FW-oracle, for the min-max problem \eqref{eq:min-variance-portfolio-selection}. We emphasize here that since it is shown explicitly that the min-max problem \eqref{eq:min-variance-portfolio-selection} reduces to an empirical minimization problem, a solution can be computed without the need to run the FW-algorithm \eqref{eq:min-variance-tractable-FW} iteratively. More importantly, this also alleviates the need to ensure that the convex regularity assumptions hold for this special case of \eqref{eq:min-variance-portfolio-selection}.

\begin{proposition}[Variance saddle point]
\label{proposition:min-variance-blanchet}
Consider the minimum variance portfolio optimization problem \eqref{eq:min-variance-portfolio-selection} in the setting of \mmode{m = 2}, and \mmode{\Xi = \R{n}}. Let 
\begin{equation}
x\opt \in \argmin_{x \in \setx} \sqrt{\inprod{x}{(\sigmaref - \muref \muref\transp) x} } \, + \rho \pnorm{x}{*} ,
\end{equation}
then there exists some \mmode{ \bar{q}_{x\opt} \in \arg\max_{\norm{\bar{q} } \leq 1} \inprod{x\opt}{\bar{q}} } such that for 
\begin{equation}
\label{eq:popt-explicit-solution}
\P\opt \define \frac{1}{N} \summ{i = 1}{N} \delta \big( \xi - (\xi_i + q_i\opt) \big) , \quad \text{where} \quad q_i\opt = \frac{\ambradius \inprod{x\opt}{\xi_i - \muref} }{\sqrt{ \inprod{x\opt}{ (\sigmaref - \muref \muref\transp) x\opt } }} \bar{q}_{x\opt} \quad \forall i \leq N,
\end{equation}
the pair \mmode{(x\opt, \P\opt)} is a saddle point to \eqref{eq:min-variance-portfolio-selection}.
\end{proposition}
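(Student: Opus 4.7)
The plan is to verify the two saddle-point conditions
\[
\popt \in \argmax_{\P \in \wamb{2}} \vrisk{\xopt}{\P} \quad \text{and} \quad \xopt \in \argmin_{x \in \setx} \vrisk{x}{\popt}
\]
separately. Write $A \define \sigmaref - \muref\muref\transp$ and $d \define \sqrt{(\xopt)\transp A \xopt}$; I treat the generic case $d > 0$ (if $d = 0$ then $A\xopt = 0$, every perturbation in \eqref{eq:popt-explicit-solution} vanishes, and the claim reduces to a short separate check). Two structural facts drive the argument. First, $\P \mapsto \vrisk{\xopt}{\P} = \EE{\P}[(\xopt\transp\xi)^2] - (\EE{\P}[\xopt\transp\xi])^2$ is concave in $\P$, so by Remark~\ref{remark:optimality conditions} it suffices to show that $\popt$ maximizes the G-derivative $\vxpgrad{\xopt}{\popt}{\cdot}$ over $\wamb{2}$. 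Second, $x \mapsto \vrisk{x}{\popt} = x\transp(\Sigma_{\popt} - \mu_{\popt}\mu_{\popt}\transp)x$ is a convex quadratic, so a single first-order condition at $\xopt$ is sufficient for minimality.

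For the worst-case condition, I first compute $\mu_{\popt}$: by \eqref{eq:popt-explicit-solution} the mean of the perturbations is $\bar q_{\xopt} \cdot \tfrac{\ambradius}{d} \cdot \tfrac{1}{N}\sum_i \inprod{\xopt}{\xi_i - \muref} = 0$, so $\mu_{\popt} = \muref$. Feasibility $\popt \in \wamb{2}$ then follows from $\Wdist{2}^2(\Pref,\popt) \le \tfrac{1}{N}\sum_i \pnorm{q^*_i}{}^2 \le \ambradius^2$, using $\tfrac{1}{N}\sum_i \inprod{\xopt}{\xi_i - \muref}^2 = d^2$ and $\norm{\bar q_{\xopt}} \le 1$. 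Combining \eqref{eq:min-variance-FW-simplification} with Lemma~\ref{lemma:min-var-ldro-solution-for-m2} applied at $x = \xopt$ and $v = \mu_{\popt} = \muref$, the dual multiplier simplifies to $\eta_{\xopt} = \dualnorm{\xopt}^2 + \dualnorm{\xopt}\, d / \ambradius$; plugging this into Lemma~\ref{lemma:min-variance-perturbation-solution}(ii.b) produces perturbations that telescope exactly to the $q^*_i$ of \eqref{eq:popt-explicit-solution}. Concavity of $\vrisk{\xopt}{\cdot}$ then closes the first condition.

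For the inner minimization, I expand $\Sigma_{\popt}$ via the atoms $\xi_i + q^*_i$. Using $\mu_{\popt} = \muref$ together with the empirical identity $\tfrac{1}{N}\sum_i (\xi_i - \muref)(\xi_i - \muref)\transp = A$, straightforward bookkeeping yields
\[
\Sigma_{\popt} - \mu_{\popt}\mu_{\popt}\transp \; = \; A + \frac{\ambradius}{d}\bigl(\bar q_{\xopt}(\xopt)\transp A + A\xopt \bar q_{\xopt}\transp\bigr) + \ambradius^2 \bar q_{\xopt}\bar q_{\xopt}\transp .
\]
Differentiating $\vrisk{x}{\popt}$ at $\xopt$ and exploiting $(\xopt)\transp A \xopt = d^2$ and $\inprod{\bar q_{\xopt}}{\xopt} = \dualnorm{\xopt}$ (since $\bar q_{\xopt} \in \argmax_{\norm{\bar q}\le 1}\inprod{\xopt}{\bar q}$) collapses the gradient into
\[
\nabla_x \vrisk{\xopt}{\popt} \; = \; \frac{2(d + \ambradius\dualnorm{\xopt})}{d}\bigl(A\xopt + \ambradius d\,\bar q_{\xopt}\bigr) .
\]
After dropping the positive prefactor and dividing by $d$, the first-order condition $\inprod{\nabla_x \vrisk{\xopt}{\popt}}{y - \xopt} \ge 0$ over $y \in \setx$ reduces to $\inprod{A\xopt/d + \ambradius \bar q_{\xopt}}{y - \xopt} \ge 0$, which is exactly the KKT condition for the outer problem $\min_{x \in \setx}\sqrt{x\transp A x} + \ambradius\dualnorm{x}$ at $\xopt$: $A\xopt/d$ is the gradient of $\sqrt{x\transp A x}$ and $\bar q_{\xopt}$ lies in the subdifferential $\partial\dualnorm{\cdot}(\xopt) = \argmax_{\norm{\bar q}\le 1}\inprod{\xopt}{\bar q}$.

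The delicate point worth highlighting is the dovetailing of $\bar q_{\xopt}$ across its two roles: the outer KKT certificate singles out a \emph{specific} subgradient of $\dualnorm{\cdot}$ at $\xopt$ satisfying the variational inequality, while Lemma~\ref{lemma:min-var-ldro-solution-for-m2} admits \emph{any} maximizer in $\argmax_{\norm{\bar q}\le 1}\inprod{\xopt}{\bar q}$ into the FW-oracle formula. The proposition's existential phrasing ``there exists some $\bar q_{\xopt}$'' is designed precisely to let me lock the two choices together: I take $\bar q_{\xopt}$ to be the outer KKT multiplier, which is then automatically a valid choice in \eqref{eq:popt-explicit-solution}.
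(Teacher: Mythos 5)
Your proposal is correct and follows essentially the same route as the paper's proof: both saddle inequalities are verified via first-order optimality conditions, the maximization side is closed by matching $q_i^*$ to the FW-oracle of Lemmas~\ref{lemma:min-variance-perturbation-solution} and~\ref{lemma:min-var-ldro-solution-for-m2} with $v=\muref$ and invoking concavity (Remark~\ref{remark:optimality conditions}), and the minimization side is closed by showing the gradient of $\vrisk{\cdot}{\P\opt}$ at $x\opt$ is a positive multiple of the outer KKT certificate, with the same existential locking of $\bar q_{x\opt}$ across its two roles. Your only departures are cosmetic — you expand $\Sigma_{\P\opt}-\muref\muref\transp$ as an explicit matrix where the paper expands $\inprod{(\Sigma_{\P\opt}-\muref\muref\transp)x\opt}{y}$ termwise, and you add an explicit feasibility check of $\P\opt\in\wamb{2}$ while deferring the degenerate case $d=0$ that the paper treats inline via Lemma~\ref{lemma:min-variance-perturbation-solution}(ii-c).
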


{\bf Case 2: ellipsoidal support.} We consider the setting: \mmode{\Xi = \ellipsoid \define \{ \xi : \inprod{\xi}{M \xi} \leq 1 \} } for some \mmode{M \succeq 0}, \mmode{m = 2}, and the underlying transportation cost defining the Wasserstein distance to be \mmode{\norm{\cdot} = \pnorm{\cdot}{2}}. Under this setting, the FW problem in  \eqref{eq:FW-oracle-NDRO-regime-1} involves maximizing a quadratic function subject to convex ellipsoidal constraints. Even if this problem is non-convex in general, it admits a tractable reformulation as an SDP via the celebrated \emph{S-procedure} \cite[Appendix B]{boyd2004convex}, \cite{uhlig1979recurring}.
\begin{lemma}[Variance FW-oracle: ellipsoidal support]
\label{lemma:FW-oracle-min-variance-ellipsoidal-support}
Consider the minimum variance portfolio optimization problem \eqref{eq:min-variance-portfolio-selection} under the setting of the ellipsoidal support. The corresponding dual problem \eqref{eq:min-variance-dual} of the linear worst-case-distribution problem \eqref{eq:min-variance-ldro-general} is equivalent to the SDP
\begin{equation}
\label{eq:FW-problem-ellipsoid-support-SDP}
\begin{cases}
\begin{aligned}
& \min_{\eta \in \R{}, \ \lambda, \theta \in \R{N} }    
&& \eta \rho^2 - \frac{1}{N} \summ{i = 1}{N} \theta_i \\
& \sbjto &&
\begin{cases}
\begin{aligned}
& \ \ \eta \geq 0, \text{ and } \lambda_i \geq 0 \ \forall i \leq N, \\
& \begin{bmatrix}
\eta \identity{n} - x x\transp + \lambda_i M  & (xx\transp)v - \eta \xi_i  \\
(xx\transp)v\transp - \eta \xi_i\transp &  \eta \pnorm{\xi_i}{2}^2 - (x\transp v)^2 - \lambda_i - \theta_i
\end{bmatrix}
\succeq 0 , \  \forall i \leq N .
\end{aligned}
\end{cases}
\end{aligned}
\end{cases}
\end{equation}
Moreover, for any solution \mmode{(\eta\opt , \lambda\opt , \theta\opt)}, to the SDP \eqref{eq:FW-problem-ellipsoid-support-SDP}, the pair \mmode{(\Q_x , \etax{x})} given by
\begin{equation}
\label{eq:min-var-ellipsoidal-support-FW-oracle}
\etax{x} = \eta\opt , \quad \text{and} \quad \Q_x (\xi) = \frac{1}{N} \summ{i = 1}{N} \delta \Big( \xi - \big( \eta\opt \identity{n} - xx\transp + \lambda\opt_i M \big)^{-1} \big( \eta\opt \xi_i - x x\transp v \big) \Big) ,
\end{equation}
is a solution to \eqref{eq:min-variance-ldro-general} and its dual \eqref{eq:min-variance-dual} respectively.
\end{lemma}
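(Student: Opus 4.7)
The plan is to start from the Wasserstein dual formulation~\eqref{eq:min-variance-dual} of the linear worst-case distribution problem~\eqref{eq:min-variance-ldro-general} with $m=2$, transportation cost $\pnorm{\cdot}{2}$, and $\Xi = \mathcal{E}_M$, and to convert each of the $N$ inner quadratic maximizations into an SDP-representable constraint via the S-lemma. Performing the change of variables $u_i = q_i + \xi_i$ in each inner supremum and expanding, the $i$-th subproblem becomes the QCQP
\[
\sup_{u_i\transp M u_i \leq 1} \; u_i\transp (xx\transp - \eta \identity{n}) u_i \,+\, 2 (\eta \xi_i - xx\transp v)\transp u_i \,+\, (x\transp v)^2 - \eta \pnorm{\xi_i}{2}^2 ,
\]
with a single quadratic inequality constraint.

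Next, I would introduce an epigraph scalar $-\theta_i$ as an upper bound on this supremum, and invoke the S-procedure---which is lossless here because Slater's condition is satisfied at $u_i = 0$---to equivalently express the bound via a multiplier $\lambda_i \geq 0$ such that, for all $u_i$,
\[
u_i\transp (\eta \identity{n} - xx\transp + \lambda_i M) u_i + 2 (xx\transp v - \eta \xi_i)\transp u_i + \eta \pnorm{\xi_i}{2}^2 - (x\transp v)^2 - \lambda_i - \theta_i \geq 0 .
\]
This global non-negativity is precisely the positive semidefiniteness of the $(n+1)\times(n+1)$ block matrix in~\eqref{eq:FW-problem-ellipsoid-support-SDP}. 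Aggregating the $N$ LMIs with the outer $\min_{\eta \geq 0}$ and the residual term $\eta \ambradius^2$ produces the claimed SDP~\eqref{eq:FW-problem-ellipsoid-support-SDP}; the sign pattern reflects the fact that each $-\theta_i$ upper bounds the corresponding inner supremum, so minimizing $\eta\ambradius^2 - \frac{1}{N}\sum_i \theta_i$ is equivalent to the dual objective of~\eqref{eq:min-variance-dual}.

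For the recovery formula~\eqref{eq:min-var-ellipsoidal-support-FW-oracle}, I would use KKT stationarity of the inner QCQP at the optimal dual pair $(\eta\opt, \lambda\opt_i)$. Setting the gradient of the Lagrangian in $u_i$ to zero yields
\[
(\eta\opt \identity{n} - xx\transp + \lambda\opt_i M) \, u_i\opt \;=\; \eta\opt \xi_i - xx\transp v ,
\]
and, since the block LMI forces $\eta\opt \identity{n} - xx\transp + \lambda\opt_i M \succeq 0$ (and generically invertible), the explicit formula for $u_i\opt = \xi_i + q_i\opt$ in~\eqref{eq:min-var-ellipsoidal-support-FW-oracle} follows. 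The discussion immediately after~\eqref{eq:min-var-perturbation-problem} then identifies $\Q_x = \frac{1}{N}\sum_i \delta\big(\xi - u_i\opt\big)$ as a maximizer of the linear worst-case problem~\eqref{eq:min-variance-ldro-general}.

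The main obstacle I anticipate is the careful handling of losslessness and attainment in the inner QCQP. Lossless S-procedure is classical for a single quadratic inequality constraint under Slater, but one must also verify that the inner supremum is finite: this holds precisely when $\eta\identity{n} + \lambda_i M - xx\transp \succeq 0$ for some $\lambda_i \geq 0$, which is guaranteed at the SDP optimum by the PSD block constraint itself. A mild technicality is the degenerate case where $\eta\opt \identity{n} - xx\transp + \lambda\opt_i M$ is singular, in which the inverse in~\eqref{eq:min-var-ellipsoidal-support-FW-oracle} must be read through a Moore--Penrose pseudoinverse---with $\eta\opt \xi_i - xx\transp v$ lying in its range by the Schur-complement form of the LMI---and the maximizer may fail to be unique.
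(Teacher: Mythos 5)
Your proposal is correct and follows essentially the same route as the paper: change of variables $u_i = q_i + \xi_i$, a lossless S-procedure applied to each strictly feasible inner QCQP to obtain the per-sample LMI with multipliers $(\lambda_i,\theta_i)$, aggregation into the SDP \eqref{eq:FW-problem-ellipsoid-support-SDP}, and recovery of the worst-case support points from the stationarity relation $(\eta\opt \identity{n} - xx\transp + \lambda\opt_i M)u_i\opt = \eta\opt \xi_i - xx\transp v$. Your added remarks on finiteness of the inner suprema and the singular/pseudoinverse degenerate case are a slightly more careful treatment of points the paper passes over implicitly.
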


\subsection{Simulation results}
We validate the convergence attributes of Algorithm \ref{algo:NDRO-min-max} for the NDRO problem of the minimum variance portfolio selection \eqref{eq:min-variance-portfolio-selection} with ellipsoidal support (i.e., \mmode{\Xi = \ellipsoid \subset \R{n} }). The positive definite matrix \mmode{M} that characterizes the support \mmode{\ellipsoid}, is generated randomly and it is ensured to be reasonably well conditioned. The samples \mmode{\xi_i}, for \mmode{i = 1,2,\ldots,N = 2n}, are drawn randomly from \mmode{\ellipsoid}, and the distribution \mmode{\Pref} is uniformly distributed over the samples. Then for \mmode{n = 25} and various values of \mmode{\ambradius} (the radius of the ambiguity set), we solve
\begin{equation}
\label{eq:min-var-ellips-simulation-problem}
V\opt \define \ \min_{x \in \simplex{n}}  \sup_{\P \in \wamb{2}} \vrisk{x}{\P} .   
\end{equation}
The proposed FW-based method (Algorithm \ref{algo:NDRO-min-max} with the simplified FW-update \eqref{eq:min-variance-tractable-FW}) is implemented in {\fontfamily{pcr}\selectfont MATLAB}, wherein the SDP \eqref{eq:FW-problem-ellipsoid-support-SDP} corresponding to the FW-oracle and the minimization problem: \mmode{\min_{x \in \simplex{n}} \ \vrisk{x}{\P_k}} are solved using the {\fontfamily{pcr}\selectfont cvx} solver.

First, Figure \ref{fig:min-variance-noreg} shows the convergence of Algorithm \ref{algo:NDRO-min-max} for \mmode{\keps = 75} iterations, and for three distinct values of \mmode{\ambradius = 0.1, 0.5}, and \mmode{1}. The plots on the top show the evolution of the primal and dual functions: \mmode{\min_{x \in \simplex{n}} \vrisk{x}{\P_k}} and \mmode{\sup_{\P \in \wamb{2}} \vrisk{x_k}{\P}} respectively, w.r.t.\,the iteration \mmode{k} of the algorithm. Whereas, the plots at the bottom of the figure show the evolution of the two sub-optimality metrics:
\[
\begin{cases}
\begin{aligned}
\text{\emph{primal sub-optimality}} \quad &: \quad \quad V\opt - \min\limits_{x \in \simplex{n}} \vrisk{x}{\P_k} \quad \text{and} \\
\text{\emph{duality gap}} \quad &:  \quad \sup\limits_{\P \in \wamb{2}} \vrisk{x_k}{\P} \symbolspace{-}{\;} \min\limits_{x \in \simplex{n}} \ \vrisk{x}{\P_k}   .
\end{aligned}
\end{cases}
\]
Since Algorithm \ref{algo:NDRO-min-max} explicitly solves the minimization over \mmode{x} in each iteration (see \eqref{eq:FW-oracle-NDRO-regime-1}), the primal function is readily available. However, this is not the case with the dual function which needs to be computed independently at each iteration for the current iterate \mmode{x_k}. We compute it by running the FW-update \eqref{eq:min-variance-tractable-FW} for several iterations (\mmode{\sim\keps}) independently with \mmode{x_k} held fixed, this amounts to applying the FW-algorithm \eqref{eq:FW-iterations-main} for the risk measure \mmode{\vxpshrt{x_k}{\cdot}}.
\begin{figure}[t]
    \centering
    \begin{subfigure}[b]{0.32\textwidth}
    \raggedleft
\includegraphics[width=  5cm]{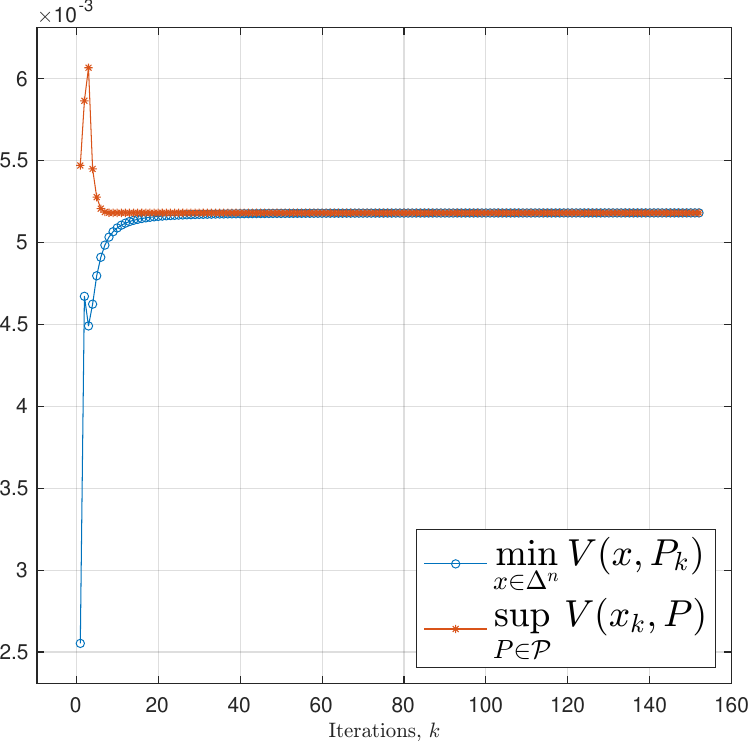}
\vspace{0.25cm}
\includegraphics[width=  5cm]{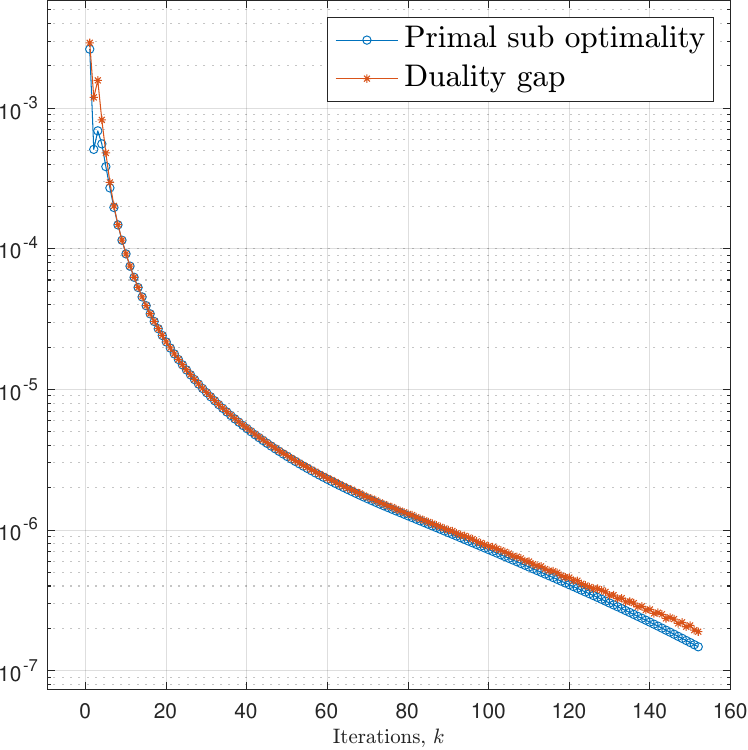}
    \caption{\mmode{\ambradius = 0.1}, \mmode{\alpha = 0}}
    \label{fig:rho-0.1}
    \end{subfigure}
    \hfill
    \begin{subfigure}[b]{0.32\textwidth}
    \raggedleft
\includegraphics[width=  5cm]{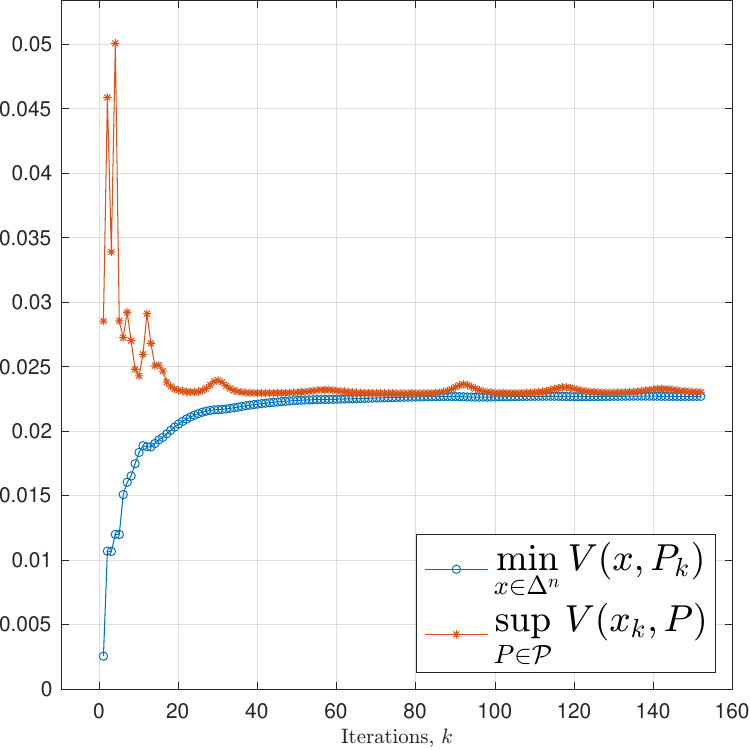}
\vspace{0.25cm}
\includegraphics[width=  5cm]{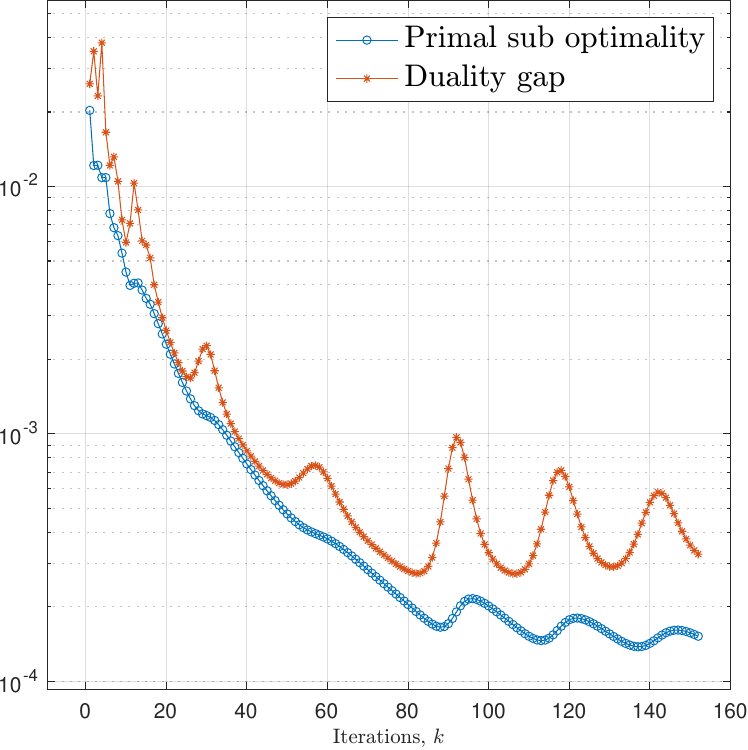}
    \caption{\mmode{\ambradius = 0.5}, \mmode{\alpha = 0}}
    \label{fig:rho-0.5}
    \end{subfigure}
    \hfill
    \begin{subfigure}[b]{0.32\textwidth}
    \raggedleft
\includegraphics[width=  5cm]{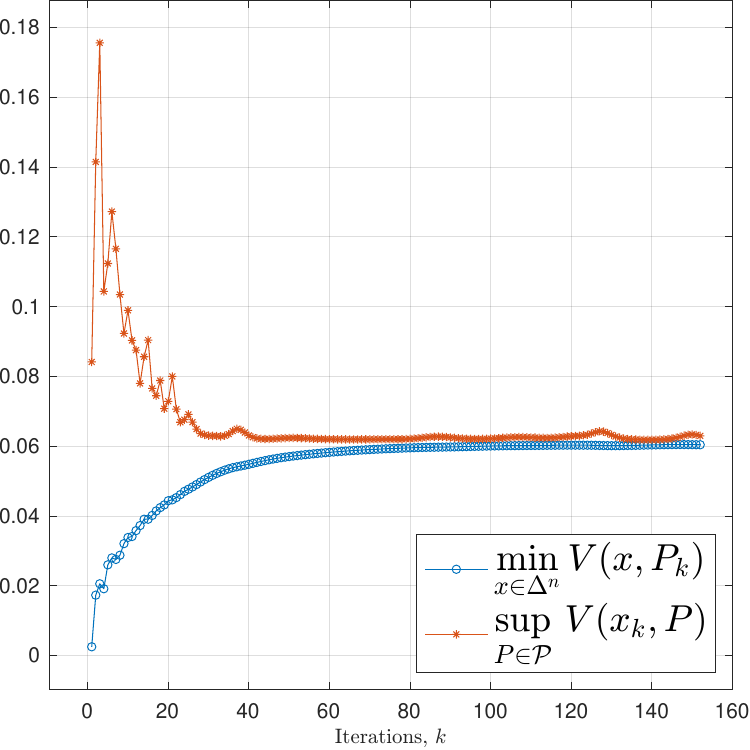}
\vspace{0.25cm}
\includegraphics[width=  5cm]{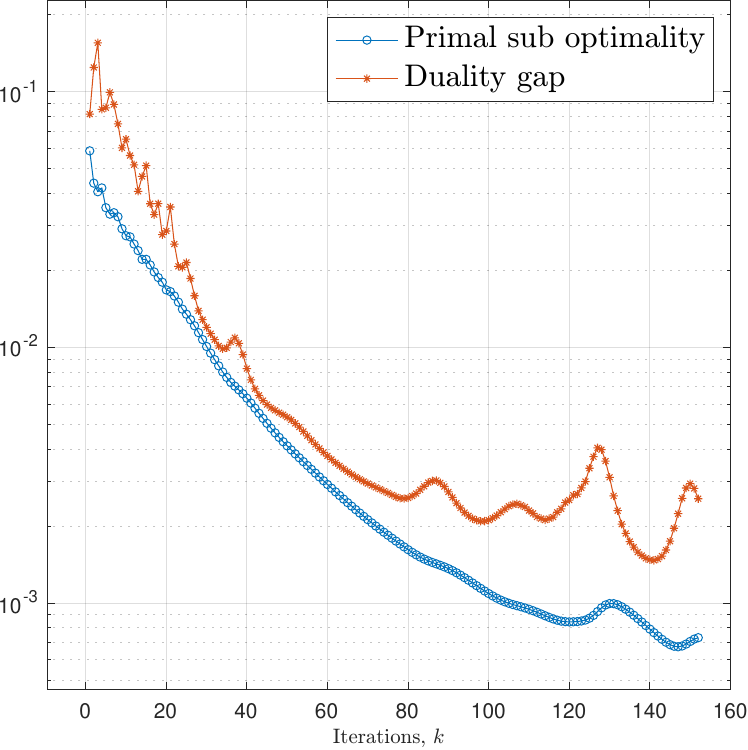}
    \caption{\mmode{\ambradius = 1}, \mmode{\alpha = 0}}
    \label{fig:rho-1.0}
    \end{subfigure}
    \caption{Convergence plots for Algorithm \ref{algo:NDRO-min-max} with \mmode{\keps = 75}, applied to \eqref{eq:min-var-ellips-simulation-problem}.}
    \label{fig:min-variance-noreg}
\end{figure}

\paragraph{{\bf With explicit regularization of \mmode{x}}} We observe in Figure \ref{fig:min-variance-noreg} that, as the value of \mmode{\ambradius} increases, the convergence of the algorithm becomes less smooth, and also slower. Perhaps, this can be attributed to the fact that the strong-convexity assumption in Assumption \ref{assumption:NDRO-smoothness} fails. This is so because, a larger value of \mmode{\ambradius} allows the samples \mmode{(\xi_i)_i} to be perturbed in such a way that the variance matrix of the perturbed points is rank deficient. Thus, the function \mmode{\vrisk{x}{\P}} is not strongly convex in \mmode{x} for \mmode{\P} corresponding to the perturbed points.

To remedy this, as suggested in Section \ref{subsection:slow-convergence-without-strong-convexity}, we compute an approximate saddle point of \eqref{eq:min-var-ellips-simulation-problem} by applying the FW-algorithm to the explicitly regularized min-sup problem
\begin{equation}
\label{eq:min-var-ellips-simulation-problem-with-reg}
\min_{x \in \simplex{n}} \sup_{\P \in \wamb{2}} \frac{\alpha}{2} \pnorm{x}{2}^2 + \vrisk{x}{\P} .  
\end{equation}
Since \mmode{\pnorm{x}{2} \leq \pnorm{x}{1} = 1} for all \mmode{x \in \simplex{n}}, we know that an \mmode{\eps}-saddle point of \eqref{eq:min-var-ellips-simulation-problem} can be computed for any \mmode{\eps > 0} by applying Algorithm \ref{algo:NDRO-min-max} to \eqref{eq:min-var-ellips-simulation-problem-with-reg} with \mmode{\alpha = 2\eps}. We emphasize that both the primal function: \mmode{\min_{x \in \simplex{n}} \vrisk{x}{\P_k}} and the dual function: \mmode{\sup_{\P \in \wamb{2}} \vrisk{x_k}{\P}} are not accessible in the implementation of Algorithm \ref{algo:NDRO-min-max} for the explicitly regularized objective function. Thus, these quantities are computed separately at each iteration of the algorithm to record the primal sub-optimality and the duality gap.

\begin{figure}[t]
    \centering
    \begin{subfigure}[b]{0.32\textwidth}
    \raggedleft
\includegraphics[width=  5cm]{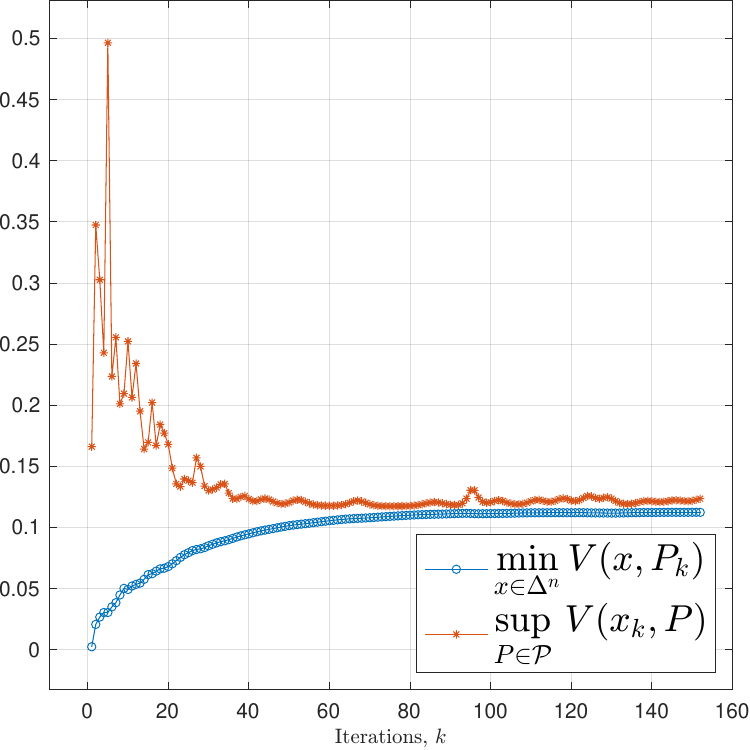}
\vspace{0.25cm}
\includegraphics[width=  5cm]{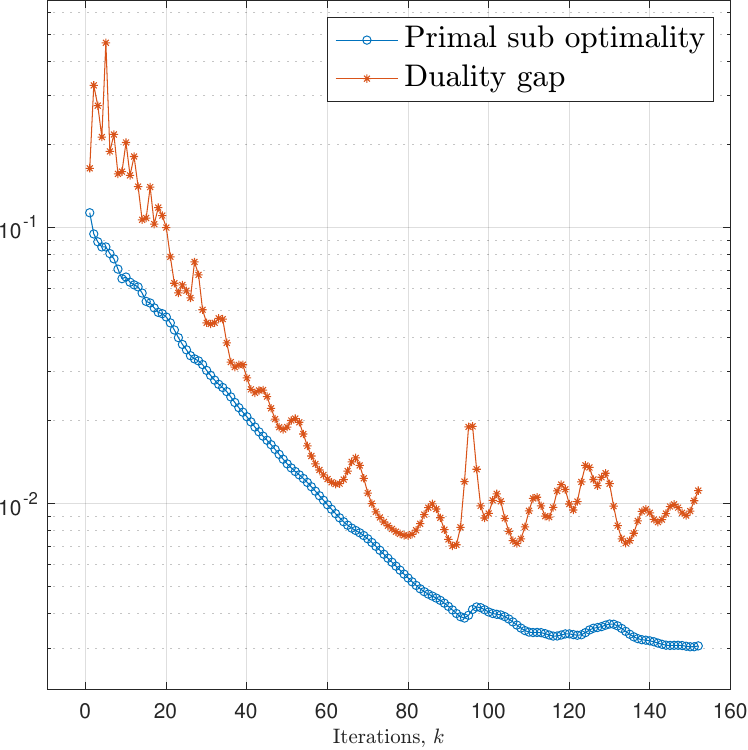}
    \caption{\mmode{\ambradius = 1.5}, \mmode{\alpha = 0}}
    \label{fig:alpha-min}
    \end{subfigure}
     \hfill
    \begin{subfigure}[b]{0.32\textwidth}
    \raggedleft
\includegraphics[width=  5cm]{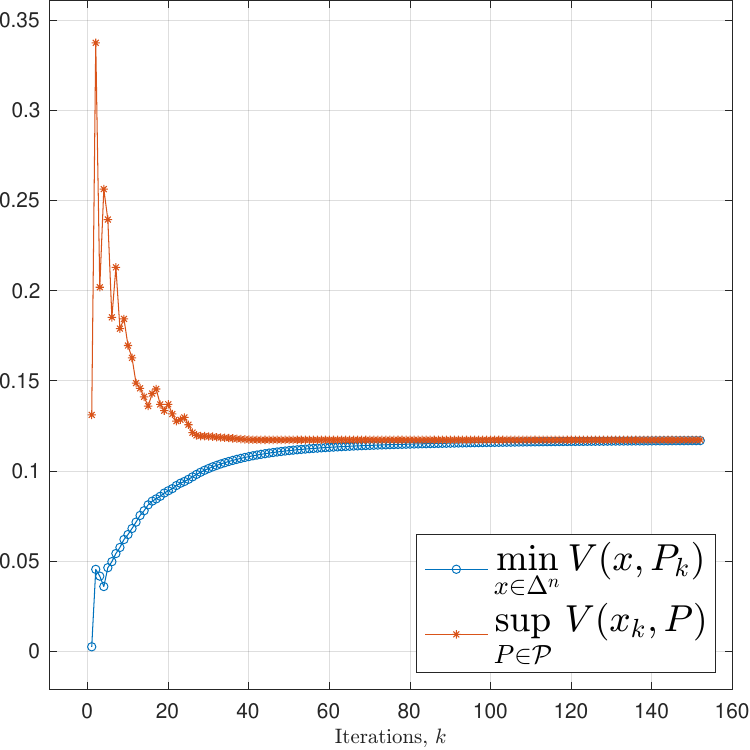}
\vspace{0.25cm}
\includegraphics[width=  5cm]{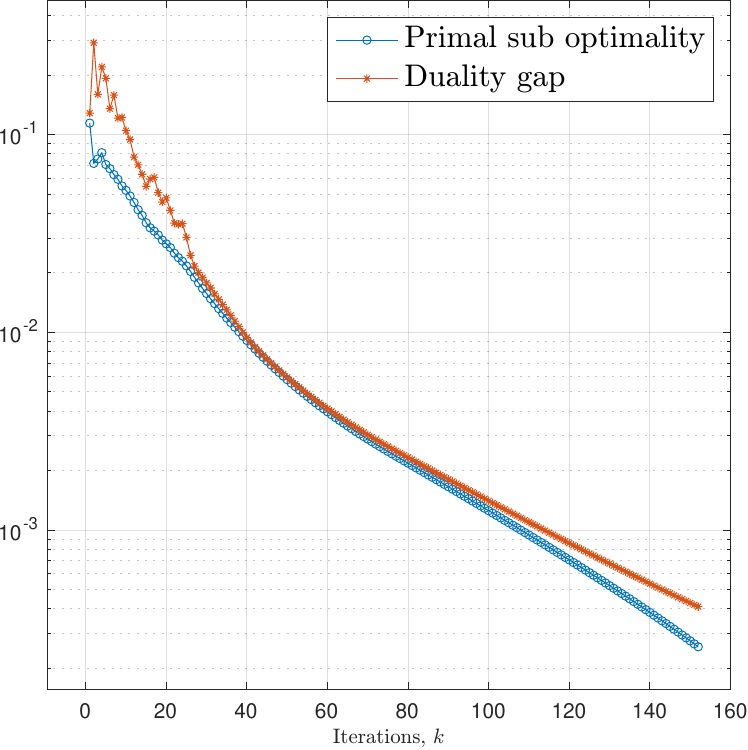}
    \caption{\mmode{\ambradius = 1.5}, \mmode{\alpha = 0.1}}
    \label{fig:alpha-mid}
    \end{subfigure}
    \hfill
    \begin{subfigure}[b]{0.32\textwidth}
    \raggedleft
\includegraphics[width=  5cm]{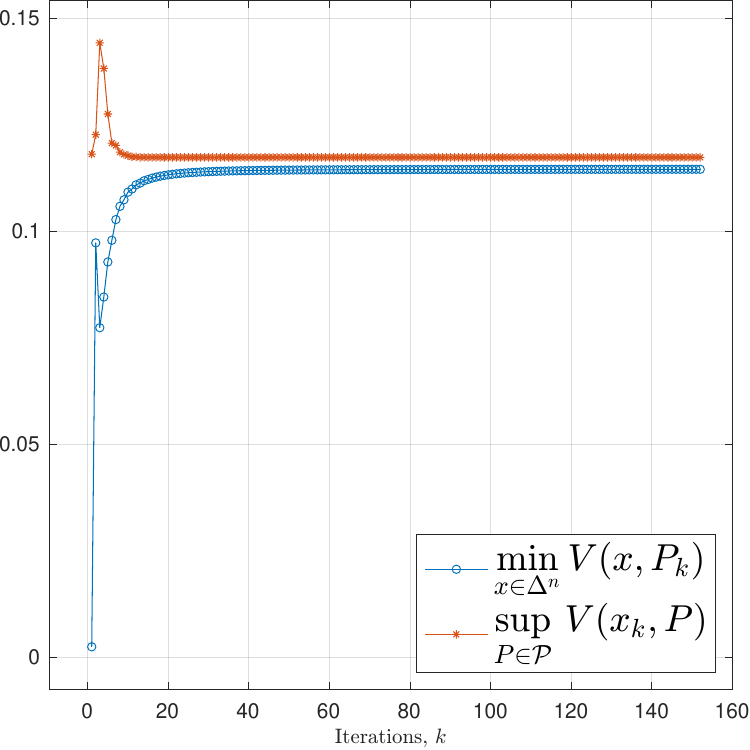}
\vspace{0.25cm}
\includegraphics[width=  5cm]{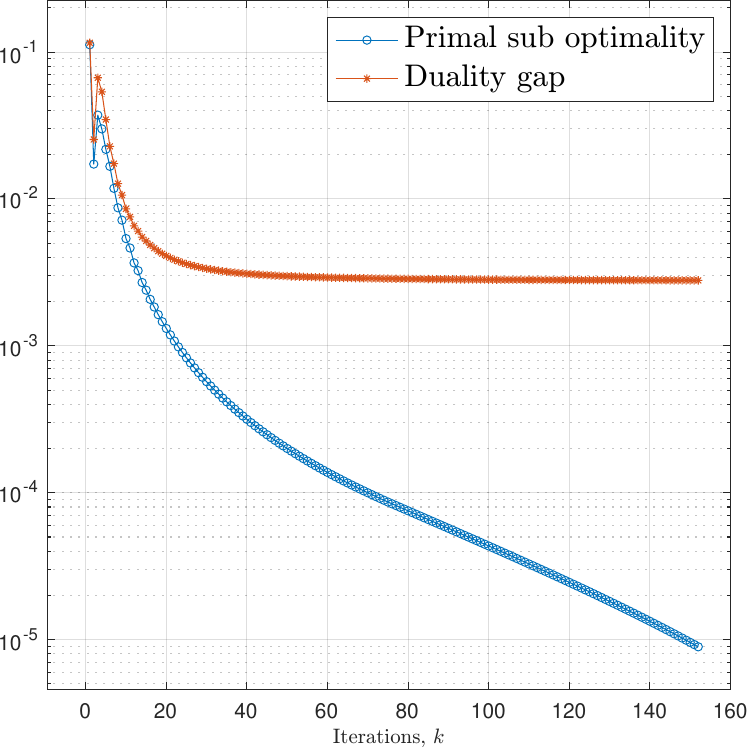}
    \caption{\mmode{\ambradius = 1.5}, \mmode{\alpha = 1}}
    \label{fig:alpha-max}
    \end{subfigure}
    \caption{Convergence plots for Algorithm \ref{algo:NDRO-min-max} with \mmode{\keps = 75}, applied to the explicitly regularized approximation \eqref{eq:min-var-ellips-simulation-problem-with-reg}.}
    \label{fig:min-variance-withreg}
\end{figure}

In Figure \ref{fig:min-variance-withreg}, we show the convergence plots of Algorithm \ref{algo:NDRO-min-max} when applied to the explicitly regularized problem \eqref{eq:min-var-ellips-simulation-problem-with-reg}. We consider the same data set from Figure \ref{fig:min-variance-noreg} but with a slightly larger value of \mmode{\ambradius = 1.5}, making the conditioning of the problem even worse than that for \mmode{\ambradius = 1}.

We show the convergence plots of the algorithm for three different values of \mmode{\alpha = 0, 0.1, 1}. Clearly, for \mmode{\alpha = 0}, the problem is not regularized, and the convergence is bad (even worse than that for \mmode{\ambradius = 1} from Figure \ref{fig:rho-1.0}). Then the effect of explicit regularization and how it improves the regularity and convergence can be clearly seen in Figure \ref{fig:min-variance-withreg}. For \mmode{\alpha = 0.1} and \mmode{1}, we see that the solution computed is an \mmode{\eps}-saddle point for the respective value of \mmode{\eps} (i.e., \mmode{\eps = 0.05} for \mmode{\alpha = 0.1} and \mmode{\eps = 0.5} for \mmode{\alpha = 1}). In addition, Figures \ref{fig:alpha-mid} and \ref{fig:alpha-max} highlight the trade-off between the speed of convergence and worst-case performance metrics. For the larger value of \mmode{\alpha = 1}, due to better regularity, the algorithm is faster which can be seen in Figure \ref{fig:alpha-max} that both the curves of primal (blue) and dual (red) functions achieve within one percent of their final value in less than 20 iterations. Whereas, with \mmode{\alpha = 0.1}, even after 60 iterations, the primal function (blue curve) is only within 3.5 percent of its final value. However, this improved speed comes at a cost as clearly evident from Figures \ref{fig:alpha-mid} and \ref{fig:alpha-max} that the worst-case cost for \mmode{\alpha = 0.1} (0.11731) is smaller than that for \mmode{\alpha = 1} (0.117403). Furthermore, even though the solution computed with both \mmode{\alpha = 0,1} and \mmode{1}, is within the respective sub-optimality levels, the duality gap however, actually goes to zero for \mmode{\alpha = 0.1}, which is not the case for \mmode{\alpha = 1}.

\section{Technical Proofs}
\label{section:technical-proofs}

\subsection{Proofs of Section \ref{section:derivatives-and-smoothness} (derivatives)}

In this part, we cover the technical proofs of the theoretical statements in Section \ref{section:derivatives-and-smoothness}.

\begin{proof}[Proof of Lemma \ref{lemma:RR-directional derivatives}]
From the definition \ref{def:directional-derivative}, we see that
\begin{align*}
	   \RPgrad{\P}{\Q} &= \lim_{\eps \downarrow 0} \frac{1}{\eps} \left( \RP{\P + \eps (\Q - \P)} - \RP{\P} \right)   \\
	   &=   \lim_{\eps \downarrow 0} \frac{1}{\eps} \left( \risk \left( \EE{\P} [L(\xi)] + \eps \big( \EE{\Q-\P} [L (\xi)] \big) \right) - \risk \left( \EE{\P} [\xi] \right) \right)   \\
	   &=   \lim_{\eps \downarrow 0} \frac{1}{\eps} \left( \risk \left( \EE{\P} [\xi] \right) + \eps \inprod{ \nabla \risk (\EE{\P} [L(\xi)]) }{ \EE{\Q-\P} [L(\xi)] } + o(\eps^2) - \risk \left( \EE{\P} [\xi] \right) \right)   \\
	   &= \inprod{ \nabla \risk (\EE{\P} [L(\xi)]) }{ \EE{\Q-\P} [L(\xi)] } \\
	   &= \EE{\Q-\P} \left[ \inprod{ \nabla \risk (\EE{\P} [L(\xi)]) }{  L(\xi)} \right].
\end{align*}
\end{proof}

\begin{proof}[Proof of Proposition \ref{proposition:derivative-properties}]
From Definition \ref{def:directional-derivative}, we have
\[
\begin{aligned}
\RPgrad{\P}{\P_{\stepsize}} &= \lim_{\eps \downarrow 0} \frac{1}{\eps} \Big( \RP{\P + \eps (\P_{\stepsize} - \P )} - \RP{\P}  \Big) = \lim_{\eps \downarrow 0} \frac{\stepsize}{(\eps \stepsize)} \Big( \RP{\P + \eps \stepsize (\Q - \P )} - \RP{\P} \Big) \\
&= \ \stepsize \lim_{\eps' \downarrow 0} \frac{1}{\eps'} \Big( \RP{\P + \eps' (\Q - \P )} - \RP{\P} \Big) \\
&= \ \stepsize \RPgrad{\P}{\Q},
\end{aligned}
\]
which establishes \eqref{eq:positive-homogeniety}. To establish \eqref{eq:concavity-linear-bound}, we see that
\[
\begin{aligned}
\RPgrad{\P}{\Q} &= \lim_{\eps \downarrow 0} \frac{1}{\eps} \Big( \RP{\P + \eps (\Q - \P )} - \RP{\P}  \Big) \ \geq \ \lim_{\eps \downarrow 0} \frac{1}{\eps} \Big( \RP{\P} + \eps \big( \RP{\Q} - \RP{\P} \big) - \RP{\P} \Big) \\
&\geq \ \RP{\Q} - \RP{\P} ,
\end{aligned}
\]
where the first inequality is due to concavity of \mmode{\P \mapsto \RP{\P}}. This completes the proof.
\end{proof}

\begin{proof}[Proof of Lemma \ref{lemma:RR-smoothness}]
For any \mmode{\P, \Q \in \amb} with \mmode{\P_{\stepsize} = \P+\stepsize(\Q - \P)} and \mmode{\stepsize \in [0,1]} we see that
\[
\begin{aligned}
& \RPgrad{\P_{\stepsize}}{\P} + \RPgrad{\P}{\P_{\stepsize}} \\
& \quad = \inprod{ \nabla \risk (\EE{\P} [L(\xi)]) }{ \EE{\P_{\stepsize}} [L(\xi)] - \EE{\P} [L(\xi)]} \; + \; \inprod{ \nabla \risk (\EE{\P_{\stepsize}} [L(\xi)]) }{ \EE{\P} [L(\xi)] - \EE{\P_{\stepsize}} [L(\xi)]} \\
& \quad = \inprod{ \nabla \risk (\EE{\P} [L(\xi)]) \; - \; \nabla \risk (\EE{\P_{\stepsize}} [L(\xi)]) }{ \EE{\P_{\stepsize}} [L(\xi)] - \EE{\P} [L(\xi)]} \\
& \quad \leq \beta \norm{ \EE{\P_{\stepsize}} [L (\xi)] - \EE{\P} [L (\xi)] }^2 \leq \stepsize^2 \beta \norm{ \EE{\Q} [L (\xi)] - \EE{\P} [L (\xi)] }^2 \leq \stepsize^2 \big(\beta d^2 \big).
\end{aligned}
\]
\end{proof}

\subsection{Proofs of Section \ref{section:FW-algorithm} (FW-algorithm)}

In this part, we cover the technical proofs of the theoretical statements in Section \ref{section:FW-algorithm}.
\begin{proof}[Proof for Lemma \ref{lemma:FW-one-step-bounds}]
For \mmode{\P} and \mmode{\P_{\stepsize}}, we see that the inequalities hold.
\begin{align*}
\RP{\P} - \RP{\P_{\stepsize}} \ 
& \leq \ \RPgrad{\P_{\stepsize}}{\P} && \text{from \eqref{eq:concavity-linear-bound}} \\
& = \ \RPgrad{\P_{\stepsize}}{\P} + \RPgrad{\P}{\P_{\stepsize}} - \RPgrad{\P}{\P_{\stepsize}} && \\
& \leq \ \stepsize^2 \smooth - \stepsize \RPgrad{\P}{\Q'} &&\text{from smoothness \eqref{eq:G-smoothness}, and \eqref{eq:positive-homogeniety}} \\
& \leq \ \stepsize^2 \smooth (1 + \delta) \ - \ \stepsize \; \RPgrad{\P}{\Q}  && \text{from \eqref{eq:FW-oracle}}.
\end{align*}
Rearranging terms and using  \eqref{eq:concavity-linear-bound}, we get
\[
R(\Q) - \RP{\P} \quad \leq \quad \RPgrad{\P}{\Q} \quad \leq \quad \frac{1}{\stepsize} \Big( \RP{\P_{\stepsize}} - \RP{\P} \Big) + \stepsize \smooth (1+\delta) ,
\]
wherein, by considering the supremum over \mmode{\Q \in \amb}, the inequalities in \eqref{eq:fw-one-step-ineq} follow at once.
\end{proof}

\begin{proof}[Proof for Proposition \ref{proposition:FW-convergence}]
Let \mmode{(\P_k)_k} be the sequence of Frank-Wolfe iterates updated as in \eqref{eq:FW-iterations-main}. From the one-step inequality \eqref{eq:fw-one-step-sub-optimality}, we have
\begin{equation}
\label{eq:FW-induction-ineq}
    R\opt - \RP{\P_{k+1}} \leq \stepsize_k^2 \smooth (1 + \delta) + (1 - \stepsize_k) \big( R\opt - \RP{\P_k} \big) \quad \text{for every } k \geq 0.
\end{equation}
Using \eqref{eq:FW-induction-ineq}, we prove the primal convergence of FW-algorithm following the classical approach \cite{jaggi2013revisiting,clarkson2010coresets} by the method of induction.

\emph{Base case}: For \mmode{k = 1}, since \mmode{\P_1 = \P_0 + \stepsize_0 (\Q_0 - \P_0)} we deduce from \eqref{eq:FW-induction-ineq} using \mmode{\stepsize = \stepsize_0 = 1} that
\[
R\opt - \RP{\P_1} \leq \smooth (1 + \delta) \ < \ \frac{4\smooth}{3} (1 + \delta) .
\]
Thus, \eqref{eq:fw-primal-convergence} holds for \mmode{k = 1}.

\emph{Induction step}: Now, assuming that \eqref{eq:fw-primal-convergence} holds for some \mmode{k \geq 1}, we deduce from \eqref{eq:FW-induction-ineq} that
\[
\begin{aligned}
R\opt - \RP{\P_{k+1}} \ 
&\leq \ \stepsize_k^2 \smooth (1 + \delta) + (1 - \stepsize_k) \big( R\opt - \RP{\P_k} \big) \quad \\
&\leq \ \frac{4\smooth(1 + \delta)}{(k + 2)^2} + \frac{k}{k + 2} \frac{4\smooth(1 + \delta)}{k + 2} = \frac{4\smooth(1 + \delta)}{k + 2} \frac{1 + k}{k + 2} \\
&\leq \ \frac{4\smooth (1 + \delta)}{k + 3}, \quad \text{since } (k + 2)^2 \leq (k + 1)(k + 3).
\end{aligned}
\]
Thus, we conclude that \eqref{eq:fw-primal-convergence} indeed holds for every \mmode{k \geq 1}. The proof is now complete.
\end{proof}

\begin{proof}[Proof for Proposition \ref{proposition:FW-gap}]
Following the ideas of \cite{clarkson2010coresets,jaggi2013revisiting}, we first prove by the method of contradiction that there exists \mmode{\khat} such that 
\begin{equation}
\label{eq:fw-sup-sub-optimality}
\sup_{\Q \in \amb} \ \RPgrad{\P_\khat}{\Q} \leq \frac{4\smooth(1 + \delta)}{K + 2} \quad \text{for some } \khat \in \{K, \ldots, 2K + 1 \} .
\end{equation}
Suppose that \mmode{\sup_{\Q \in \amb} \ \RPgrad{\P_k}{\Q} \ > \ \frac{4\smooth(1 + \delta)}{K + 2}} for every \mmode{k \in \{K, \ldots, 2K + 1\}}, then by considering \mmode{\stepsize = \frac{2}{K + 2}} in \eqref{eq:fw-one-step-ineq} and rearranging terms, we see that for every \mmode{k \in \{K, \ldots, 2K + 1\}}, we have the inequality
\[
\begin{aligned}
\RP{\P_{k}} - \RP{\P_{k + 1}} \ 
&\leq \ \frac{4}{(K + 2)^2} \smooth (1 + \delta) \ - \ \frac{2}{K + 2} \ \sup_{\Q \in \amb} \ \RPgrad{\P_k}{\Q} \\
&< \ \frac{4}{(K + 2)^2} \smooth (1 + \delta) \ - \ \frac{8}{(K + 2)^2} \smooth (1+\delta) \\
&< -\; \frac{4}{K + 2} \smooth (1+\delta) .
\end{aligned}
\]
Summing these inequalities for \mmode{k = K, \ldots, 2K + 1}, we get \mmode{ \RP{\P_K} - \RP{\P_{2K + 2}} \; < \; - \; \big( \nicefrac{4}{K + 2} \big) \smooth (1+\delta)}. Combining this with the sub-optimality of \mmode{\RP{\P_K}} from \eqref{eq:fw-primal-convergence} finally gives
\[
R\opt - \RP{\P_{2K + 2}} \ < \ - \; \frac{4}{K + 2} \smooth (1+\delta) \ + \; \frac{4}{K + 2} \smooth (1+\delta) \ < \; 0 ,
\]
which is a contradiction. Therefore, there must exist some \mmode{\khat \in \{K,\ldots, 2K + 1\}}, such that \eqref{eq:fw-sup-sub-optimality} holds.

For every \mmode{\khat} such that \eqref{eq:fw-sup-sub-optimality} holds, we also necessarily have 
\begin{equation}
\label{eq:fwgap-upper-bound}
\fwgap_{\khat} = \RPgrad{\P_\khat}{\Q_\khat} \leq \sup_{\Q \in \amb} \ \RPgrad{\P_\khat}{\Q} \leq \frac{4\smooth(1 + \delta)}{K + 2} .
\end{equation}
We emphasize that at each iteration \mmode{k}, the FW-oracle solves the FW-problems upto an additive accuracy of \mmode{\delta \stepsize_\khat \smooth}. Therefore, we do not have access to the exact value of \mmode{\sup_{\Q \in \amb} \ \RPgrad{\P_k}{\Q}}, but only its approximation \mmode{\fwgap_k}. Consequently, it is only possible to verify whether the upper bound \eqref{eq:fwgap-upper-bound} for \mmode{\fwgap_{\khat}} is satisfied for some \mmode{\khat}, and impossible to verify whether \eqref{eq:fw-sup-sub-optimality} holds. Moreover, even if \eqref{eq:fwgap-upper-bound} holds for some \mmode{\khat}, it is not necessary that \eqref{eq:fw-sup-sub-optimality} also holds since \mmode{\fwgap_{\khat} \symbolspace{\leq}{\;} \sup_{\Q \in \amb}  \RPgrad{\P_\khat}{\Q}}. However, since \mmode{\fwgap_{\khat}} is approximately equal to the FW-gap, an upper bound on \mmode{\fwgap_{\khat}} gives the following slightly worse upper bound on the FW-gap
\[
\sup_{\Q \in \amb} \; \RPgrad{\P_\khat}{\Q} \leq \delta \stepsize_\khat \smooth + \RPgrad{\P_\khat}{\Q_\khat} \leq \frac{2\delta \smooth}{K + 2} + \frac{4\smooth(1+\delta)}{K + 2} \leq \frac{2\smooth(2\smooth + 3\delta)}{K + 2}  .
\]
Finally, since \mmode{\RP{\Q} - \RP{\P} \symbolspace{\leq}{\;} \RPgrad{\P}{\Q}} for every \mmode{\Q \in \amb} (follows from \eqref{eq:concavity-linear-bound}), taking supremum over \mmode{\Q \in \amb} immediately gives the inequality
\[
R\opt - \RP{\P} \leq \sup_{\Q \in \amb} \; \RPgrad{\P_\khat}{\Q} .
\]
This completes the proof of the proposition.
\end{proof}

\subsection{Proofs of Section \ref{section:NDRO} (NDRO)}

In this part we cover the technical proofs of the theoretical statements in Section \ref{section:NDRO}.

\begin{proof}[Proof of Lemma \ref{lemma:NDRO-smoothness} ] 
Since the set \mmode{\setx} is closed and the function \mmode{\FP{x}{\P}} is continuous and \mmode{\alpha}-strongly convex in \mmode{x} for every \mmode{\P \in \amb}, we conclude that the minimizer \mmode{\argmin_{x \in \setx} \ \FP{x}{\P}} exists and unique. Consequently, \mmode{x(\P) \define \argmin_{x \in \setx} \ \FP{x}{\P} } is a singleton for every \mmode{\P \in \amb}.

\noindent {\em Proof of Lemma \ref{lemma:NDRO-smoothness}(i): Danskin's theorem.} Letting \mmode{\pg \define \P + \stepsize \big( \Q - \P \big)} for \mmode{\stepsize \in [0,1]}, consider the mappings
\[
\begin{cases}
\begin{aligned}
    \setx \times [0,1] \ni (x, \stepsize) \mapsto f(x, \stepsize) \ &\define \ \FP{x}{\pg} \quad \text{and }\\
    [0,1] \ni \stepsize \mapsto g(\stepsize) \ &\define \ \min\limits_{x \in \setx} \; f(x,\stepsize) .
\end{aligned}
\end{cases}
\]
It is easily seen that \mmode{g(\stepsize) = \RP{\pg}}. Moreover, for the one sided derivatives of \mmode{g} at \mmode{\stepsize \in [0,1)} defined: \mmode{{\rm d}g(\gamma;1) \define \lim_{\delta \downarrow 0} \; \frac{1}{\delta} \big( g(\stepsize + \delta) - g(\stepsize) \big) }, we easily verify that \mmode{\RPgrad{\P}{\Q} = {\rm d}g(0;1)}. Using the short-hand notation \mmode{f_x (\cdot) \define f(x,\cdot)}, we also verify similarly that \mmode{{\rm d}f_x (0 ; 1) =  \FPgrad{x}{\P}{\Q} }, for every \mmode{x \in \setx}. For every \mmode{\stepsize \in [0,1]}, let \mmode{x_{\stepsize} \define \argmin_{x \in \setx} \ f(x,\stepsize)}, we know from the Danskin's theorem \cite[(A.22), p. 154]{bertsekas1971control} that the one sided derivatives \mmode{{\rm d}g(\gamma;1) } at \mmode{\stepsize}, are given by
\[
{\rm d}g(\gamma;1) = {\rm d}f_{x_\stepsize} (\stepsize ; 1) = \lim\limits_{\delta \downarrow 0} \frac{1}{\delta} \big( f(x_{\stepsize}, \stepsize + \delta) - f(x_{\stepsize} , \stepsize) \big) .
\]
Collecting everything, we have \mmode{\RPgrad{\P}{\Q} = {\rm d}g(0;1) = {\rm d}f_{x_0} (0;1) = \FPgrad{x(\P)}{\P}{\Q} }. 

\noindent {\em Proof of Lemma \ref{lemma:NDRO-smoothness}(ii): Smoothness.}
Recall that \mmode{\nabla_1 F(x,\P) \define \big( \nicefrac{\partial F}{\partial x} \big) (x, \P)} denotes the partial derivative of \mmode{F} w.r.t.\,\mmode{x} evaluated at \mmode{(x,\P)}. Since \mmode{x(\P) = \argmin_{x \in \setx} \FP{x}{\P}}, the first-order optimality conditions give
\begin{equation}
\label{eq:first-order-optimality-x}
\fxgrad{x(\P)}{\P}{y} \; \geq \; 0 \quad \text{ for all } y \in \setx .
\end{equation}
Due to \mmode{\alpha}-strong-convexity of \mmode{\FP{y}{\P}} in \mmode{y}, we have
\[
\begin{aligned}
\frac{\alpha}{2} \norm{x(\P) - x(\Q)}^2 \ &\leq \ \FP{x(\Q)}{\P} - \FP{x(\P)}{\P} - \fxgrad{x(\P)}{\P}{x(\Q)} \\
& \leq \ \FP{x(\Q)}{\P} - \FP{x(\P)}{\P} \quad \text{from \eqref{eq:first-order-optimality-x} with } y = x(\Q) . 
\end{aligned}
\]
Similarly, \mmode{\alpha}-strong-convexity of \mmode{\FP{\cdot}{\Q}} gives us
\[
\frac{\alpha}{2} \norm{x(\Q) - x(\P)}^2 \leq \FP{x(\P)}{\Q} - \FP{x(\Q)}{\Q} .
\]
Combining the two inequalities, we infer that the inequality
\begin{equation}
\label{eq:x-quadratic-lower-bound}
\begin{aligned}
\alpha \norm{x(\P) - x(\Q)}^2 \ 
&\leq \ \FP{x(\Q)}{\P} - \FP{x(\P)}{\P} + \FP{x(\P)}{\Q} - \FP{x(\Q)}{\Q} \\
&= \ \FP{x(\Q)}{\P} - \FP{x(\Q)}{\Q} + \FP{x(\P)}{\Q} - \FP{x(\P)}{\P} \\
&\leq \ \FPgrad{x(\Q)}{\Q}{\P} + \FPgrad{x(\P)}{\P}{\Q} && \text{from \eqref{eq:concavity-linear-bound} } \\
&= \ \RPgrad{\Q}{\P} + \RPgrad{\P}{\Q} && \text{from \eqref{eq:danskin}},
\end{aligned}
\end{equation}
holds for every \mmode{\P , \Q \in \amb}. On the one hand, for \mmode{\P_{\stepsize} = \P + \stepsize (\Q - \P)}, \mmode{\stepsize \in [0,1]}, we have
\begin{equation}
\label{eq:smoothness-ineq-1}
\begin{aligned}
& \RPgrad{\P}{\P_\stepsize} \; + \; \RPgrad{\P_\stepsize}{\P} && \\
& \quad \quad = \; \FPgrad{x(\P)}{\P}{\P_\stepsize} \; + \; \FPgrad{x(\P_\stepsize)}{\P_\stepsize}{\P} && \text{from \eqref{eq:danskin}} \\
& \quad \quad = \; \FPgrad{x(\P)}{\P}{\P_\stepsize} \; - \; \FPgrad{x(\pg)}{\P}{\P_\stepsize}\; + \; \FPgrad{x(\pg)}{\P}{\P_\stepsize} \; + \; \FPgrad{x(\P_\stepsize)}{\P_\stepsize}{\P} && \\
& \quad \quad = \; \stepsize \Big( \FPgrad{x(\P)}{\P}{\Q} \; - \; \FPgrad{x(\pg)}{\P}{\Q} \Big) \; + \; \Big( \FPgrad{x(\pg)}{\P}{\P_\stepsize} \; + \; \FPgrad{x(\P_\stepsize)}{\P_\stepsize}{\P} \Big) && \text{from \eqref{eq:positive-homogeniety}} \\
& \quad \quad \leq \; \stepsize \smooth_1 \norm{x(\P_\stepsize) - x(\P)} \; + \; \stepsize^2 \smooth_2 , &&
\end{aligned}
\end{equation}
where the last inequality is due to \eqref{eq:continuous-partial-derivatives} and the smoothness condition (iii) of Assumption \ref{assumption:NDRO-smoothness}. On the other hand, considering \mmode{\Q = \P_\stepsize} in \eqref{eq:x-quadratic-lower-bound}, we have
\begin{equation}
\label{eq:smoothness-ineq-2}
\alpha \norm{x(\P_\stepsize) - x(\P)}^2 \leq \RPgrad{\P_\stepsize}{\P} + \RPgrad{\P}{\P_\stepsize} . 
\end{equation}
Collecting \eqref{eq:smoothness-ineq-1} and \eqref{eq:smoothness-ineq-2} together, we see that
\[
\alpha \norm{x(\P) - x(\P_\stepsize)}^2 \leq \stepsize \smooth_1 \norm{x(\P) - x(\P_\stepsize)} \; + \; \stepsize^2 \smooth_2 .
\]
On rearranging and simplifying terms, it is now easily verified that
\[
\left( \norm{x(\P) - x(\P_\stepsize)} - \frac{\stepsize}{2\alpha} \Big( \smooth_1 - \sqrt{\smooth_1^2 + 4\alpha \smooth_2} \Big) \right) \left( \norm{x(\P) - x(\P_\stepsize)} - \frac{\stepsize}{2\alpha} \Big( \smooth_1 + \sqrt{\smooth_1^2 + 4\alpha \smooth_2} \Big) \right) \symbolspace{\leq}{\;} 0 ,
\]
which is only true if
\[
\frac{\stepsize}{2\alpha} \Big( \smooth_1 - \sqrt{\smooth_1^2 + 4\alpha \smooth_2} \Big) \leq \norm{x(\P) - x(\P_\stepsize)} \leq \frac{\stepsize}{2\alpha} \Big( \smooth_1 + \sqrt{\smooth_1^2 + 4\alpha \smooth_2} \Big) .
\]
The lower bound is irrelevant since it is negative. However, the upper bound is non-trivial, and employing it in \eqref{eq:smoothness-ineq-1} finally gives
\[
\begin{aligned}
\RPgrad{\P}{\P_\stepsize} + \RPgrad{\P_\stepsize}{\P} \ &\leq \ \stepsize^2 \frac{\smooth_1}{2\alpha} \Big( \smooth_1 + \sqrt{\smooth_1^2 + 4\alpha \smooth_2} \Big) \ + \stepsize^2 \smooth_2 \\
&\leq \ \stepsize^2 \left( \smooth_2 + \frac{\smooth_1}{2\alpha} \Big( \smooth_1 + \sqrt{\smooth_1^2 + 4\alpha \smooth_2} \Big) \right) .
\end{aligned}
\]
Since \mmode{ \smooth_2 + \frac{\smooth_1}{2\alpha} \Big( \smooth_1 + \sqrt{\smooth_1^2 + 4\alpha \smooth_2} \Big)  \ < \ +\infty }, we conclude that the risk measure \mmode{R} is \mmode{C}-smooth in the sense of Definition \ref{def:smoothness} for every \mmode{C \ \geq \ \smooth_2 + \frac{\smooth_1}{2\alpha} \Big( \smooth_1 + \sqrt{\smooth_1^2 + 4\alpha \smooth_2} \Big) }. The proof is now complete.
\end{proof}

\begin{proof}[Proof for Theorem \ref{theorem:NDRO-FW}]
We establish assertion (i) of the theorem assuming assertion (ii) holds which is independently proved later.

\noindent {\em Proof for Theorem \ref{theorem:NDRO-FW} (i): Strong duality.}
Assuming assertion (ii) holds, we see that
\[
\min_{x \in \setx} \; \sup_{\Q \in \amb} \ \FP{x}{\Q} \leq \sup\limits_{\Q \in \amb} \ \FP{x_{\eps}}{\Q} \leq \eps + \min_{x \in \setx} \ \FP{x}{\P_{\eps}} \leq \eps + \sup_{\Q \in \amb} \; \min_{x \in \setx} \ \FP{x}{\Q} ,
\]
holds for every \mmode{\eps > 0}, and thus, we have \mmode{\min_{x \in \setx} \; \sup\limits_{\Q \in \amb} \ \FP{x}{\Q} \leq \sup\limits_{\Q \in \amb} \; \min\limits_{x \in \setx} \ \FP{x}{\Q} }. This, together with weak duality: \mmode{\sup\limits_{\Q \in \amb} \; \min\limits_{x \in \setx} \ \FP{x}{\Q} \leq \min\limits_{x \in \setx} \; \sup\limits_{\Q \in \amb} \ \FP{x}{\Q}} proves assertion (i).

\noindent {\em Proof for Theorem \ref{theorem:NDRO-FW} (ii): Saddle point computation.}
Let us recall that 
\[
\RP{\P} = \min\limits_{x \in \setx} \; \FP{x}{\P} \quad \text{and} \quad x(\P) = \argmin\limits_{x \in \setx} \; \FP{x}{\P} .
\]
From \eqref{eq:danskin}, since \mmode{\RPgrad{\P}{\Q} = \FPgrad{x (\P)}{\P}{\Q} }, the iterates \mmode{(\P_k)_k} obtained from Algorithm \ref{algo:NDRO-min-max} can be equivalently regarded as the ones obtained from the FW-algorithm \eqref{eq:FW-iterations-main} for the maximization problem: \mmode{\sup_{\P \in \amb} \ \RP{\P}} under the setting of Proposition \ref{proposition:FW-gap}. Therefore, we conclude from Proposition \ref{proposition:FW-gap}, and more specifically from Remark \ref{remark:FW-eps-stopping}, we know that there exists a \mmode{\keps \leq \khat \leq 2\keps + 1} such that \mmode{\RPgrad{\P_\khat}{\Q_\khat} \leq \eps \frac{2 + 2\delta}{2 + 3\delta}}. 

We also conclude from Remark \ref{remark:FW-eps-stopping} that \mmode{\sup_{\Q \in \amb} \RPgrad{\P_\khat}{\Q} \leq \eps} for \emph{any} \mmode{\keps \leq \khat \leq 2\keps + 1} satisfying \mmode{\RPgrad{\P_\khat}{\Q_\khat} \leq \eps \frac{2 + 2\delta}{2 + 3\delta}}. In particular, for \mmode{(x_\eps , \P_\eps)} to be the output of Algorithm \ref{algo:NDRO-min-max}, we know that
\[
\sup\limits_{\Q \in \amb} \ \FPgrad{x_\eps}{\P_\eps}{\Q} = \sup_{\Q \in \amb} \; \RPgrad{\peps}{\Q} \leq \eps . 
\]
Consequently, for any \mmode{\Q \in \amb}, we have
\[
\FP{x_\eps}{\Q} - \FP{x_\eps}{\P_\eps} \leq \FPgrad{x_\eps}{\P_\eps}{\Q} \leq \eps ,
\]
where the first inequality follows from \eqref{eq:concavity-linear-bound} for \mmode{\FP{x_\eps}{\cdot}}. Finally, taking the supremum over \mmode{\Q \in \amb} we conclude \mmode{ \sup_{\Q \in \amb} \; \FP{\xeps}{\Q} \symbolspace{\leq}{\;} \eps + \FP{\xeps}{\peps} }; which together with the fact that \mmode{\FP{x_\eps}{\P_\eps} = \min_{x \in \setx} \; \FP{x}{\P_\eps}} implies \mmode{(\xeps , \peps)} being indeed an \mmode{\eps}-saddle point in the sense of Definition \ref{def:dual-gap-eps-solution}. The proof is now complete.
\end{proof}

\subsection{Proof of Section \ref{section:er-risk-portfolio-selection} (entropic risk)}
We first prove the results on the FW oracle (Lemma \ref{lemma:er-risk-FW-oracle}) for the entropic risk portfolio selection problem \eqref{eq:er-risk-portfolio-selection} and then prove the results on regularity conditions (Lemma \ref{lemma:er-risk-smaller-ambiguity} and \ref{lemma:er-risk-smoothness-assumptions}).

\paragraph{{\bf Proofs for the FW-oracle}}
We shall first establish two key lemmas that will be later used to prove Lemma \ref{lemma:er-risk-FW-oracle}.
\begin{lemma}
\label{lemma:er-risk-single-point-sup-problem}
Let \mmode{c > \theta > 0}, \mmode{\eta \geq 0}, \mmode{x \in [0,1]}, and \mmode{\xi \in \R{}}. Consider
\begin{equation}
\label{eq:er-risk-single-point-sup-problem}
\sup_{q \in \R{}} \ \lagrangian (\eta, q) \define e^{- \theta x (\xi + q)}  - \eta e^{c \abs{q}} .
\end{equation}
The following assertions hold:
\begin{enumerate}[label = {\rm (\roman*)}, itemsep = 0mm, topsep = 0mm, leftmargin = *]
\item If \mmode{\eta = 0}, the maximization problem \eqref{eq:er-risk-single-point-sup-problem} is unbounded.

\item If \mmode{\eta > 0}, the maximization problem admits a unique optimal solution \mmode{\qeta{\eta}}, given by
\begin{equation}
\label{eq:er-risk-single-point-optimal-solution}
\qeta{\eta} = \min \left\{ 0 , \frac{\theta x \xi + \log(\nicefrac{c \eta}{\theta x})}{ c - \theta x } \right\} .
\end{equation}
\end{enumerate}
\end{lemma}

\begin{proof}[Proof of Lemma \ref{lemma:er-risk-single-point-sup-problem}]
We emphasize that the maximization problem \eqref{eq:er-risk-single-point-sup-problem} is {\em non-convex}. Even then, we shall establish the conditions for \eqref{eq:er-risk-single-point-sup-problem} to admit an optimal solution and also explicitly characterize it. To this end, if \mmode{\eta = 0}, we see that 
\[
\lim_{q \longrightarrow -\infty} \lagrangian (0, q) =  e^{- \theta x \xi} \lim_{q \longrightarrow -\infty} e^{- \theta x q} = +\infty .
\]
Therefore, assertion (i) of the lemma follows at once.

Now, if \mmode{\eta > 0}, then we see that
\[
\begin{cases}
\begin{aligned}
\lim_{q \longrightarrow +\infty} L(\eta, q) &= \lim_{q \longrightarrow +\infty} e^{-\theta x (z + q)} - \eta e^{c q} = 0 - \eta (+\infty) && = - \infty , \\
\lim_{q \longrightarrow -\infty} L(\eta, q) &= \lim_{q \longrightarrow -\infty} e^{-c q} \left( e^{-\theta x z } e^{( c -\theta x) q} - \eta \right) = (+\infty) (0 - \eta) && = - \infty .
\end{aligned}
\end{cases}
\]
Therefore, we conclude that whenever \mmode{\eta > 0}, \eqref{eq:er-risk-single-point-sup-problem} admits an optimal solution \mmode{\qeta{\eta} \in \R{} }. Moreover, at the optimal solution \mmode{\qeta{\eta}}, we know that the necessary optimality conditions must be satisfied 
\begin{equation}
\label{eq:er-risk-single-sup-problem-necessary-condition}
0 \in \frac{\partial}{\partial q} \lagrangian (\eta, \qeta{\eta}) = -\theta x e^{- \theta x (z + \qeta{\eta}) } - \eta c e^{c \abs{\qeta{\eta}}} \sgn (\qeta{\eta}) .
\end{equation}
From \eqref{eq:er-risk-single-sup-problem-necessary-condition}, it is immediately evident that \mmode{\qeta{\eta} \leq 0}. Moreover, we also see that
\begin{itemize}
    \item If \mmode{(\nicefrac{c \eta}{\theta x}) \geq e^{-\theta x z}} - we see that 
    \[
    \frac{\partial}{\partial q} \lagrangian (\eta, q) = \theta x e^{-\theta x q} \left( (\nicefrac{c \eta}{\theta x}) e^{-( c - \theta x) q} - e^{-\theta x z} \right) \geq 0 \text{ for all \mmode{q<0}, and}
    \]
    \mmode{0 \in \frac{\partial}{\partial q} \lagrangian (\eta, 0) = \eta c [-1, +1] - \theta x e^{-\theta x z} }. Therefore, \mmode{\qeta{\eta} = 0} is the only point satisfying the necessary optimality condition \eqref{eq:er-risk-single-sup-problem-necessary-condition}, and consequently the unique optimal solution to \eqref{eq:er-risk-FW-scalar-problem}.

    \item Similarly if \mmode{(\nicefrac{c \eta}{\theta x}) < e^{-\theta x z}} - it is easily verified that for \mmode{\qeta{\eta} = \frac{\theta x z + \log (\nicefrac{\eta c}{\theta x})}{c - \theta x} }, is the unique point which satisfies the necessary first-order optimality condition \eqref{eq:er-risk-single-sup-problem-necessary-condition}, and consequently, is the unique optimal solution to \eqref{eq:er-risk-FW-scalar-problem}.
\end{itemize}
We finally note that the cases above can be compressed as \mmode{\qeta{\eta} = \min \left\{ 0 , \frac{\theta x z + \log (\nicefrac{\eta c}{\theta x})}{c - \theta x} \right\}}. The proof is now complete.
\end{proof}

\begin{lemma}
\label{lemma:er-risk-FW-scalar-problem-dual}
Let \mmode{c, \theta}, and \mmode{x} be as in Lemma \ref{lemma:er-risk-single-point-sup-problem}, and let \mmode{Z(t)}, \mmode{t = 1,2,\ldots,T} be a non-decreasing sequence of real numbers. Consider the following minimization problem
\begin{equation}
\label{eq:er-risk-FW-scalar-problem-dual}
\inf_{\eta \geq 0} \quad J(\eta) \define \eta e^{c \ambradius} + \frac{1}{T} \summ{t = 1}{T} \sup_{q(t) \in \R{}} e^{- \theta x (Z (t) + q (t))}  - \eta e^{c \abs{q(t)}} .
\end{equation}
The minimization problem \eqref{eq:er-risk-FW-scalar-problem-dual} admits a unique optimal solution \mmode{\eta\opt} given by
\begin{equation}
\label{eq:er-risk-dual-solution}
\eta\opt \define \frac{\theta x}{c} \left( \frac{1}{T e^{c\ambradius} - T'} \summ{t = T' + 1}{T} e^{ \frac{- c \theta x Z(t)}{c - \theta x} } \right)^{\frac{c - \theta x}{\theta x}} ,
\end{equation}
where \mmode{T' \in \{1, 2, \ldots, T \} } is the smallest integer such that \mmode{ (T e^{c \ambradius} - T' ) e^{\frac{- c \theta x Z(T') }{c - \theta x}}\geq \summ{t = T' + 1}{T} e^{\frac{- c \theta x Z(t) }{c - \theta x}} }.
\end{lemma}

\begin{proof}[Proof of Lemma \ref{lemma:er-risk-FW-scalar-problem-dual}]
Since \mmode{\eta \longmapsto \log(\nicefrac{c\eta}{\theta x})} is monotonically increasing and eventually positive, we observe from \eqref{eq:er-risk-single-point-optimal-solution} that \mmode{\lim\limits_{\eta \rightarrow +\infty} \qeta{\eta} = 0}. Consequently, we see that
\[
\lim_{\eta \rightarrow +\infty} J(\eta) = \lim_{\eta \rightarrow + \infty} \eta (e^{c \ambradius} - 1) + \frac{1}{T} \summ{t = 1}{T} e^{- \theta x z(t)} = + \infty .
\]
Similarly, as \mmode{\eta \downarrow 0}, we conclude from assertion (i) of Lemma \ref{lemma:er-risk-single-point-sup-problem} that \mmode{\qeta{\eta} < 0}, consequently, we see
\[
\lim_{\eta \downarrow 0} J(\eta) = \lim_{\eta \downarrow 0} \ \eta e^{c \ambradius} + \eta^{\frac{-\theta x}{c - \theta x}} \left( \frac{c}{\theta x} - 1 \right) (\nicefrac{\theta x}{c})^{\frac{c}{c - \theta x}}  \frac{1}{T} \summ{t = 1}{T} e^{\frac{- c \theta x Z(t)}{c - \theta x}} = +\infty .
\]
Since the mapping \mmode{\eta \longmapsto J(\eta)} is convex, we know that there exists some \mmode{\eta\opt > 0} such that \mmode{\eta\opt = \argmin\limits_{\eta \geq 0} J(\eta) }, i.e., an optimal solution to \eqref{eq:er-risk-FW-scalar-problem-dual} exists. To characterise an optimal solution \mmode{\eta\opt}, we observe that \mmode{\qeta{\eta}} is unique for any \mmode{\eta > 0}, and from Danskin's theorem, we conclude that \mmode{\frac{d}{d \eta} J(\eta) = e^{c \ambradius} - \frac{1}{T} \summ{t = 1}{T} e^{c \abs{q_{\eta} (t)}} }. Therefore, the optimal solution \mmode{\eta\opt} is such that \mmode{ T e^{c \ambradius} = \summ{t = 1}{T} e^{c \abs{q_{\eta\opt} (t)}} }. Finding such an \mmode{\eta\opt} where the equality holds is not straightforward.

For \mmode{\eta(t) = \big(\nicefrac{\theta x}{c} \big) e^{- \theta x Z(t)} }, \mmode{t = 1,2,\ldots,T}, it is easily verified that \mmode{\qeta{\eta_t} (s) = 0} for all \mmode{s \leq t}. Thus, 
\[
\frac{d}{d \eta} J(\eta_t) = e^{c \ambradius} - \frac{1}{T} \left( t + \summ{s = t+1}{T} e^{ \frac{c \theta x (Z(t) - Z(s))}{c - \theta x} } \right) .
\]
Let \mmode{T' \in \{1,2,\ldots,T\}} be the smallest integer such that \mmode{\frac{d}{d \eta} J(\eta (t)) \geq 0 }. Since \mmode{J(\eta)} is convex, \mmode{\frac{d}{d \eta} J(\eta)} is non-decreasing. Consequently, with \mmode{\eta_0 = 0}, we know that \mmode{\eta\opt \in \ ]\eta_{T' - 1} , \eta_{T'}]}, and therefore, \mmode{\qeta{\eta\opt} (t) = 0 } for all \mmode{t \leq T'}, and \mmode{\qeta{\eta\opt} (t) = \frac{\theta x Z(t) + \log(\nicefrac{c \eta\opt}{\theta x})}{ c - \theta x }  } for \mmode{t = T'+1 , \ldots, T}. Substituting, these values of \mmode{\qeta{\eta\opt} (t)} in the equation \mmode{ T e^{c \ambradius} = \summ{t = 1}{T} e^{c \abs{q_{\eta\opt} (t)}} } and simplifying for \mmode{\eta\opt} gives \eqref{eq:er-risk-dual-solution}. The proof of the lemma is complete.
\end{proof}

\begin{proof}[Proof of Lemma \ref{lemma:er-risk-FW-oracle}]
The dual problem of \eqref{eq:er-risk-FW-scalar-problem} is given by
\begin{equation}
\label{eq:-er-risk-FW-scalar-problem-dual-1}
\inf_{\eta \geq 0} \quad J(\eta) = \eta e^{c \ambradius} + \frac{1}{T} \summ{t = 1}{T} \sup_{q(t) \in \R{}} e^{- \theta x (z (t) + q (t))}  - \eta e^{c \abs{q(t)}} ,
\end{equation}

\noindent {\em Proof of Lemma \ref{lemma:er-risk-FW-oracle} (i): Optimal solution.}
Since the maximization over \mmode{q(t)} is separable over \mmode{t} we see that 
\[
J(\eta) = \eta e^{c \ambradius} + \frac{1}{T} \summ{s = 1}{T} \sup_{q'(s) \in \R{}} e^{- \theta x (Z (s) + q' (s))}  - \eta e^{c \abs{q'(s)}} ,
\]
where the sequence \mmode{(Z(s))_s} is non-increasing. Consequently, we conclude from Lemma \ref{lemma:er-risk-FW-scalar-problem-dual} that \eqref{eq:-er-risk-FW-scalar-problem-dual-1} admits a unique optimal solution \mmode{\eta\opt} given by \eqref{eq:er-risk-dual-solution}. Given \mmode{\eta\opt}, the optimal solution \mmode{\qeta{\eta\opt} (t)} for each \mmode{t = 1,2, \ldots, T} is obtained from \eqref{eq:er-risk-single-point-optimal-solution}. Substituting for \mmode{\qeta{\eta\opt} (t)} from \eqref{eq:er-risk-single-point-optimal-solution} and simplifying, we easily verify that \mmode{z\opt (t) = z(t) + \qeta{\eta\opt} (t)}. Consequently, \mmode{\Q\opt = \frac{1}{T} \summ{t = 1}{T} \delta (z\opt (t)) } is the optimal solution to \eqref{eq:er-risk-FW-scalar-problem}. This proves assertion (i) of the lemma.

\noindent {\em Proof of Lemma \ref{lemma:er-risk-FW-oracle} (ii): Lower and upper bounds.}
To prove assertion (ii) of the lemma, we first see that since \mmode{\eta\opt > 0}, the first-order optimality conditions imply \mmode{ 0 = \frac{d}{d \eta} J(\eta\opt) = e^{c \ambradius} - \frac{1}{T} \summ{t = 1}{T} e^{c \abs{\qeta{\eta\opt} (t)}} }. Therefore, for any \mmode{s = 1,2,\ldots,T}, we have
\[
e^{c \abs{\qeta{\eta\opt} (s)}} \leq \summ{t = 1}{T} e^{c \abs{\qeta{\eta\opt} (t)}} = T e^{c \ambradius} .
\]
Taking \mmode{\log_e(\cdot)} on both sides of the inequality, we obtain \mmode{ \abs{ \qeta{\eta\opt} (s) } \leq \ambradius + \frac{\log (T)}{c} }. Since \mmode{\qeta{\eta\opt} (s) \leq 0}, we have \mmode{-\ambradius - \frac{\log(T)}{c} \leq \qeta{\eta\opt} (s) \leq 0 } for all \mmode{s = 1,\ldots, T}. Combining this with \mmode{z\opt (s) = z(s) + \qeta{\eta} (s) } and \mmode{ \underline{z} \leq z(s) \leq \overline{z} }, assertion (ii) follows.
\end{proof}

\paragraph{{\bf Proofs for regularity of the entropic risk}}
\begin{proof}[Proof of Lemma \ref{lemma:er-risk-smaller-ambiguity}]
For any \mmode{x \in \simplex{n}}, we shall first show that
\begin{equation}
\label{eq:er-risk-arg-max-equality}
\argmax_{\P \in \amb_c} \errisk{x}{\P} = \argmax_{\P' \in \amb_c'} \errisk{x}{\P'} .
\end{equation}
To show \mmode{ \argmax_{\P \in \amb_c} \errisk{x}{\P} \subset \argmax_{\P' \in \amb_c'} \errisk{x}{\P'} }, let \mmode{\P_x \in \argmax_{\P \in \amb_c} \errisk{x}{\P} }, due to the first-order optimality conditions together with \eqref{eq:er-risk-FW-arg-max-equality}, we see that
\[
\P_x = \argmax_{\Q \in \amb_c} \erxpgrad{x}{\P_x}{\Q} = \argmax_{\Q' \in \amb_c'} \erxpgrad{x}{\P_x}{\Q'}, \text{ and thus, } \P_x \in \argmax_{\P' \in \amb_c'} \errisk{x}{\P'} .
\]
Similarly, to show that \mmode{ \argmax_{\P' \in \amb_c'} \errisk{x}{\P'} \subset \argmax_{\P \in \amb_c} \errisk{x}{\P} }, let \mmode{\P_x' \in \argmax_{\P' \in \amb_c'} \errisk{x}{\P'} }, then from again the first-order optimality conditions together with \eqref{eq:er-risk-FW-arg-max-equality}, we have
\[
\P_x' = \argmax_{\Q' \in \amb_c'} \erxpgrad{x}{\P_x'}{\Q'} = \argmax_{\Q \in \amb_c} \erxpgrad{x}{\P_x'}{\Q}, \text{ and thus, } \P_x' \in \argmax_{\P \in \amb_c} \errisk{x}{\P} .
\]

Now, for \mmode{(x\opt , \P\opt)} to be a saddle-point of \eqref{eq:er-risk-portfolio-selection}, we have
\[
x\opt \in \argmin_{x \in \setx} \errisk{x}{\P\opt} \quad \text{ and } \quad \P\opt \in \argmax_{\P \in \amb_c} \errisk{x\opt}{\P} = \argmax_{\P' \in \amb_c'} \errisk{x\opt}{\P'} ,
\]
where the last equality is due to \eqref{eq:er-risk-arg-max-equality}. Thus, \mmode{(x\opt , \P\opt)} is also a saddle point of \eqref{eq:er-risk-problem-smaller-ambiguity}, and vice versa. The proof is now complete.
\end{proof}

\begin{proof}[Proof of Lemma \ref{lemma:er-risk-smoothness-assumptions}]
For every \mmode{j = 1,2,\ldots,n}, since \mmode{ \ximin{j} \leq \xi_j \leq \ximax{j} }, \mmode{\P_j}-almost surely for all \mmode{\P_j \in \erwamb{j}}, we have
\begin{equation}
\label{eq:er-risk-exp-bounds}
e^{-\theta_j x_j \ximax{j} } \leq \EE{\P_j} [e^{- \theta_j x_j \xi_j}] \leq e^{-\theta_j x_j \ximin{j} } \quad \text{ for all } \P_j \in \erwamb{j} .    
\end{equation}

\noindent {\em Proof of Lemma \ref{lemma:er-risk-smoothness-assumptions}(i): Continuous derivatives.}
Let \mmode{\P, \Q \in \amb_c'}, and \mmode{x, y \in \simplex{n}}, we have
\[
\begin{aligned}
\erxpgrad{x}{\P}{\Q} - \erxpgrad{y}{\P}{\Q} 
&= \summ{j = 1}{n} \frac{1}{\theta_j} \left( \frac{ \EE{\Q_j} [e^{- \theta_j x_j \xi_j}] }{ \EE{\P_j} [e^{- \theta_j x_j \xi'_j}] } - \frac{\EE{\Q_j} [e^{- \theta_j y_j \xi_j}]}{ \EE{\P_j} [e^{- \theta_j y_j \xi'_j}] } \right) \\
&= \summ{j = 1}{n} \frac{1}{\theta_j} \left( \frac{ \EE{\Q_j} [e^{- \theta_j x_j \xi_j}] \cdot \EE{\P_j} [e^{- \theta_j y_j \xi'_j}] \ - \ \EE{\Q_j} [e^{- \theta_j y_j \xi_j}] \cdot \EE{\P_j} [e^{- \theta_j x_j \xi'_j}]  }{ \EE{\P_j} [e^{- \theta_j x_j \xi'_j}] \ \cdot \ \EE{\P_j} [e^{- \theta_j y_j \xi'_j}] } \right) \\
&= \summ{j = 1}{n} \frac{1}{\theta_j} \left( \frac{ \EE{\Q_j \times \P_j} \left[ e^{- \theta_j (x_j \xi_j + y_j \xi'_j)} - e^{-\theta_j (y_j \xi_j + x_j \xi'_j)} \right] }{ \EE{\P_j} [e^{- \theta_j x_j \xi'_j}] \ \cdot \ \EE{\P_j} [e^{- \theta_j y_j \xi'_j}] } \right)
\end{aligned}
\]
Due to convexity of \mmode{(\cdot) \longmapsto e^{(\cdot)}}, for any \mmode{a, b \in \R{}}, we have \mmode{e^b - e^a \leq \abs{b - a} e^{\max \{a, b\}} }, which gives
\[
\begin{aligned}
e^{- \theta_j (x_j \xi_j + y_j \xi'_j)} & - e^{-\theta_j (y_j \xi_j + x_j \xi'_j)} \\
&\leq \theta_j \abs{ x_j \xi_j + y_j \xi'_j - x_j \xi'_j - y_j \xi_j } e^{ -\theta_j \min \{ x_j \xi_j + y_j \xi'_j \ , \  x_j \xi'_j + y_j \xi_j \} } \\
& \leq \theta_j \abs{(x_j - y_j)(\xi_j - \xi'_j) } e^{ -\theta_j (2 \ximin{j}) } \quad \text{since \mmode{x_j, y_j \in [0,1]} and \mmode{\xi_j , \xi'_j \in [\ximin{j} , \ximax{j}]} } \\
& \leq \theta_j \abs{x_j - y_j} (\ximax{j} - \ximin{j}) e^{ -\theta_j (2 \ximin{j}) } .
\end{aligned}
\]
This together with \eqref{eq:er-risk-exp-bounds} finally gives
\[
\begin{aligned}
\erxpgrad{x}{\P}{\Q} - \erxpgrad{y}{\P}{\Q}
&\leq \summ{j = 1}{n} \frac{1}{\theta_j} \frac{ \theta_j \abs{x_j - y_j} (\ximax{j} - \ximin{j}) e^{ -\theta_j (2 \ximin{j})} }{ e^{- 2 \theta_j \ximax{j}} } = \summ{j = 1}{n} \abs{x_j - y_j} (\ximax{j} - \ximin{j}) e^{ 2\theta_j (\ximax{j} - \ximin{j})} \\
& \leq \pnorm{x - y}{2} \sqrt{ \summ{j = 1}{n} (\ximax{j} - \ximin{j})^2 e^{ 4\theta_j (\ximax{j} - \ximin{j})} } ,
\end{aligned}
\]
where the last inequality is due to Cauchy-Schwartz. This establishes assertion (i) of the lemma.

\noindent {\em Proof of Lemma \ref{lemma:er-risk-smoothness-assumptions}(ii): Smoothness.}
For every \mmode{x \in \simplex{n}}, and \mmode{\P , \Q \in \amb_c}, we recall that
\[
\erxpgrad{x}{\P}{\Q} = \summ{j = 1}{n} \frac{1}{\theta_j} \frac{ \EE{\Q_j - \P_j} [e^{-\theta_j x_j \xi_j}] }{ \EE{\P_j} [e^{-\theta_j x_j \xi_j}] } .
\]
Then for any \mmode{\stepsize \in [0,1]} and \mmode{\pg = \P + \stepsize (\Q - \P)}, we have from \eqref{eq:positive-homogeniety} that \mmode{\erxpgrad{x}{\P}{\pg} = \stepsize \erxpgrad{x}{\P}{\Q} }. Moreover, using the relations \mmode{ \EE{\P - \pg} [\cdot] = -\stepsize \EE{\Q - \P} [\cdot] }, we also verify that
\[
\erxpgrad{x}{\pg}{\P} = \summ{j = 1}{n} \frac{1}{\theta_j} \frac{ - \stepsize \EE{\Q_j - \P_j} [e^{-\theta_j x_j \xi_j}] }{ \EE{ \P_j + \stepsize (\Q_j - \P_j) } [e^{-\theta_j x_j \xi_j}] } .
\]
Thus, we obtain
\[
\begin{aligned}
\erxpgrad{x}{\P}{\pg} + \erxpgrad{x}{\pg}{\P} 
& = \stepsize \summ{j = 1}{n} \frac{ 1 }{\theta_j} \left( \frac{ \EE{\Q_j - \P_j} [e^{-\theta_j x_j \xi_j}] }{ \EE{\P_j} [e^{-\theta_j x_j \xi_j}] } - \frac{ \EE{\Q_j - \P_j} [e^{-\theta_j x_j \xi_j}] }{ \EE{ \P_j + \stepsize (\Q_j - \P_j) } [e^{-\theta_j x_j \xi_j}] } \right) \\
& = \stepsize \summ{j = 1}{n} \frac{ \EE{\Q_j - \P_j} [e^{-\theta_j x_j \xi_j}] }{\theta_j} \left( \frac{ \EE{ \P_j + \stepsize (\Q_j - \P_j) } [e^{-\theta_j x_j \xi_j}] \ - \ \EE{\P_j} [e^{-\theta_j x_j \xi_j}] }{ \EE{\P_j} [e^{-\theta_j x_j \xi_j}] \ \cdot \ \EE{ \P_j + \stepsize (\Q_j - \P_j) } [e^{-\theta_j x_j \xi_j}] } \right) \\
& = \stepsize^2 \summ{j = 1}{n} \frac{1}{\theta_j} \frac{ \left(  \EE{\Q_j - \P_j} [e^{-\theta_j x_j \xi_j}]  \right)^2 }{ \EE{\P_j} [e^{-\theta_j x_j \xi_j}] \ \cdot \ \EE{ \P_j + \stepsize (\Q_j - \P_j) } [e^{-\theta_j x_j \xi_j}] }
\end{aligned}
\]
Since \mmode{\P_j , \P_j + \stepsize (\Q_j - \P_j) \in \erwamb{j} }, employing \eqref{eq:er-risk-exp-bounds} gives 
\[
\begin{cases}
\begin{aligned}
\left( \EE{\P_j} [e^{- \theta_j x_j \xi_j}] \right)^{-1} & \leq e^{\theta_j x_j \ximax{j}} \\
\left( \EE{ \P_j + \stepsize (\Q_j - \P_j) } [e^{-\theta_j x_j \xi_j}] \right)^{-1} &\leq e^{\theta_j x_j \ximax{j}} , \quad \text{and} \\
\EE{\Q_j - \P_j} [e^{- \theta_j x_j \xi_j}] & \leq e^{-\theta_j x_j \ximin{j}} - e^{-\theta_j x_j \ximax{j}} .
\end{aligned}
\end{cases}
\]
Putting things together, we finally obtain
\[
\erxpgrad{x}{\P}{\pg} + \erxpgrad{x}{\pg}{\P} \leq \stepsize^2 \summ{j = 1}{n} \frac{1}{\theta_j} e^{2 \theta_j x_j \ximax{j}} \big( e^{-\theta_j x_j \ximin{j} } - e^{-\theta_j x_j \ximax{j}} \big)^2 = \stepsize^2 \summ{j = 1 }{n} \frac{1}{\theta_j} \left( e^{\theta_j (\ximax{j} - \ximin{j})} - 1 \right)^2 .
\]
Thus, assertion (ii) of the lemma follows and the proof is now complete.
\end{proof}

\subsection{Proofs of Section \ref{section:min-variance-portfolio-selection} (variance risk)}

In this part, we cover the technical proofs of the theoretical statements in Section \ref{section:min-variance-portfolio-selection}. The first proof is concerned with the regularity of the min-variance portfolio selection problem. 

\paragraph{{\bf Proofs for regularity conditions (Variance)}}
\begin{proof}[Proof of Lemma \ref{lemma:variance-smoothness-assumptions}]
For any \mmode{x \in \setx}, since \mmode{V_x} is an RR measure, we simplify \eqref{eq:variance-derivative} to get
\begin{subequations}
\begin{align}
\vxpgrad{x}{\P}{\Q} \ 
& = \ x\transp \big( \Sigma_{\Q} - \Sigma_{\P} \big) x \; - \; 2 \big( x\transp\mu_{\P} \big) \big( x\transp (\mu_{\Q} - \mu_{\P}) \big) \label{eq:dvx-formula-1}\\
& = \  x\transp (\Sigma_{\Q} - \Sigma_{\P}) \cdot x \; + \; (x\transp \mu_{\P})^2 \; + \; (x\transp (\mu_{\Q} - \mu_{\P}))^2 \; - \; (x\transp \mu_{\Q})^2 \label{eq:dvx-formula-2} .
\end{align}
\end{subequations}

\noindent {\em Proof of Lemma~\ref{lemma:variance-smoothness-assumptions}(i): Continuous derivatives.}
Firstly, we begin by showing that the inequality
\begin{equation}
\label{eq:xtrnspMx-inequality}
x\transp M x - y\transp M y \leq 2 \pnorm{M}{o} \pnorm{x - y}{2} \quad \text{holds for any \mmode{x, y \in \simplex{n}} and \mmode{M \in \symmetric{n}}} ,
\end{equation}
where, \mmode{\pnorm{M}{o} \define \max_{\pnorm{v}{2} = 1} \ \abs{ \inprod{v}{Mv} } \; }, is the operator norm. This follows since
\[
\begin{aligned}
x\transp M x - y\transp M y \ & = \ (x + y)\transp M (x - y) && \\
& \leq \pnorm{x-y}{2} \pnorm{M(x+y)}{o} && \text{from the Cauchy-Schwartz inequality} \\
& \leq \pnorm{x-y}{2} \pnorm{x+y}{2} \pnorm{M}{o} && \text{from the definition of \mmode{\pnorm{\cdot}{o}}} \\
& \leq 2 \pnorm{M}{o} \pnorm{x-y}{2} && \text{since \mmode{\pnorm{z}{2} \leq \pnorm{z}{1} = 1 } for all \mmode{z \in \simplex{n}}}.
\end{aligned}
\]
Now, for any \mmode{x , y \in \simplex{n}}, we conclude from \eqref{eq:dvx-formula-2} that
\[
\begin{aligned}
\vxpgrad{x}{\P}{\Q} - \vxpgrad{y}{\P}{\Q} \ \quad
&= \quad x\transp (\Sigma_{\Q} - \Sigma_{\P}) x \ - \ y\transp (\Sigma_{\Q} - \Sigma_{\P}) y \\
& \quad \quad \quad + \; (x\transp \mu_{\P})^2 \; + \; (x\transp (\mu_{\Q} - \mu_{\P}))^2 \; - \; (x\transp \mu_{\Q})^2 \\
& \quad \quad \quad \quad \quad   - \; (y\transp \mu_{\P})^2 \; - \; (y\transp (\mu_{\Q} - \mu_{\P}))^2 \; + \; (y\transp \mu_{\Q})^2 .
\end{aligned}
\]
Employing \eqref{eq:xtrnspMx-inequality} for \mmode{M = (\Sigma_{\Q} - \Sigma_{\P})},  \mmode{\mu_{\P}\mu_{\P}\transp}, \mmode{\mu_{\Q}\mu_{\Q}\transp}, and \mmode{(\mu_{\P} - \mu_{\Q})(\mu_{\P} - \mu_{\Q})\transp}; we have the inequalities
\begin{equation}
\begin{cases}
\begin{aligned}
x\transp (\Sigma_{\Q} - \Sigma_{\P}) x \; - \; y\transp (\Sigma_{\Q} - \Sigma_{\P}) y \ &\leq \ 2 \pnorm{\Sigma_{\Q} - \Sigma_{\P}}{o} \pnorm{x- y}{2} \\
(x\transp \mu_{\P})^2 - (y\transp \mu_{\P})^2 \ & \leq \ 2 \pnorm{\mu_{\P}}{2}^2 \pnorm{x- y}{2} \\
(y\transp \mu_{\Q})^2 - (y\transp \mu_{\Q})^2 \ & \leq \ 2 \pnorm{\mu_{\Q}}{2}^2 \pnorm{x- y}{2} \\
(x\transp (\mu_{\Q} - \mu_{\P}))^2 - (y\transp (\mu_{\Q} - \mu_{\P}))^2 \ & \leq \ 2 \pnorm{\mu_{\Q} - \mu_{\P}}{2}^2 \pnorm{x- y}{2},
\end{aligned}
\end{cases}
\end{equation}
where we have used the fact that \mmode{\pnorm{vv\transp}{o} = \pnorm{v}{2}^2} for any \mmode{v}. These inequalities give us the upper bound
\[
\begin{aligned}
\vxpgrad{x}{\P}{\Q} - \vxpgrad{y}{\P}{\Q} \ & \leq \ 2 \Big( \pnorm{\Sigma_{\Q} - \Sigma_{\P}}{o} + \pnorm{\mu_{\P}}{2}^2 + \pnorm{\mu_{\Q}}{2}^2 + \pnorm{\mu_{\Q} - \mu_{\P}}{2}^2 \Big) \pnorm{x - y}{2}
\end{aligned}
\]
Since \mmode{ \pnorm{\mu_{\P}}{2} \symbolspace{\leq}{\;} \pnorm{\mu_{\P} - \muref}{2} + \pnorm{\muref}{2} }, and \mmode{\P, \Pref \in \amb}, we conclude from \eqref{eq:bounded-mean-variance} that \mmode{ \pnorm{\mu_{\P}}{2} \symbolspace{\leq}{\;} B_{\mu} + \pnorm{\muref}{2}}, (and similarly for \mmode{\mu_{\Q}}). This together with the condition \mmode{\pnorm{\Sigma_{\Q} - \Sigma_{\P}}{o} \leq B_{\Sigma}} from \eqref{eq:bounded-mean-variance}, finally gives
\[
\vxpgrad{x}{\P}{\Q} - \vxpgrad{y}{\P}{\Q} \  \leq \ 2 \Big( B_{\Sigma} + 2 \big( B_{\mu} + \pnorm{\muref}{2} \big)^2 + B_{\mu}^2 \Big) \pnorm{x - y}{2}
\]
The constant \mmode{\smooth_1} in the assertion (i) of the Lemma is immediately picked as \mmode{C_1 = 2 \Big( B_{\Sigma} + 2 \big( B_{\mu} + \pnorm{\muref}{2} \big)^2 + B_{\mu}^2 \Big)}.

\noindent {\em Proof of Lemma~\ref{lemma:variance-smoothness-assumptions}(ii): Smoothness.}
Consider any \mmode{x \in \setx}, \mmode{\P , \Q \in \wamb{m}}, and let \mmode{\P_{\stepsize} \define \P + \stepsize (\Q - \P)} for \mmode{\stepsize \in [0,1]}. Using \mmode{\Sigma_{\P_{\stepsize}} = \Sigma_{\P} + \stepsize \big( \Sigma_{\Q} - \Sigma_{\P} \big)} and \mmode{ \mu_{\P_{\stepsize}} = \mu_{\P} + \stepsize \big( \mu_{\Q} - \mu_{\P} \big) }, we conclude from \eqref{eq:dvx-formula-1}, that
\[
\begin{aligned}
\vxpgrad{x}{\P}{\pg} 
& = \, x\transp ( \Sigma_{\P_{\stepsize}} - \Sigma_{\P} ) x \; + \; 2 \big( x\transp \mu_{\P} \big) \big( x\transp (\mu_{\P_{\stepsize}} - \mu_{\P}) \big) \\
& = \; \stepsize \Big( x\transp (\Sigma_{\Q} - \Sigma_{\P}) x \; - \; 2 \big( x\transp \mu_{\P} \big) \big( x\transp ( \mu_{\Q} - \mu_{\P} ) \big)  \Big) .
\end{aligned}
\]
Similarly, we also have  
\[
\begin{aligned}
\vxpgrad{x}{\pg}{\P}
& = \, x\transp ( \Sigma_{\P} - \Sigma_{\P_{\stepsize}} ) x \; + \; 2 \big( x\transp \mu_{\P_{\stepsize}} \big) \big( x\transp (\mu_{\P} - \mu_{\P_{\stepsize}}) \big) \\
&= 2 \stepsize^2 \big( x\transp (\mu_{\Q} - \mu_{\P}) \big)^2 + \; \stepsize \Big( 2 \big( x\transp (\mu_{\Q} - \mu_{\P}) \big) \big( x\transp \mu_{\P} \big) \; - \; x\transp ( \Sigma_{\Q} - \Sigma_{\P} )x \Big) .
\end{aligned}
\]
Combining the two equalities, we get
\[
\begin{aligned}
\vxpgrad{x}{\P}{\pg} \, + \, \vxpgrad{x}{\pg}{\P} = 2 \stepsize^2 \big( x\transp (\mu_{\Q} - \mu_{\P}) \big)^2 \leq 2 \stepsize^2 \pnorm{x}{2}^2 \pnorm{\mu_{\Q} - \mu_{\P}}{2}^2 \leq \stepsize^2 \big( 2 B_{\mu}^2 \big).
\end{aligned}
\]
The last inequality follows from \eqref{eq:bounded-mean-variance} and the fact that \mmode{\pnorm{x}{2} \leq \pnorm{x}{1} = 1}, for every \mmode{x \in \simplex{n}}. Thus, the risk measure \mmode{V_x} is \mmode{ \big( 2 B_{\mu}^2 \big)}-smooth, uniformly over \mmode{x \in \setx}. This proves assertion (ii) of the Lemma and the proof is now complete.
\end{proof}

\paragraph{{\bf Proofs for the case of unconstrained support, \mmode{(\Xi = \R{n})}.}}
\begin{proof}[Proof of Lemma \ref{lemma:min-variance-perturbation-solution}]
Substituting \mmode{q = s \bar{q}} for \mmode{s \geq 0} and \mmode{ \norm{\bar{q}} = 1 } in \eqref{eq:min-var-perturbation-problem}, it is equivalently reformulated as
\begin{equation}
\label{eq:ldro-perturbation-dummy-1}
\begin{cases}
\begin{aligned}
& \max_{s \geq 0, \, \bar{q} \in \R{n}} && \big( s (x\transp \bar{q}) \, + \, x\transp (\xi - v) \big)^2 - \ \eta s^m \\
& \sbjto && \norm{\bar{q}} = 1 .
\end{aligned}
\end{cases}
\end{equation}
Observe that \mmode{ \big( s(x\transp \bar{q}) \, + \, x\transp (\xi - v) \big)^2 \leq \, \big( s \left| x\transp \bar{q} \right| \, + \abs{x\transp (\xi - v)} \big)^2 }, where, equality holds if and only if \mmode{ \sgn (x\transp \bar{q}) = \sgn (x\transp (\xi - v))}. Moreover, applying the Holder's inequality \mmode{ \abs{x\transp \bar{q}} \leq \norm{\bar{q}} \pnorm{x}{*} = \pnorm{x}{*} } yields
\begin{equation}
\label{eq:ldro-perturbation-dummy-2}
\big( s (x\transp \bar{q}) \, + \, x\transp (\xi - v) \big)^2 \leq \, \big( s \pnorm{x}{*} \, + \abs{x\transp (\xi - v)} \big)^2 . 
\end{equation}
The upper bound~\eqref{eq:ldro-perturbation-dummy-2} is achieved if and only if \mmode{\bar{q} = \sgn (x\transp (\xi - v)) \bar{q}_x }, for any \mmode{\bar{q}_x \in \argmax_{\{\norm{\bar{q}} \leq 1\}} \; x\transp \bar{q}}. Thus, every such \mmode{\bar{q}} is an optimal solution to \eqref{eq:ldro-perturbation-dummy-1}, irrespective of \mmode{s}. Simplifying the optimization over \mmode{\bar{q}} in \eqref{eq:ldro-perturbation-dummy-1}, the problem reduces to
\begin{equation}
\label{eq:ldro-perturbation-dummy-3}
 \max\limits_{s \geq 0} \quad \big( s \pnorm{x}{*} \, + \abs{x\transp (\xi - v)} \big)^2 - \ \eta s^m .
\end{equation}
Now, an optimal solution to \eqref{eq:min-var-perturbation-problem} exists if and only if \eqref{eq:ldro-perturbation-dummy-3} admits an optimal solution \mmode{s\opt}; in which case, we have \mmode{q(\eta, \xi) = s\opt \sgn (x\transp (\xi - v)) \bar{q}_x }.

{\em Optimal solution to \eqref{eq:ldro-perturbation-dummy-3} under different settings.} If \mmode{m<2}, it is straightforward to see that the optimal value of \eqref{eq:ldro-perturbation-dummy-3} is unbounded irrespective of \mmode{\eta}. Consequently, no optimal solution exists for the maximization problem \eqref{eq:min-var-perturbation-problem} in this setting. On the contrary, if \mmode{m > 2}, it is also easily seen that the objective function of \eqref{eq:ldro-perturbation-dummy-3} is coercive if and only if \mmode{\eta > 0}. Therefore, the maximal value of \eqref{eq:ldro-perturbation-dummy-3} (and Consequently \eqref{eq:min-var-perturbation-problem}), is bounded and achieved.

For \mmode{m = 2}, the objective function of \eqref{eq:ldro-perturbation-dummy-3} can be simplified to
\[
-s^2 \big( \eta - \pnorm{x}{*}^2 \big) \, + 2s \pnorm{x}{*} \abs{ x\transp (\xi - v) } \, + \abs{ x\transp (\xi - v) }^2 .
\]
It is clear that the optimal value of \eqref{eq:ldro-perturbation-dummy-3} is unbounded if \mmode{\eta < \pnorm{x}{*}^2}. Whereas, if \mmode{\eta > \pnorm{x}{*}^2}, it is bounded, in which case, equating the derivative w.r.t.\,\mmode{s} equal to \mmode{0}, gives that the optimal value of \eqref{eq:ldro-perturbation-dummy-3} (and consequently, also \mmode{J(\eta, \xi)}) is equal to \mmode{\frac{\eta}{\eta - \pnorm{x}{*}^2} \abs{x\transp (\xi - v)}^2}, which is achieved at \mmode{ s\opt = \frac{\pnorm{x}{*} \abs{x\transp (\xi - v)} }{\eta - \pnorm{x}{*}^2}}. Finally, if \mmode{\eta = \pnorm{x}{*}^2}, the optimal value of \eqref{eq:ldro-perturbation-dummy-3} is unbounded if \mmode{x\transp (\xi_i - v) \neq 0}; otherwise, it is bounded and equal to \mmode{0} which is achieved for any \mmode{s \geq 0}.
\end{proof}

\begin{proof}[Proof of Lemma \ref{lemma:min-var-ldro-solution-for-m2}]
On the one hand, if \mmode{\abs{ x\transp (\xi_i - v) } = 0} for all \mmode{i = 1,2,\ldots,N}, we now see that the dual problem \eqref{eq:min-variance-dual} reduces to
\[
\inf_{\eta \, \geq \, \pnorm{x}{*}^2} \quad \eta \rho^2 ,
\]
which admits the optimal solution \mmode{\eta\opt = \pnorm{x}{*}^2} with an optimal value \mmode{\pnorm{x}{*}^2 \rho^2}. On the other hand, if \mmode{\abs{ x\transp (\xi_i - v) } > 0} for at least some \mmode{i \in \{ 1,2,\ldots,N\}}, the dual problem \eqref{eq:min-variance-dual} reduces to
\begin{equation}
\label{eq:min-variance-dual-1}
\inf_{\eta \, > \, \pnorm{x}{*}^2} \quad \eta \rho^2 + \frac{\eta}{\eta - \pnorm{x}{*}^2} \left( \frac{1}{N} \summ{i = 1}{N} \abs{x\transp (\xi_i - v)}^2 \right) .
\end{equation}
It is easily verified that the first-order optimality conditions for \eqref{eq:min-variance-dual-1} are satisfied at
\[
\eta_x \define \pnorm{x}{*}^2 + \frac{\pnorm{x}{*}}{\rho} \sqrt{\frac{1}{N} \summ{i = 1}{N} \abs{x\transp (\xi_i - v)}^2 } .
\]
Thus, \mmode{\eta_x} is the unique optimal solution to \eqref{eq:min-variance-dual-1}, and consequently, to the dual problem \eqref{eq:min-variance-dual}. Moreover, due to strong duality of \eqref{eq:min-variance-dual}, we also know that \mmode{\Q_x (\xi) = \frac{1}{N} \summ{i = 1}{N} \delta \big( \xi - (\xi_i + q'_i) \big) } is an optimal solution to the linear worst case distribution problem \eqref{eq:min-variance-ldro-general} for any collection \mmode{q'_i \in q(\etax{}, \xi_i)}, \mmode{i = 1,2,\ldots,N}.
\end{proof}

\begin{proof}[Proof of Proposition \ref{proposition:min-variance-blanchet}]
We first see that if \mmode{x\opt \in \argmin_{x \in \setx} \; \big( \ambradius \dualnorm{x} + \sqrt{\inprod{x}{ \vref x  }} \big) }, then the first-order necessary optimality conditions imply that there exists a sub-gradient \mmode{g} of the function \mmode{\big( \ambradius \dualnorm{x} + \sqrt{\inprod{x}{ \vref x  }} \big)} at \mmode{x\opt} for which the inclusion \mmode{x\opt \in \argmin_{y \in \setx} \ \inprod{g}{y}} holds. A quick look reveals that \mmode{g} must be of the form \mmode{\ambradius \bar{q}_{x\opt} + \big( \nicefrac{1}{\sqrt{\inprod{x\opt}{ \vref x\opt}} } \big) \vref x\opt }, where \mmode{\bar{q}_{x\opt} \in \argmax_{\{\norm{\bar{q}} \leq 1 \}} \ \inprod{x\opt}{\bar{q}} }. Therefore, there exists some \mmode{\bar{q}_{\xopt} \in \argmax_{\{ \norm{\bar{q}} \leq 1 \}} \ \inprod{x\opt}{\bar{q}} } such that \mmode{x\opt} satisfies the inclusion
\begin{equation}
\label{eq:xopt-optimality-condition-necessary}
x\opt \in \argmin_{y \in \setx} \ \left(  \ambradius \inprod{y}{\bar{q}_{x\opt}} \ + \ \frac{\inprod{x\opt}{ \vref y }}{\sqrt{\inprod{x\opt}{ \vref x\opt  }} }  \right) .
\end{equation}
Selecting such a \mmode{\bar{q}_{\xopt}} to define \mmode{\P\opt} in \eqref{eq:popt-explicit-solution}, we also verify that 
\[
\mu_{\P\opt} \symbolspace{=}{\;} \frac{1}{N} \summ{i = 1}{N} (\xi_i + q_i\opt) \symbolspace{=}{\;} \muref + \frac{\ambradius \frac{1}{N} \summ{i = 1}{N} \inprod{x\opt}{\xi_i - \muref} }{\sqrt{ \inprod{x\opt}{ (\sigmaref - \muref \muref\transp) x\opt } }} \bar{q}_{x\opt} \symbolspace{=}{\;} \muref. 
\]
Now, we establish that the pair \mmode{(x\opt, \P\opt)} is a saddle point by proving that both \mmode{x\opt} and \mmode{\P\opt} are optimal solutions to their respective problems while the other is held fixed.

{\em Optimality of \mmode{x\opt} for the minimization condition.}
We begin by showing that the inclusion \mmode{x\opt \in \argmin_{y \in \setx} \vrisk{y}{\P\opt}} holds, by showing that the corresponding first-order optimality condition
\begin{equation}
\label{eq:xopt-optimality-condition-sufficient}
x\opt \in \argmin\limits_{y \in \setx} \inprod{\nabla_1 \vrisk{x\opt}{\P\opt} }{y} = \argmin\limits_{y \in \setx} \inprod{\big( \Sigma_{\P\opt} - \muref \muref\transp \big) x\opt }{y},
\end{equation}
is satisfied. This condition is also sufficient for optimality since \mmode{\vrisk{y}{\P\opt}} is convex in \mmode{y}. Simplifying the cost function in \eqref{eq:xopt-optimality-condition-sufficient} yields 
\begin{align*}
& \inprod{\big( \Sigma_{\P\opt} - \muref \muref\transp \big) x\opt }{y} \\
& \quad = \frac{1}{N} \summ{i = 1}{N} \inprod{x\opt}{\xi_i + q_i\opt} \inprod{y}{\xi_i + q_i\opt}  -  \inprod{x\opt}{(\muref \muref\transp) y} \\
& \quad = \frac{1}{N} \summ{i = 1}{N} \inprod{x\opt}{( \xi_i \xi_i\transp - \muref \muref\transp) y} + \inprod{x\opt}{\xi_i} \inprod{y}{q_i\opt} + \inprod{x\opt}{q_i\opt}\inprod{y}{\xi_i} + \inprod{x\opt}{q_i\opt }\inprod{y}{q_i\opt}.
\end{align*}
Letting \mmode{\vref \define \sigmaref - \muref \muref\transp }, we see that \mmode{ \vref = \frac{1}{N} \sum_{i = 1}^{N} (\xi_i - \muref)\xi_i\transp }, and \mmode{\frac{1}{N} \sum_{i = 1}^{N} \abs{ \inprod{x\opt}{\xi_i - \muref} }^2 = \inprod{x\opt}{\vref x\opt}}, from which we simplify
\[
\begin{cases}
\begin{aligned}
\frac{1}{N} \summ{i = 1}{N} \inprod{x\opt}{\xi_i} \inprod{y}{q_i\opt} \symbolspace{&=}{ \ } \ambradius \inprod{y}{\bar{q}_{x\opt} } \frac{ \frac{1}{N} \summ{i = 1}{N} \inprod{x\opt}{ \xi_i - \muref} \inprod{ \xi_i}{x\opt}  }{\sqrt{ \inprod{x\opt}{ \vref x\opt }}} &&= \ \ambradius \inprod{y}{\bar{q}_{x\opt} } \sqrt{\inprod{x\opt}{ \vref x\opt }} , \\
\frac{1}{N} \summ{i = 1}{N} \inprod{x\opt}{q_i\opt} \inprod{y}{\xi_i} \symbolspace{&=}{ \ }\ambradius \dualnorm{x\opt} \frac{ \frac{1}{N} \summ{i = 1}{N} \inprod{x\opt}{ \xi_i - \muref} \inprod{ \xi_i}{y} }{ \sqrt{\inprod{x\opt}{ \vref x\opt }} } &&= \ \ambradius \dualnorm{x\opt} \frac{\inprod{x\opt}{ \vref y }}{\sqrt{\inprod{x\opt}{ \vref x\opt }} } , \\
\frac{1}{N} \summ{i = 1}{N} \inprod{x\opt}{q_i\opt} \inprod{y}{q_i\opt} \symbolspace{&=}{ \ } \ambradius^2 \dualnorm{x\opt} \inprod{y}{\bar{q}_{x\opt}} \frac{ \frac{1}{N} \summ{i = 1}{N} \abs{ \inprod{x\opt}{\xi_i - \muref} }^2 }{ \inprod{x\opt}{ \vref x\opt} }  &&= \ \ambradius^2 \dualnorm{x\opt} \inprod{y}{\bar{q}_{x\opt}} .
\end{aligned}
\end{cases}
\]
Employing the above relations, we see that the objective function in \eqref{eq:xopt-optimality-condition-necessary} simplifies to
\begin{align*}
\inprod{\big( \Sigma_{\P\opt} - \muref \muref\transp \big) x\opt }{y} 
 &= \inprod{x\opt}{ \vref y } + \ambradius \dualnorm{x\opt} \frac{\inprod{x\opt}{ \vref y }}{\sqrt{\inprod{x\opt}{ \vref x\opt  }} } + \ambradius \inprod{y}{\bar{q}_{x\opt} } \sqrt{\inprod{x\opt}{ \vref x\opt  }} + \ambradius^2 \dualnorm{x\opt} \inprod{y}{\bar{q}_{x\opt}} \\
& =  \Big( \sqrt{\inprod{x\opt}{ \vref x\opt  }} + \ambradius \dualnorm{x\opt}  \Big) \left( \frac{\inprod{x\opt}{ \vref y }}{\sqrt{\inprod{x\opt}{ \vref x\opt  }} } + \ambradius \inprod{y}{\bar{q}_{x\opt}} \right).
\end{align*}
In view of the inclusion \eqref{eq:xopt-optimality-condition-necessary}, the sufficient optimality condition \eqref{eq:xopt-optimality-condition-sufficient} follows immediately. Consequently, the inclusion \mmode{x\opt \in \argmin_{x \in \setx} \ \vrisk{x}{\P\opt}} also holds.

{\em Optimality of \mmode{\P\opt} for the maximization condition.}
Similar to the proof of the minimization condition, we establish the maximization condition of the saddle point by showing that the first-order optimality conditions are satisfied. We first recall from \eqref{eq:min-var-unconstrained-support-FW-oracle} (with \mmode{v = \mup{\popt} = \muref}) that
\[
\etax{\xopt} \symbolspace{=}{\;} \dualnorm{\xopt}^2 + \frac{\dualnorm{\xopt}}{\ambradius} \sqrt{\inprod{\xopt}{( \sigmaref - \muref \muref\transp )\xopt}} .
\]

On the one hand, if \mmode{\inprod{\xopt}{\big( \sigmaref - \muref \muref\transp \big)\xopt} = 0}, we have \mmode{\etax{\xopt} = \dualnorm{\xopt}^2}, in which case, we conclude from assertion (ii-c) of Lemma \ref{lemma:min-variance-perturbation-solution} that \mmode{ 0 \in q(\etax{\xopt} , \xi_i , \muref)} for all \mmode{i = 1,2,\ldots,N}. Moreover, \mmode{\inprod{\xopt}{\big( \sigmaref - \muref \muref\transp \big)\xopt} = 0} also implies that \mmode{\inprod{\xopt}{\xi_i - \muref} = 0}, and hence \mmode{q_i\opt = 0} for all \mmode{i = 1,2,\ldots,N}. Thus, we have \mmode{ q_i\opt \in q(\etax{\xopt} , \xi_i , \mup{\P\opt}) } for all \mmode{i = 1,2,\ldots,N}.

On the other hand, if \mmode{\inprod{\xopt}{\big( \sigmaref - \muref \muref\transp \big)\xopt} \neq 0}, we have \mmode{\etax{\xopt} > \dualnorm{\xopt}^2}, in which case, we see 
\[
q\opt_i = \frac{\ambradius \inprod{x\opt}{\xi_i - \muref} }{\sqrt{ \inprod{x\opt}{ (\sigmaref - \muref \muref\transp) x\opt } }} \bar{q}_{\xopt} = \frac{\dualnorm{\xopt} \inprod{\xopt}{\xi_i - \muref}}{\etax{\xopt} - \dualnorm{\xopt}^2} \bar{q}_{\xopt} \symbolspace{\in}{\;} q(\etax{\xopt}, \xi_i , \muref) \quad \text{for all } i = 1,2,\ldots,N, 
\]
where the last inclusion follows from assertion (ii-b) of Lemma \ref{lemma:min-variance-perturbation-solution} since \mmode{\bar{q}_{\xopt} \in \argmax_{\norm{\bar{q}} \leq 1} \ \inprod{\xopt}{\bar{q}} }. Finally, noting that \mmode{\muref = \mu_{\P\opt}}, we have \mmode{q_i\opt \in q(\eta_{x\opt} , \xi_i , \mu_{\P\opt}) } for all $i\le N$. Since \mmode{\P\opt = \frac{1}{N} \sum_{i = 1}^{N} \delta (\xi - \xi_i - q_i\opt) }, we conclude from Lemma \ref{lemma:min-var-ldro-solution-for-m2} that \mmode{\P\opt \in \argmax_{\Q \in \amb} \ \vxpgrad{x\opt}{\P\opt}{\Q}}. Thus, \mmode{\P\opt} satisfies the first-order optimality conditions which are also sufficient. Hence, in view of Remark \ref{remark:optimality conditions}, we conclude that the inclusion \mmode{\P\opt \in \argmax_{\P \in \amb} \ \vrisk{\xopt}{\P}} also holds. Therefore, \mmode{(\xopt , \P\opt)} is indeed a saddle point of \eqref{eq:min-variance-portfolio-selection}. The proof is now complete.
\end{proof}

\paragraph{{\bf Proofs for the case of ellipsoidal support~\mmode{(\Xi = \ellipsoid)}}}

\begin{proof}[Proof of Lemma \ref{lemma:FW-oracle-min-variance-ellipsoidal-support}]
Letting \mmode{q' \define q + \xi}, the maximization problem \eqref{eq:min-var-perturbation-problem} can be equivalently written in terms of \mmode{q'} as
\begin{equation}
\label{eq:ellipsoidal-perturbation-problem-1}
\sup_{\{ q' : \inprod{q'}{M q'} \leq 1 \}} \ \inprod{ q'}{ \big( xx\transp - \eta \identity{n} \big) q' } + 2 \inprod{q'}{ \eta \xi -(xx\transp ) v} + \big( (x\transp v)^2 - \eta \pnorm{\xi}{2}^2 \big) .
\end{equation}
Even though \mmode{M \succeq 0}, which makes the feasible set of \eqref{eq:ellipsoidal-perturbation-problem-1} convex; it is to be observed that the objective function is concave if and only if \mmode{\eta \geq \pnorm{x}{2}^2}. Therefore, \eqref{eq:ellipsoidal-perturbation-problem-1} is not a convex problem in general. However, we observe that \eqref{eq:ellipsoidal-perturbation-problem-1} is a quadratically constrained quadratic program, that is strictly feasible. For such problems, the \emph{S-procedure} guarantees an equivalent reformulation as a tractable SDP
\begin{equation}
\label{eq:ellipsoidal-support-perturbation-SDP-problem}
\begin{cases}
\begin{aligned}
& \min_{\lambda \in [0 , +\infty) , \ \theta \in \R{}} && - \theta \\
& \sbjto && 
\begin{bmatrix}
\eta \identity{n} - x x\transp + \lambda M  & (xx\transp)v - \eta \xi  \\
(xx\transp)v\transp - \eta \xi\transp &  \eta \pnorm{\xi}{2}^2 - (x\transp v)^2 - \lambda - \theta
\end{bmatrix}
\succeq 0 ,
\end{aligned}
\end{cases}
\end{equation}
with the optimal values of \eqref{eq:ellipsoidal-support-perturbation-SDP-problem} and \eqref{eq:ellipsoidal-perturbation-problem-1} being equal. Moreover, if \mmode{(\theta\opt , \lambda\opt)} is a solution to the SDP \eqref{eq:ellipsoidal-support-perturbation-SDP-problem}, we also conclude from the S-procedure that \mmode{q' = \big( \eta \identity{n} - xx\transp + \lambda\opt M \big)^{-1} \big( \eta \xi - x x\transp v \big) }, is an optimal solution to the maximization problem \eqref{eq:ellipsoidal-perturbation-problem-1}. Consequently, for every \mmode{\eta \geq 0}, we conclude
\[
q(\eta , \xi) = - \xi + \big( \eta \identity{n} - xx\transp + \lambda\opt M \big)^{-1} \big( \eta \xi - x x\transp v \big),
\]
is a solution to the maximization problem \eqref{eq:ellipsoidal-perturbation-problem-1}. Substituting for each \mmode{i = 1,2,\ldots, N}, the maximization problem over \mmode{q_i} in \eqref{eq:min-variance-dual} with its equivalent SDP \eqref{eq:ellipsoidal-support-perturbation-SDP-problem}, we immediately arrive at \eqref{eq:FW-problem-ellipsoid-support-SDP}. Now, suppose \mmode{(\eta\opt , \lambda\opt, \theta\opt)} is a solution to the SDP \eqref{eq:FW-problem-ellipsoid-support-SDP}. Then, the pair~\mmode{(\Q_x , \etax{x})} as given by \eqref{eq:min-var-ellipsoidal-support-FW-oracle} is a solution to the FW problem \eqref{eq:min-variance-ldro-general} and its dual \eqref{eq:min-variance-dual}, respectively. This concludes the proof.
\end{proof}



\bibliographystyle{siam}
\bibliography{NDRO}

\end{document}